\documentclass[twoside]{article}

\usepackage[utf8]{inputenc}
\usepackage{amsfonts}
\usepackage{amsmath,amssymb}
\usepackage{nccmath}
\usepackage{mathrsfs}
\usepackage{graphicx}
\usepackage{subfigure}
\usepackage{tabularx}
\usepackage[all]{xy}
\usepackage{color}
\usepackage{multirow}
\usepackage{booktabs}
\usepackage{enumitem}

\setlength\textwidth{17cm}
\setlength\textheight{50\baselineskip}
\setlength\oddsidemargin{0pt}
\setlength\evensidemargin{0pt}

\newtheorem{proposition}{Proposition}[section]
\newtheorem{lemma}[proposition]{Lemma}

\newtheorem{definition}[proposition]{Definition}
\newtheorem{remark}[proposition]{Remark}

\newtheorem{example}[proposition]{Example}

\newenvironment{proof}[1][Proof\ ]{\medskip\noindent{\bf #1}\ }{%
\hfill $\Box$\par\quad\par}
\newenvironment{mythm}[1][]{\medskip\par\noindent{\bf #1}\ \,\,\em}{\par}

%
%


\usepackage[round]{natbib}


\def\mcl#1{\mathcal{#1}}
\def\bracket#1{\left\langle #1\right\rangle}
\def\hil{\mcl{H}}
\def\nn{\nonumber}
\def\opn{\operatorname}
\def\mr{\mathrm}

\def\alg{\mcl{A}}
\def\modu{\mcl{M}}

\def\bbracket#1{\big\langle #1\big\rangle}
\def\Bbracket#1{\bigg\langle #1\bigg\rangle}

\def\bc{\mathbf{c}}
\def\bG{\mathbf{G}}
\def\by{\mathbf{y}}
\def\bF{\mathbf{F}}

\def\red#1{\textcolor{black}{#1}}

\DeclareSymbolFont{EulerExtension}{U}{euex}{m}{n}
\DeclareMathSymbol{\euintop}{\mathop} {EulerExtension}{"52}
\DeclareMathSymbol{\euointop}{\mathop} {EulerExtension}{"48}

\definecolor{MyDarkBlue}{rgb}{0.05,0.,0.8} 
\newcommand{\addHK}[1]{{\color{black}  #1}}
\usepackage{todonotes}

\begin{document}

%

%

\title{Learning in RKHM: a $C^*$-Algebraic Twist for Kernel Machines}

\author{Yuka Hashimoto$^{1,2}$\quad Masahiro Ikeda$^{2,3}$\quad Hachem Kadri$^4$\medskip\\
{\normalsize 1. NTT Network Service Systems Laboratories, NTT Corporation, Tokyo, Japan}\\
{\normalsize 2. Center for Advanced Intelligence Project, RIKEN, Tokyo, Japan}\\
{\normalsize 3. Keio University, Yokohama, Japan}\\
{\normalsize 4. Aix-Marseille University, CNRS, LIS, Marseille, France}}

\date{}

\maketitle

\begin{abstract}
Supervised learning in reproducing kernel Hilbert space~(RKHS) and vector-valued RKHS~(vvRKHS) has been investigated for more than 30 years. In this paper, we provide a new twist to this rich literature by generalizing supervised learning in RKHS and vvRKHS to reproducing kernel Hilbert $C^*$-module~(RKHM), and show how to construct effective positive-definite kernels by considering the perspective of $C^*$-algebra. Unlike the cases of RKHS and vvRKHS, we can use $C^*$-algebras to enlarge representation spaces. This enables us to construct RKHMs whose representation power goes beyond RKHSs, vvRKHSs, and existing methods such as convolutional neural networks. Our framework is suitable, for example,  for effectively analyzing image data by allowing the interaction of Fourier components. 
\end{abstract}

\section{INTRODUCTION}
Supervised learning in reproducing kernel Hilbert space~(RKHS) has been actively investigated \addHK{since the early 1990s ~\citep{murphy12, christmann2008support, shawe2004kernel, scholkopf2002learning, boser1992training}}.
\addHK{The notion of reproducing kernels as dot products in Hilbert spaces was first brought to the field of machine learning by~\citet{aizerman1964theoretical}, while the theoretical foundation of reproducing kernels and their Hilbert spaces dates back to at least~\citet{aronszajn1950theory}.}
By virtue of the representer theorem~\citep{scholkopf01_representer}, we can compute the solution of an {infinite-dimensional} minimization problem \addHK{in RKHS} with given finite samples.
In addition to the standard RKHSs, applying vector-valued RKHSs (vvRKHSs) to supervised learning has also been proposed and used in analyzing vector-valued data~\citep{micchelli05,alvarez2012kernels,kadri16,quang16, brouard2016input,laforgue20,huusari21}. {Generalization bounds} of the supervised problems in RKHS and vvRKHS are also derived~\citep{mohri18,caponnetto2007optimal, audiffren2013stability, huusari21}. 

Reproducing kernel Hilbert $C^*$-module (RKHM) is a generalization of RKHS and vvRKHS by means of $C^*$-algebra.
$C^*$-algebra is a generalization of the space of complex values.
It \red{has a product and an involution structures}.
Important examples are the $C^*$-algebra of bounded linear operators on a Hilbert space and the $C^*$-algebra of continuous functions on a compact space.
RKHMs have been \red{originally} studied for pure operator algebraic and mathematical physics problems~\citep{manuilov00,heo08,moslehian22}.
Recently, applying RKHMs to data analysis has been proposed by~\citet{hashimoto21}.
They generalized the representer theorem in RKHS to RKHM, which allows us to analyze structured data such as functional data with $C^*$-algebras. 

\addHK{In this paper, we investigate supervised learning in RKHM. This provides a new twist to the state-of-the-art kernel-based learning algorithms and the development of a
novel kind of reproducing kernels.}
{An advantage of RKHM over RKHS and vvRKHS is that we can enlarge the $C^*$-algebra characterizing the RKHM \red{to construct a representation space}.}
This allows us to represent more functions than the case of RKHS and make use of the product structure in the $C^*$-algebra.
Our main contributions are: 
\begin{itemize}
\item We define positive definite kernels {from the perspective of $C^*$-algebra}, which are suitable \addHK{for learning in RKHM and adapted} to analyze image data. 
\item We derive a generalization bound of the supervised \addHK{learning} problem in RKHM, which generalizes existing results of RKHS and vvRKHS.
We also show that the computational complexity of our method can be reduced if parameters in the $C^*$-algebra-valued positive definite kernels have specific structures.
\item We show that our framework generalizes existing methods \red{based on convolution operations}.
\end{itemize}

Important applications of the supervised learning in RKHM are tasks whose inputs and outputs are images.
\red{If the proposed kernels have specific parameters}, then the product structure is the convolution,
which corresponds to the pointwise product of Fourier components.
By extending the $C^*$-algebra to a larger one, we can enjoy more general operations than the convolutions.
This enables us to analyze image data effectively \red{by making interactions between Fourier components}.
Regarding the generalization bound, we derive the same type of bound as those \addHK{obtained for} RKHS and vvRKHS \addHK{via Rademacher complexity theory}. 
{This is \addHK{to our knowledge,} the first generalization bound for RKHM \addHK{hypothesis classes}.}
Concerning the connection with existing methods, we show that using our framework, we can reconstruct existing methods such as the convolutional neural network\addHK{~\citep{lecun98}} and the convolutional kernel~\citep{marial14} and further generalize them.
This fact implies that the representation power of our framework goes beyond the existing methods.

The remainder of this paper is organized as follows:
In Section~\ref{sec:background}, we review mathematical notions related to this paper.
We propose $C^*$-algebra-valued positive definite kernels in Section~\ref{sec:kernels} and investigate supervised learning in RKHM in Section~\ref{sec:supervised}.
Then, we show connections 
\addHK{with } existing \addHK{convolution-based} methods in Section~\ref{sec:existing}.
We confirm the advantage of our method numerically in Section~\ref{sec:numerical_results} and conclude the paper in Section~\ref{sec:conclusion}.
All technical proofs are in Section~\ref{ap:proofs}.

\section{PRELIMINARIES}\label{sec:background}
\subsection{$C^*$-Algebra and Hilbert $C^*$-Module}
$C^*$-algebra is a Banach space equipped with a product and \addHK{an} involution that satisfies the $C^*$ identity.
See Section~\ref{ap:c_algebra} for more details.
An example of $C^*$-algebra is group $C^*$-algebra~{\citep{kirillov76}}.
\red{Let $p\in\mathbb{N}$, and let $\mathbb{Z}/p\mathbb{Z}$ be the set of integers modulo $p$.}
\begin{definition}[Group $C^*$-algebra on a finite cyclic group]\label{def:group_c_algebra}
\red{Let $\omega=\mr{e}^{2\pi\sqrt{-1}/p}$.}
The group $C^*$-algebra on $\mathbb{Z}/p\mathbb{Z}$, which is denoted as $C^*(\mathbb{Z}/p\mathbb{Z})$, is the set of maps from $\mathbb{Z}/p\mathbb{Z}$ to $\mathbb{C}$ equipped with \red{the following} product, involution, and norm:
\begin{itemize}
\item $(x\cdot y)(z)=\sum_{w\in \mathbb{Z}/p\mathbb{Z}}x(z-w)y(w)$ \red{for $z\in\mathbb{Z}/p\mathbb{Z}$},
\item $x^*(z)=\overline{x(-z)}$,
\item $\Vert x\Vert=\max_{n\in\{0,\ldots,p-1\}}\vert\sum_{z\in \mathbb{Z}/p\mathbb{Z}}x(z)\omega^{zn}\vert$.
\end{itemize}
\end{definition}
{Since the product is the convolution, \red{group $C^*$-algebras} offer a new way to define positive definite kernels, which are effective in analyzing image data as we \addHK{will} see in Section~\ref{sec:kernels}.}
{Elements in $C^*(\mathbb{Z}/p\mathbb{Z})$ \addHK{are} described by circulant matrices~\citep{gray06}.}
Let $Circ(p)=\{x\in\mathbb{C}^{p\times p}\,\mid\, x\mbox{ is a circulant matrix}\}$.
Moreover, we denote the circulant matrix whose first row is $v$ as $\opn{circ}(v)$.
The discrete Fourier transform (DFT) matrix, whose $(i,j)$-entry is $\omega^{(i-1)(j-1)}/\sqrt{p}$, is denoted as $F$.
\begin{lemma}\label{lem:circulant_decom}
Any circulant matrix $x\in Circ(p)$ has an eigenvalue decomposition $x=F\Lambda_x F^*$, where
\begin{equation*}
\Lambda_x=\opn{diag}\bigg(\sum_{z\in\mathbb{Z}/p\mathbb{Z}}x(z)\omega^{z\cdot 0},\ldots,\sum_{z\in\mathbb{Z}/p\mathbb{Z}}x(z)\omega^{z(p-1)}\bigg).
\end{equation*}
\end{lemma}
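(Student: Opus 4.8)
The plan is to recognize that circulant matrices are exactly those diagonalized by the DFT matrix $F$, and to verify this directly by computing the action of $x$ on the columns of $F$. First I would recall the structure: writing $x=\opn{circ}(v)$ with first row $v=(x(0),x(1),\ldots,x(p-1))$, the $(i,j)$-entry of $x$ is $x(j-i \bmod p)$, i.e.\ entries depend only on the index difference modulo $p$. Let $f_k$ denote the $k$-th column of $F$, so $(f_k)_i = \omega^{(i-1)k}/\sqrt p$ for $k\in\{0,\ldots,p-1\}$.

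Next I would compute $(x f_k)_i = \sum_{j=1}^p x(j-i)\,\omega^{(j-1)k}/\sqrt p$. Substituting $z = (j-i)\bmod p$ and using $\omega^p=1$ so that $\omega^{(j-1)k} = \omega^{(i-1)k}\omega^{zk}$, the sum factors as $\big(\sum_{z\in\mathbb{Z}/p\mathbb{Z}} x(z)\,\omega^{zk}\big)\cdot \omega^{(i-1)k}/\sqrt p = \lambda_k (f_k)_i$, where $\lambda_k = \sum_{z\in\mathbb{Z}/p\mathbb{Z}} x(z)\,\omega^{zk}$. Hence $x f_k = \lambda_k f_k$ for each $k$, so $f_0,\ldots,f_{p-1}$ are eigenvectors of $x$ with eigenvalues $\lambda_0,\ldots,\lambda_{p-1}$. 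This says exactly $xF = F\Lambda_x$ with $\Lambda_x=\opn{diag}(\lambda_0,\ldots,\lambda_{p-1})$.

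To conclude $x = F\Lambda_x F^*$, I would invoke that $F$ is unitary, i.e.\ $F^*F = FF^* = I$; this is the standard orthogonality of the discrete Fourier characters, $\sum_{m=0}^{p-1}\omega^{(m)(k-l)} = p\,\delta_{kl}$, which accounts for the normalization $1/\sqrt p$. Multiplying $xF=F\Lambda_x$ on the right by $F^*$ gives the claimed decomposition, and the stated $\Lambda_x$ matches since its $(k{+}1,k{+}1)$ entry is $\sum_{z}x(z)\omega^{zk}$.

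I do not expect a serious obstacle here; the only points requiring mild care are the bookkeeping of the $0$-based exponents versus $1$-based matrix indices (the shift between "$k$" in $\omega^{(i-1)k}$ and "$(j-1)$" in the column entry) and the reduction of $j-i$ modulo $p$, which is legitimate precisely because $\omega^p=1$ and $x$ is $p$-periodic. If anything is "hard," it is simply stating the index conventions cleanly enough that the factorization $\omega^{(j-1)k}=\omega^{(i-1)k}\omega^{zk}$ is transparent; everything else is a direct computation plus the unitarity of $F$.
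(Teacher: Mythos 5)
Your proof is correct. The paper itself gives no proof of this lemma --- it is treated as a standard fact about circulant matrices (cf.\ the citation of Gray's review) and is used as such in the proof of Lemma~\ref{lem:circulant_isomorphic} --- and your argument is exactly the standard one: with the paper's convention that $\opn{circ}(v)$ has first row $v$, the $(i,j)$-entry is $x((j-i)\bmod p)$, the computation $(xf_k)_i=\big(\sum_{z}x(z)\omega^{zk}\big)\omega^{(i-1)k}/\sqrt{p}$ shows each column $f_k$ of $F$ is an eigenvector with eigenvalue $\sum_z x(z)\omega^{zk}$, matching the stated $\Lambda_x$, and unitarity of $F$ turns $xF=F\Lambda_x$ into $x=F\Lambda_xF^*$. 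Your handling of the $0$-based exponents versus $1$-based indices and the reduction of $j-i$ modulo $p$ is exactly the bookkeeping needed, so there is nothing to add.
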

\begin{lemma}\label{lem:circulant_isomorphic}
The group $C^*$-algebra $C^*(\mathbb{Z}/p\mathbb{Z})$ is $C^*$-isomorphic to $Circ(p)$.
\end{lemma}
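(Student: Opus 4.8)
The plan is to write down an explicit candidate map and check it is a bijective, norm-preserving $*$-homomorphism. Define $\Phi\colon C^*(\mathbb{Z}/p\mathbb{Z})\to Circ(p)$ by $\Phi(x)=\opn{circ}((x(0),\ldots,x(p-1)))$; concretely, $\Phi(x)$ is the matrix whose $(i,j)$ entry is $x(j-i)$ with indices read modulo $p$. Since $v\mapsto\opn{circ}(v)$ is by construction a linear bijection of $\mathbb{C}^p$ onto $Circ(p)$ and $x\mapsto(x(0),\ldots,x(p-1))$ identifies $C^*(\mathbb{Z}/p\mathbb{Z})$ with $\mathbb{C}^p$ linearly, $\Phi$ is automatically linear and bijective. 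It therefore remains only to verify that $\Phi$ is multiplicative, intertwines the involutions, and is isometric; these together also show that $Circ(p)$ is a $C^*$-subalgebra of $\mathbb{C}^{p\times p}$, so that the statement is meaningful in the first place.

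For multiplicativity I would compute the $(i,j)$ entry of the matrix product: $(\Phi(x)\Phi(y))_{ij}=\sum_{k\in\mathbb{Z}/p\mathbb{Z}}x(k-i)\,y(j-k)$, and the substitution $w=j-k$ turns this into $\sum_{w}x((j-i)-w)\,y(w)=(x\cdot y)(j-i)$, which is precisely the $(i,j)$ entry of $\Phi(x\cdot y)$; hence $\Phi(x\cdot y)=\Phi(x)\Phi(y)$. For the involution, the $(i,j)$ entry of the conjugate transpose $\Phi(x)^*$ is $\overline{\Phi(x)_{ji}}=\overline{x(i-j)}=\overline{x(-(j-i))}=x^*(j-i)$, the $(i,j)$ entry of $\Phi(x^*)$, so $\Phi(x)^*=\Phi(x^*)$.

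For the norm, apply Lemma~\ref{lem:circulant_decom}: $\Phi(x)=F\Lambda_x F^*$ with $F$ unitary and $\Lambda_x$ diagonal with entries $\sum_{z}x(z)\omega^{zn}$, $n=0,\ldots,p-1$. Unitary conjugation preserves the operator norm and the operator norm of a diagonal matrix is the largest modulus of its diagonal entries, so $\Vert\Phi(x)\Vert=\max_{n\in\{0,\ldots,p-1\}}\vert\sum_{z}x(z)\omega^{zn}\vert$, which is exactly $\Vert x\Vert$ from Definition~\ref{def:group_c_algebra}. (Since $*$-isomorphisms of $C^*$-algebras are automatically isometric, this last step is in principle redundant once $\Phi$ is known to be a $*$-isomorphism, but deriving it directly from Lemma~\ref{lem:circulant_decom} keeps the argument self-contained.) Equivalently, one can run the whole argument on the Fourier side: $x\mapsto\Lambda_x$ is the discrete Fourier transform, the convolution theorem gives $\Lambda_{x\cdot y}=\Lambda_x\Lambda_y$ and $\Lambda_{x^*}=\Lambda_x^*$, so $x\mapsto\Lambda_x$ is an isometric $*$-isomorphism onto the algebra of diagonal matrices, and conjugation by the unitary $F$ carries the diagonal matrices onto $Circ(p)$, which again yields $\Phi$.

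The computations above are routine; the only places needing attention are the index bookkeeping modulo $p$ in the multiplicativity step --- choosing the substitution so that the convolution in Definition~\ref{def:group_c_algebra} appears --- and, if one prefers the Fourier-side argument, the claim that $\{F\Lambda F^*:\Lambda\text{ diagonal}\}$ is \emph{all} of $Circ(p)$ rather than merely contained in it, which follows from Lemma~\ref{lem:circulant_decom} together with the fact that both spaces are $p$-dimensional. I do not anticipate any genuine obstacle.
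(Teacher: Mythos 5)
Your proposal is correct and follows essentially the same route as the paper: the explicit map $x\mapsto\opn{circ}(x(0),\ldots,x(p-1))$, checked to be a linear bijection that preserves products, involutions, and (via Lemma~\ref{lem:circulant_decom} and the unitarity of $F$) the norm. The extra Fourier-side remark is a nice but equivalent reformulation, so there is nothing to add.
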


We \addHK{now} review important notions about $C^*$-algebra.
We denote a $C^*$-algebra by $\alg$.
\begin{definition}[Positive]~\label{def:positive}
An element $a$ of $\alg$ is called {\em positive} if there exists $b\in\alg$ such that $a=b^*b$ holds.
For $a,b\in\alg$, we \red{write} $a\le_{\alg} b$ if $b-a$ is positive, and $a\lneq_{\alg}b$ if $b-a$ is positive and not zero.
We denote by $\alg_+$ the subset of $\alg$ composed of all positive elements in $\alg$.
\end{definition}
\begin{definition}[Minimum]\label{def:sup}
For a subset $\mcl{S}$ of $\alg$, $a\in\alg$ is said to be a {\em lower bound} with respect to the order $\le_{\alg}$, if $a\le_{\alg} b$ for any $b\in\mcl{S}$.
Then, a lower bound $c\in\alg$ is said to be an {\em infimum} of $\mcl{S}$, if $a\le_{\alg} c$ for any lower bound $a$ of $\mcl{S}$. 
If $c\in\mcl{S}$, then $c$ is said to be a {\em minimum} of $\mcl{S}$.
\end{definition}

\red{Hilbert $C^*$-module is a generalization of Hilbert space.}
We can define an $\alg$-valued inner product and a (real nonnegative-valued) norm as a natural generalization of the complex-valued inner product.
See Section~\ref{ap:c_algebra} for further details.
Then, we \red{define} Hilbert $C^*$-module as follows. 
\begin{definition}[Hilbert $C^*$-module]\label{def:hil_c*module}
Let $\modu$ be a $C^*$-module over $\alg$ equipped with an $\alg$-valued inner product.
If $\modu$ is complete with respect to the norm induced by the $\alg$-valued inner product, it is called a {\em Hilbert $C^*$-module} over $\alg$ or {\em Hilbert $\alg$-module}.
\end{definition}

\subsection{Reproducing Kernel Hilbert $C^*$-Module}

\addHK{RKHM is a generalization of RKHS by means of $C^*$-algebra.}
Let $\mcl{X}$ be a non-empty set for data.
\begin{definition}[$\alg$-valued positive definite kernel]\label{def:pdk_rkhm}
 An $\alg$-valued map $k:\mcl{X}\times \mcl{X}\to\alg$ is called a {\em positive definite kernel} if it satisfies the following conditions: 
\begin{itemize}
\item $k(x,y)=k(y,x)^*$ \;for $x,y\in\mcl{X}$,
\item $\sum_{i,j=1}^nc_i^*k(x_i,x_j)c_j\ge_{\alg} 0$ \;for $n\in\mathbb{N}$, $c_i\in\alg$, $x_i\in\mcl{X}$.
\end{itemize}
\end{definition}
\color{black}
Let $\phi: \mcl{X}\to\alg^{\mcl{X}}$ be the {\em feature map} associated with $k$, which is defined as $\phi(x)=k(\cdot,x)$ for $x\in\mcl{X}$.
We construct the following $C^*$-module composed of $\alg$-valued functions: 
\begin{equation*}
\modu_{k,0}:=\bigg\{\sum_{i=1}^{n}\phi(x_i)c_i\bigg|\ n\in\mathbb{N},\ c_i\in\alg,\ x_i\in \mcl{X}\bigg\}.
\end{equation*}
Define an $\alg$-valued map $\bracket{\cdot,\cdot}_{\modu_k}:\modu_{k,0}\times \modu_{k,0}\to\alg$ as
\begin{equation*}
\bbracket{\sum_{i=1}^{n}\phi(x_i)c_i,\sum_{j=1}^{l}\phi(y_j)b_j}_{\modu_k}:=\sum_{i=1}^{n}\sum_{j=1}^{l}c_i^*k(x_i,y_j)b_j.
\end{equation*}
By the properties of $k$ in Definition~\ref{def:pdk_rkhm}, $\bracket{\cdot,\cdot}_{\modu_k}$ is well-defined and has the reproducing property
\begin{equation*}
\bracket{\phi(x),v}_{\modu_k}=v(x),
\end{equation*}
for $v\in\modu_{k,0}$ and $x\in \mcl{X}$.
Also, it is an $\alg$-valued inner product. 
The {\em reproducing kernel Hilbert $\alg$-module~(RKHM)} associated with $k$ is defined as the completion of $\modu_{k,0}$.  
We denote by $\modu_k$ the RKHM associated with $k$.
In the following, we denote the inner product, absolute value, and norm in $\modu_k$ by $\bracket{\cdot,\cdot}_k$, $\vert\cdot\vert_k$, and $\Vert\cdot\Vert_k$\addHK{, respectively}.
\citet{hashimoto21} showed the representer theorem 
in RKHM.
\begin{proposition}[Representer theorem]\label{prop:representation}
Let $\alg$ be a unital $C^*$-algebra.
Let $x_1,\ldots,x_n\in\mcl{X}$ and $y_1,\ldots,y_n\in\alg$.
Let $h:\mcl{X}\times\alg\times\alg\to\alg_+$ be an error function and let $g:\alg_+\to\alg_+$ satisfy $g(a)\lneq_{\alg} g(b)$ for $a\lneq_{\alg} b$.
Assume the module (algebraically) generated by $\{\phi(x_i)\}_{i=1}^n$ is closed.
Then, any $u\in\modu_k$ minimizing $\sum_{i=1}^nh(x_i,y_i,u(x_i))+g(\vert u\vert_{\modu_k})$ admits a representation of the form $\sum_{i=1}^n\phi(x_i)c_i$ for some $c_1,\ldots,c_n\in\alg$.
\end{proposition}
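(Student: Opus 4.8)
The plan is to transplant the classical RKHS representer-theorem proof — decompose the candidate solution into a part lying in the data-generated submodule plus an orthogonal remainder — into the Hilbert $C^*$-module setting, where the closedness hypothesis is precisely what makes such an orthogonal decomposition available. First I would fix the notation $N:=\{\sum_{i=1}^{n}\phi(x_i)c_i\mid c_i\in\alg\}$ for the algebraically generated submodule, which is closed by hypothesis, and introduce the operator $\Phi\colon\alg^n\to\modu_k$, $\Phi(c_1,\ldots,c_n)=\sum_{i=1}^{n}\phi(x_i)c_i$, where $\alg^n$ carries the standard $\alg$-valued inner product $\sum_i c_i^*b_i$. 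The reproducing property gives $\langle\Phi\mathbf{c},v\rangle_k=\sum_i c_i^*\langle\phi(x_i),v\rangle_k=\sum_i c_i^*v(x_i)$, so $\Phi$ is adjointable with $\Phi^*v=(v(x_1),\ldots,v(x_n))$. Since $\Phi$ has closed range, equal to $N$, I would invoke the structure theory of adjointable operators with closed range on Hilbert $C^*$-modules (via polar decomposition / generalized inverses) to conclude $\modu_k=N\oplus N^\perp$ with $N^\perp=\ker\Phi^*=\{v\in\modu_k\mid v(x_i)=0,\ i=1,\ldots,n\}$.

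Next, given any minimizer $u$ of $J(u):=\sum_{i=1}^{n}h(x_i,y_i,u(x_i))+g(\vert u\vert_{\modu_k})$, I would write $u=u_0+u_1$ with $u_0\in N$ and $u_1\in N^\perp$ and check the two terms of $J$ behave as in the scalar case. For the error term: since $\phi(x_i)\in N$, the reproducing property gives $u(x_i)=\langle\phi(x_i),u\rangle_k=\langle\phi(x_i),u_0\rangle_k=u_0(x_i)$, hence $\sum_i h(x_i,y_i,u(x_i))=\sum_i h(x_i,y_i,u_0(x_i))$. For the regularizer: orthogonality removes the cross terms, so $\vert u\vert_{\modu_k}^2=\vert u_0\vert_{\modu_k}^2+\vert u_1\vert_{\modu_k}^2$, and if $u_1\neq 0$ then $\langle u_1,u_1\rangle_k$ is positive and nonzero, whence $\vert u_0\vert_{\modu_k}^2\lneq_{\alg}\vert u\vert_{\modu_k}^2$; applying the square root, which is operator monotone and injective on $\alg_+$, upgrades this to $\vert u_0\vert_{\modu_k}\lneq_{\alg}\vert u\vert_{\modu_k}$, and the hypothesis on $g$ then gives $g(\vert u_0\vert_{\modu_k})\lneq_{\alg}g(\vert u\vert_{\modu_k})$. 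Consequently $J(u)-J(u_0)=g(\vert u\vert_{\modu_k})-g(\vert u_0\vert_{\modu_k})$ is positive and nonzero whenever $u_1\neq 0$, i.e.\ $J(u_0)\lneq_{\alg}J(u)$; but $u$ being a minimizer forces $J(u)\le_{\alg}J(u_0)$, and antisymmetry of $\le_{\alg}$ then makes $J(u_0)=J(u)$, a contradiction. Therefore $u_1=0$ and $u=u_0=\sum_{i=1}^{n}\phi(x_i)c_i$.

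The hard part will be the orthogonal decomposition in the first step: unlike in a Hilbert space, a closed submodule of a Hilbert $C^*$-module need not be orthogonally complemented, so one cannot simply "project." The closedness assumption enters exactly here, through the fact that $N$ is the closed range of the adjointable map $\Phi$ — equivalently, that the $\alg$-valued Gram matrix $\big(k(x_i,x_j)\big)_{i,j=1}^{n}$ has closed range — which is what makes the polar-decomposition / generalized-inverse machinery applicable and $N$ orthogonally complemented. Everything after that is routine $C^*$-order bookkeeping: that the direct-sum decomposition is well defined, that orthogonality eliminates the cross terms in $\langle u,u\rangle_k$, and that strict order relations survive under $t\mapsto t^{1/2}$ and under $g$.
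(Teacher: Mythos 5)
Your argument is correct, and it is essentially the standard proof: the paper itself does not reprove Proposition~\ref{prop:representation} (it cites \citet{hashimoto21}), and the original proof follows the same route you take --- orthogonally decompose the minimizer against the closed, algebraically (hence finitely) generated submodule and use strict monotonicity of $g$ together with operator monotonicity of the square root to kill the orthogonal part. Your justification of the one genuinely module-theoretic point, that the closed range of the adjointable map $\Phi:\alg^n\to\modu_k$ is orthogonally complemented with complement $\ker\Phi^*=\{v\,\mid\,v(x_i)=0\}$ (Lance's closed-range theorem), is exactly where the closedness hypothesis is needed, so no gap remains.
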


\section{$C^*$-ALGEBRA-VALUED POSITIVE DEFINITE KERNELS}\label{sec:kernels}
To investigate the supervised learning problem in RKHM, we begin by constructing suitable $C^*$-algebra-valued positive definite kernels.
The product structure used in these kernels will be shown to be effective in analyzing image data.
However, the proposed kernels are general, and their application is not limited to image data.

Let $\alg_1$ be a $C^*$-algebra.
By the Gelfand--Naimark theorem (see, for example, \citealt{murphy90}), there exists a Hilbert space $\hil$ such that $\alg_1$ is a subalgebra of the $C^*$-algebra $\alg_2$ of bounded linear operators on $\hil$.
For image data we can set $\alg_1$ and $\alg_2$ as follows.
\begin{example}
Let $p\in\mathbb{N}$, $\alg_1=C^*(\mathbb{Z}/p\mathbb{Z})$, and $\alg_2=\mathbb{C}^{p\times p}$.
Then, $\alg_1$ is a subalgebra of $\alg_2$.
Indeed, by Lemma~\ref{lem:circulant_isomorphic}, $\alg_1\simeq Circ(p)$.
{For example, in image processing, we represent filters by circulant matrices~\citep{bhabatosh11}.
If we regard $\mathbb{Z}/p\mathbb{Z}$ as the space of $p$ pixels, then elements in $C^*(\mathbb{Z}/p\mathbb{Z})$ can be regarded as functions from pixels to intensities.
Thus, we can also regard grayscale and color images with $p$ pixels as elements in $C^*(\mathbb{Z}/p\mathbb{Z})$ and $C^*(\mathbb{Z}/p\mathbb{Z})^3$, respectively.}
Note that $\alg_2$ is noncommutative, although $\alg_1$ is commutative.
\end{example}
We consider the case where the inputs are in $\alg_1^d$ for $d\in\mathbb{N}$ and define {linear, polynomial, and Gaussian} $C^*$-algebra-valued positive definite kernels as follows.
For example, we can consider the case where inputs are $d$ images.
\begin{definition}\label{def:A-valued_kernel}
Let $\mcl{X}\subseteq\alg_1^d$ and $x=[x_1,\ldots,x_d]\in\mcl{X}$.
\begin{enumerate}
\item For $a_{i,1},a_{i,2}\in\alg_2$, the {\em linear kernel} $k:\mcl{X}\times \mcl{X}\to\alg_2$ is defined as $k(x,y)=\sum_{i=1}^d a_{i,1}^*x_i^*a_{i,2}^*a_{i,2}y_ia_{i,1}$.

\item For $q\in\mathbb{N}$ and $a_{i,j}\in\alg_2\ (i=1,\ldots d,j=1,\ldots q+1)$, the {\em polynomial kernel} $k:\mcl{X}\times \mcl{X}\to\alg_2$ is defined as 
\begin{equation*}
k(x,y)=\sum_{i=1}^d\bigg(\prod_{j=1}^{q} a_{i,j}^*x_i^*\bigg)a_{i,q+1}^*a_{i,q+1}\bigg(\prod_{j=1}^{q}y_i a_{i,q+1-j}\bigg).
\end{equation*}
\todo[disable]{equation above exceeds the margin}

\item Let $\Omega$ be a measurable space and $\mu$ is an $\alg_2$-valued positive measure on $\Omega$.\footnote{\addHK{See~\citet[Appendix B]{hashimoto21} for a rigorous definition.}} 
For $a_{i,1},a_{i,2}:\Omega\to\alg_2$, the {\em Gaussian kernel} $k:\mcl{X}\times \mcl{X}\to\alg_2$ is defined as
\begin{align*}
k(x,y)=&\int_{\omega\in\Omega}\mr{e}^{-\sqrt{-1}\sum_{i=1}^da_{i,1}(\omega)^*x_i^*a_{i,2}(\omega)^*}\mr{d}\mu(\omega)\mr{e}^{\sqrt{-1}\sum_{i=1}^da_{i,2}(\omega)y_ia_{i,1}(\omega)}.
\end{align*}
Here, we assume the integral does not diverge.
\end{enumerate}
\end{definition}
\begin{remark}
We can construct new kernels by the composition of functions to the kernels defined in Definition~\ref{def:A-valued_kernel}.
For example, let $\psi_{i,j}:\alg_1\to\alg_2$\; for $i=1,\ldots d$ and $j=1,\ldots,q+1$.
Then, the map defined by replacing $x_i$ and $y_i$ in the polynomial kernel by $\psi_{i,j}(x_i)$ and $\psi_{i,j}(y_i)$
is also an $C^*$-algebra-valued positive definite kernel.
\end{remark}

\begin{figure}[t]
    \centering
    \includegraphics[scale=0.37]{./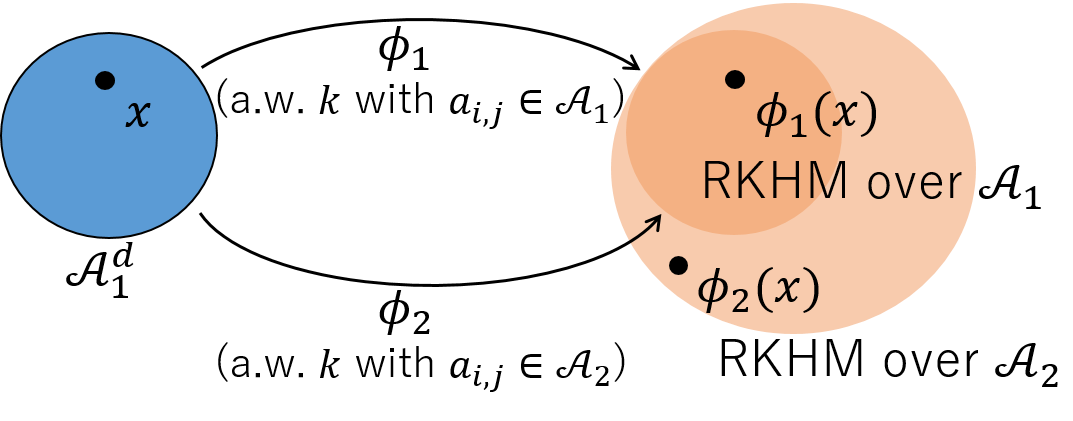}\vspace{-.3cm}
    \caption{Representing samples in RKHM}
    \label{fig:overview}
\end{figure}

If $\alg_1=\alg_2=\mathbb{C}$, then the above kernels are reduced to the standard complex-valued positive definite kernels and the RKHMs associated with them are reduced to RKHSs.
In this case, if $\mcl{X}=\alg^d$, the input space and the RKHS are both Hilbert spaces (Hilbert $\mathbb{C}$-modules).
On the other hand, for RKHMs, if we choose $\alg_1\subsetneq\alg_2$, then the input space $\mcl{X}$ is a Hilbert $\alg_1$-module, but the RKHM is a Hilbert $\alg_2$-module, not $\alg_1$-module.
Applying RKHMs, {we can construct higher dimensional spaces than input spaces but also enlarge the $C^*$-algebras characterizing the RKHMs},
which allows us to represent more functions than RKHSs and make use of the product structure in $\alg_2$.
Figure~\ref{fig:overview} schematically shows the representation of samples in RKHM.
We show an example related to image data below.

\begin{example}\label{ex:FC}
If $\alg_1=C^*(\mathbb{Z}/p\mathbb{Z})$, $\alg_2=\mathbb{C}^{p\times p}$ ($\alg_1\subsetneq\alg_2$), and $a_{i,j}\in\alg_1$, then $a_{i,j}$ in Definition~\ref{def:A-valued_kernel} behaves as convolutional filters.
In fact, by Definition~\ref{def:group_c_algebra}, the multiplication of $a_{i,j}$ and $x_i$ is represented by the convolution.
The convolution of two functions corresponds to the multiplication of each Fourier component of them.
Thus, each Fourier component of $x_i$ does not interact with other Fourier components.
Choosing $a_{i,j}\in\alg_2$ outside $\alg_1$ corresponds to the multiplication of different Fourier components of two functions.
Indeed, let $x\in\alg_1$. Then, by Lemma~\ref{lem:circulant_isomorphic}, $x$ is represented as a circulant matrix and by Lemma~\ref{lem:circulant_decom}, it is decomposed as $x=F\Lambda_xF^*$. 
In this case, $\Lambda_x$ is the diagonal matrix whose $i$th diagonal is 
the $i$th Fourier component (FC) of $x$.
Thus, if $a_{i,j}\in\alg_1$, then we have $xa_{i,j}=F\Lambda_x\Lambda_{a_{i,j}}F^*$ and each Fourier component of $x$ is multiplied by the same Fourier component of $a_{i,j}$.
On the other hand, if $a_{i,j}\in\alg_2\setminus\alg_1$, then $\Lambda_{a_{i,j}}$ is not a diagonal matrix, and the elements of $\Lambda_x\Lambda_{a_{i,j}}$ are composed of the weighted sum of different Fourier components of $x$.
Figure~\ref{fig:overview_FC} summarizes this example.
\end{example}
\paragraph{Comparison with vvRKHS}
From the perspective of vvRKHS, defining kernels 
\addHK{as in} Definition~\ref{def:A-valued_kernel} is difficult since for vvRKHS, the output space is a Hilbert space, and we do not have product structures in it.
Indeed, the inner product in a vvRKHS is described by an action of an operator on a vector.
We can regard the vector as a rank-one operator whose range is the one-dimensional space spanned by the vector.
Thus, the action is regarded as the product of only two operators.
On the other hand, from the perspective of $C^*$-algebra, we can multiply more than two elements in $C^*$-algebra, which allows us to define $C^*$-algebra-valued kernels naturally in the same manner as complex-valued kernels.
See Figure~\ref{fig:overview_kernel} for a schematic explanation.
\begin{figure}[t]
\begin{minipage}{.49\textwidth}
    \centering
    \includegraphics[scale=0.4]{./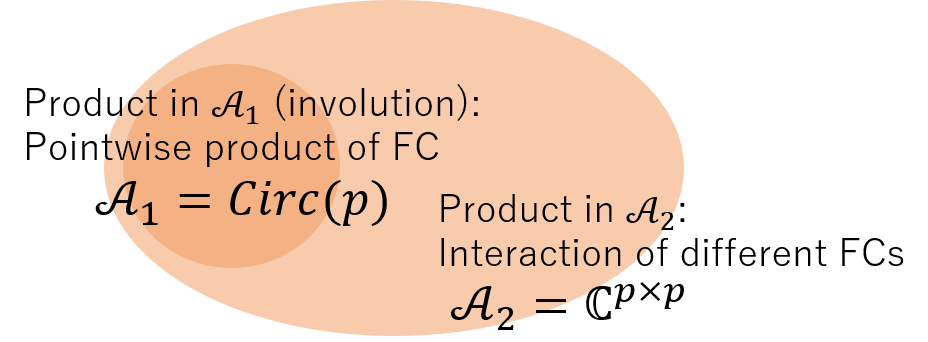}\vspace{-.3cm}
    \caption{Product in $\alg_1$ and $\alg_2$ in Example~\ref{ex:FC}}\smallskip
    \label{fig:overview_FC}
\end{minipage}%
\begin{minipage}{.49\textwidth}
    \centering
    \includegraphics[scale=0.4]{./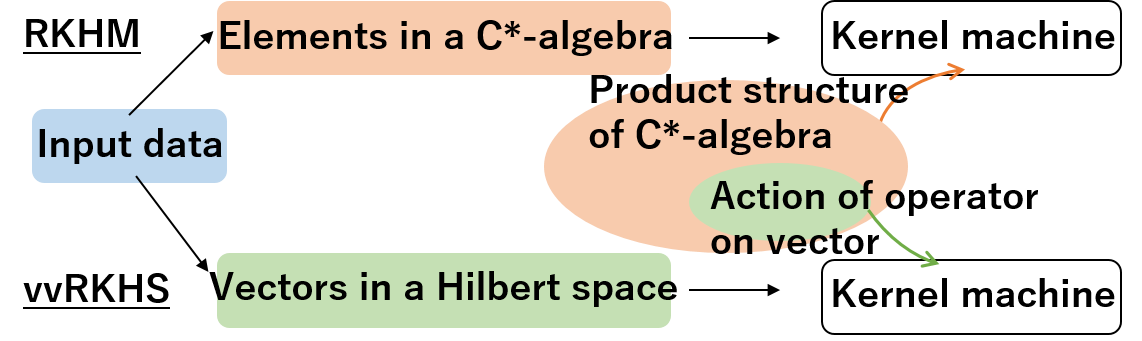}\vspace{-.3cm}
    \caption{Comparison of RKHM with vvRKHS}
    \label{fig:overview_kernel}
\end{minipage}
\end{figure}

\color{black}
\section{SUPERVISED LEARNING IN RKHM}\label{sec:supervised}
We investigate supervised learning in RKHM.
We first formulate the problem \addHK{and derive a learning algorithm}.
Then, we \addHK{characterize} its generalization \addHK{error} and investigate its computational complexity.

We do not assume $\mcl{X}\subseteq\alg_1^d$ in Subsections~\ref{subsec:problem_setting} and \ref{subsec:generalization_bound}.
The input space $\mcl{X}$ can be an arbitrary nonempty set in these sections.
Thus, although we focus on the case of $\mcl{X}\subseteq\alg_1^d$ in this paper, the supervised learning in RKHM is applied to general problems whose output space is a $C^*$-algebra $\alg$.

\subsection{Problem Setting}\label{subsec:problem_setting}
Let $x_1,\ldots,x_n\in\mcl{X}$ be input training samples and $y_1,\ldots,y_n\in\alg$ be output training samples.
Let $k:\mcl{X}\times\mcl{X}\to\alg$ be an $\alg$-valued positive definite kernel, and let $\phi$ and $\modu_k$ be the feature map and RKHM associated with $k$, respectively.
We find a function $f:\mcl{X}\to\alg$ in $\modu_k$ that maps input data to output data.
For this purpose, we consider the following minimization problem:
\begin{equation}
\min_{f\in\modu_k}\bigg(\sum_{i=1}^n\vert f(x_i)-y_i\vert_{\alg}^2+\lambda \vert f\vert_k^2\bigg),\label{eq:supervised}
\end{equation}
where $\lambda\ge 0$ is the regularization parameter.
By the representer theorem (Proposition~\ref{prop:representation}), we find a solution $f$ in the submodule generated by $\{\phi(x_1),\ldots,\phi(x_n)\}$.
As the case of RKHS\addHK{~\citep{scholkopf01_representer}},
representing $f$ as $\sum_{j=1}^n\phi(x_j)c_j\ (c_j\in\alg)$, the problem is reduced to 
\begin{align}
&\min_{c_j\in\alg}\bigg(\sum_{i=1}^n\bigg\vert \sum_{j=1}^nk(x_i,x_j)c_j-y_i\bigg\vert_{{\alg}}^2+\lambda \bigg\vert \sum_{j=1}^n\phi(x_j)c_j\bigg\vert_{k}^2\bigg)\nn\\
&\quad=\min_{c_j\in\alg}(\bc^*\bG^2\bc-\bc^*\bG\by-\by^*\bG\bc+\lambda \bc^*\bG\bc),\label{eq:min_prob}
\end{align}
where $\bG$ is the $\red{\alg^{n\times n}}$-valued Gram matrix whose $(i,j)$-entry is defined as $k(x_i,x_j)\red{\in\alg}$, $\bc=[c_1,\ldots,c_n]^T$, $\by=[y_1,\ldots,y_n]^T$, and $\vert a\vert_{\alg}=(a^*a)^{1/2}$ for $a\in\alg$.
If $\bG+\lambda I$ is invertible, the solution of Problem~\eqref{eq:min_prob} is $\bc=(\bG+\lambda I)^{-1}\by$.

\subsection{Generalization Bound}\label{subsec:generalization_bound}
We derive a generalization bound of the supervised problem in RKHM.
We first define an $\alg$-valued Rademacher complexity.
Let $(\Omega,P)$ be a probability space. 
For a random variable (measurable map) $g:\Omega\to\alg$, we denote by $\mr{E}[g]$ the Bochner integral of $g$, i.e., $\int_{\omega\in\Omega}g(\omega)\mr{d}P(\omega)$.
\begin{definition}
Let $\sigma_1,\ldots,\sigma_n$ be i.i.d and mean zero $\alg$-valued random variables and let $x_1,\ldots,x_n\in\mcl{X}$ be given samples.
Let $\boldsymbol\sigma=\{\sigma_i\}_{i=1}^n$ and $\mathbf{x}=\{x_i\}_{i=1}^n$.
Let $\mcl{F}$ be a class of functions from $\mcl{X}$ to $\alg$.
The {\em $\alg$-valued empirical Rademacher complexity} $\hat{R}(\mcl{F},\boldsymbol\sigma,\mathbf{x})$ is defined as
\begin{equation*}
\hat{R}(\mcl{F},\boldsymbol\sigma,\mathbf{x})=\mr{E}\bigg[\sup_{f\in\mcl{F}}\bigg\vert \frac{1}{n}\sum_{i=1}^nf(x_i)^*\sigma_i\bigg\vert_{\alg}\bigg].
\end{equation*}
\end{definition}
We derive an upper bound of the complexity of a function space related to the RKHM $\modu_k$.
We assume $\alg$ is 
the $C^*$-algebra of bounded linear operators on a Hilbert space.
\begin{proposition}\label{prop:A_valued_complexity}
Let $B>0$ and let $\mcl{F}=\{f\in\modu_k\,\mid\,\Vert f\Vert_k\le B\}$ and let $C=\int_{\Omega}\Vert \sigma_i(\omega)\Vert_{\alg}^2\mr{d}P(\omega)$. 
Then, we have
\begin{equation*}
\hat{R}(\mcl{F},\boldsymbol\sigma,\mathbf{x})\le_{{\alg}} \frac{B\sqrt{C}}{n}\bigg(\sum_{i=1}^n\Vert k(x_i,x_i)\Vert_{\alg}\bigg)^{1/2}I.
\end{equation*}
\end{proposition}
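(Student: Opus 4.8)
The plan is to mimic the classical RKHS Rademacher-complexity bound, but carried out with $\alg$-valued inner products and the operator order $\le_\alg$. First I would use the reproducing property to rewrite, for $f\in\mcl{F}$, the quantity $\frac1n\sum_i f(x_i)^*\sigma_i$ as an $\alg$-valued inner product in $\modu_k$. Concretely, $f(x_i)=\bracket{\phi(x_i),f}_k$, so $f(x_i)^*=\bracket{f,\phi(x_i)}_k$ and hence $\frac1n\sum_i f(x_i)^*\sigma_i = \Bbracket{f,\frac1n\sum_i \phi(x_i)\sigma_i}_k$. Taking absolute values, I would invoke the Cauchy--Schwarz inequality for Hilbert $C^*$-modules, which gives $\big|\bracket{f,w}_k\big|_\alg^2 \le_\alg \Vert f\Vert_k^2\, \bracket{w,w}_k$ (or the square-root form $\big|\bracket{f,w}_k\big|_\alg \le \Vert f\Vert_k\,\Vert w\Vert_k$), so that after taking the supremum over $f\in\mcl{F}$ the $f$-dependence collapses to the bound $B$ and we are left with $\hat R(\mcl{F},\boldsymbol\sigma,\mathbf{x}) \le_\alg B\,\mr{E}\big[\big\Vert \frac1n\sum_i\phi(x_i)\sigma_i\big\Vert_k\big] I$. (A care point: the supremum is over a set of self-adjoint $\alg$-valued random variables, so I should note that $\mcl{F}=-\mcl{F}$ lets me drop absolute values where convenient, and that $\sup$ here is the pointwise supremum of the norms, which is why the right-hand side carries the factor $I$.)

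Next I would bound $\mr{E}\big[\big\Vert n^{-1}\sum_i\phi(x_i)\sigma_i\big\Vert_k\big]$. By Jensen's inequality (for the concave square-root, applied to the real-valued random variable $\big\Vert n^{-1}\sum_i\phi(x_i)\sigma_i\big\Vert_k^2$), this is at most $\big(\mr{E}\big[\big\Vert n^{-1}\sum_i\phi(x_i)\sigma_i\big\Vert_k^2\big]\big)^{1/2}$. Then I expand the squared norm as $\Vert w\Vert_k^2 = \Vert\bracket{w,w}_k\Vert_\alg$ with $w=n^{-1}\sum_i\phi(x_i)\sigma_i$, and compute $\bracket{w,w}_k = n^{-2}\sum_{i,j}\sigma_i^* k(x_i,x_j)\sigma_j$. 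Taking expectations and using independence and mean-zero of the $\sigma_i$, the cross terms $i\neq j$ vanish (here I need $\mr{E}[\sigma_i^* a\,\sigma_j]=\mr{E}[\sigma_i^*]\,a\,\mr{E}[\sigma_j]=0$ for the off-diagonal, using independence and that the $\sigma$'s are $\alg$-valued with mean zero), leaving $\mr{E}[\bracket{w,w}_k] = n^{-2}\sum_i \mr{E}[\sigma_i^* k(x_i,x_i)\sigma_i]$.

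Then I estimate $\big\Vert \mr{E}[\sigma_i^* k(x_i,x_i)\sigma_i]\big\Vert_\alg$. By the triangle inequality for the Bochner integral and submultiplicativity of the $C^*$-norm, $\Vert \mr{E}[\sigma_i^* k(x_i,x_i)\sigma_i]\Vert_\alg \le \mr{E}[\Vert\sigma_i\Vert_\alg^2\,\Vert k(x_i,x_i)\Vert_\alg] = \Vert k(x_i,x_i)\Vert_\alg\,\mr{E}[\Vert\sigma_i\Vert_\alg^2] = C\,\Vert k(x_i,x_i)\Vert_\alg$, using that the $\sigma_i$ are identically distributed so each has $\mr{E}[\Vert\sigma_i\Vert_\alg^2]=C$. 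Since norm is subadditive, $\Vert\mr{E}[\bracket{w,w}_k]\Vert_\alg \le n^{-2} C \sum_i \Vert k(x_i,x_i)\Vert_\alg$, and combining everything gives $\hat R(\mcl{F},\boldsymbol\sigma,\mathbf{x}) \le_\alg \frac{B\sqrt{C}}{n}\big(\sum_i\Vert k(x_i,x_i)\Vert_\alg\big)^{1/2} I$, as claimed.

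The main obstacle I anticipate is the careful handling of the Cauchy--Schwarz step and the supremum in the module setting: in a Hilbert $C^*$-module the inequality $\big|\bracket{f,w}_k\big|_\alg^2 \le_\alg \Vert f\Vert_k^2 \bracket{w,w}_k$ holds, but to pass from this operator inequality to the scalar bound $\big|\bracket{f,w}_k\big|_\alg \le_\alg \Vert f\Vert_k \Vert w\Vert_k I$ I need to take norms (using $a\le_\alg \Vert a\Vert_\alg I$ and monotonicity of the square root on positive operators), and then interchange the supremum over $f$ with the expectation — which is fine because the bound after Cauchy--Schwarz no longer depends on $f$ except through $\Vert f\Vert_k\le B$. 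A secondary subtlety is making the off-diagonal-vanishing argument rigorous for $\alg$-valued (operator-valued) random variables, i.e., justifying $\mr{E}[\sigma_i^* a \sigma_j] = \mr{E}[\sigma_i]^* a\, \mr{E}[\sigma_j]$ under independence; this follows from the definition of the Bochner integral and bilinearity, but should be stated.
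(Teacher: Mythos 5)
Your first step (reproducing property plus the module Cauchy--Schwarz inequality) matches the paper, but there is a genuine gap at the Jensen/expectation stage, and it comes from passing to the scalar norm $\Vert w\Vert_k$ too early, where $w=\frac{1}{n}\sum_i\phi(x_i)\sigma_i$. After your scalar Jensen step you must control $\mr{E}\big[\Vert w\Vert_k^2\big]=\mr{E}\big[\Vert\bracket{w,w}_k\Vert_{\alg}\big]$, i.e.\ the expectation of the \emph{norm} of $\bracket{w,w}_k$. But your cross-term cancellation ($\mr{E}[\sigma_i^*k(x_i,x_j)\sigma_j]=0$ for $i\neq j$) is only available once the expectation is \emph{inside} the norm: what you actually bound is $\Vert\mr{E}[\bracket{w,w}_k]\Vert_{\alg}$. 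Since for operator-valued random variables one only has $\Vert\mr{E}[X]\Vert_{\alg}\le \mr{E}[\Vert X\Vert_{\alg}]$ (and not the reverse; e.g.\ $X$ equal to $\opn{diag}(1,0)$ or $\opn{diag}(0,1)$ with probability $1/2$ each has $\Vert\mr{E}[X]\Vert_{\alg}=1/2<1=\mr{E}[\Vert X\Vert_{\alg}]$), replacing $\mr{E}[\Vert\bracket{w,w}_k\Vert_{\alg}]$ by $\Vert\mr{E}[\bracket{w,w}_k]\Vert_{\alg}$ is unjustified, and pathwise the off-diagonal terms $\sigma_i^*k(x_i,x_j)\sigma_j$ do contribute to $\Vert\bracket{w,w}_k\Vert_{\alg}$. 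This is exactly the difficulty that makes the na\"ive transplant of the RKHS argument fail when $\alg\neq\mathbb{C}$.

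The paper circumvents this by never leaving the $C^*$-algebra until the very end: from Cauchy--Schwarz it keeps the $\alg$-valued absolute value, $\sup_{f\in\mcl{F}}\vert\bracket{f,w}_k\vert_{\alg}\le_{\alg} B\vert w\vert_{k}=B\big(\sum_{i,j}\sigma_i^*k(x_i,x_j)\sigma_j\big)^{1/2}/n$ (up to the $1/n$ bookkeeping), and then applies the $\alg$-valued Jensen inequality $\mr{E}[c^{1/2}]\le_{\alg}\mr{E}[c]^{1/2}$ (Lemma~\ref{lem:jensen}, proved precisely for this purpose) so that the expectation moves inside the operator square root; the cross terms then vanish inside the operator-valued expectation, and only afterwards does one pass to scalars via $a\le_{\alg}\Vert a\Vert_{\alg}I$ for $a\ge_{\alg}0$, operator monotonicity of the square root, and submultiplicativity to extract $\sqrt{C}$. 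To repair your proof you should adopt this ordering: keep $\vert w\vert_k$ rather than $\Vert w\Vert_k I$ after Cauchy--Schwarz, invoke the operator Jensen inequality in place of the scalar one, and defer the $\Vert\cdot\Vert_{\alg}I$ bound to the last step; as written, the inequality $\mr{E}[\Vert w\Vert_k^2]\le n^{-2}C\sum_i\Vert k(x_i,x_i)\Vert_{\alg}$ that your route needs is not established (and is not obviously true) for noncommutative $\alg$.
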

\addHK{To prove Proposition~\ref{prop:A_valued_complexity},} we {first show} the following $\alg$-valued version of Jensen's inequality.
\begin{lemma}\label{lem:jensen}
For a positive $\alg$-valued random variable $c:\Omega\to\alg_{+}$, we have $\mr{E}[c^{1/2}]\le_{\alg}\mr{E}[c]^{1/2}$.
\end{lemma}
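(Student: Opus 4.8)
The plan is to reduce the $\alg$-valued statement to the classical scalar Jensen inequality by testing against vector states, exploiting that $\alg$ is the $C^*$-algebra of bounded linear operators on a Hilbert space $\hil$. Concretely, for $a,b\in\alg_+$ one has $a\le_\alg b$ if and only if $\bracket{\xi,a\xi}_\hil\le\bracket{\xi,b\xi}_\hil$ for every $\xi\in\hil$, so it suffices to show $\bracket{\xi,\mr{E}[c^{1/2}]\xi}_\hil\le\bracket{\xi,\mr{E}[c]^{1/2}\xi}_\hil$ for a fixed unit vector $\xi$. The left-hand side equals $\mr{E}[\bracket{\xi,c^{1/2}\xi}_\hil]$ because the Bochner integral commutes with the bounded linear functional $a\mapsto\bracket{\xi,a\xi}_\hil$.

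The key analytic input is the operator concavity of the square-root function: the map $t\mapsto t^{1/2}$ on $[0,\infty)$ is operator monotone and operator concave (this is classical, e.g. L\"owner's theorem). From operator concavity one gets, for the positive operator-valued random variable $c$, the pointwise-in-$\omega$ comparison built into the integral version $\mr{E}[c]^{1/2}\ge_\alg\mr{E}[c^{1/2}]$ directly; but to keep the argument self-contained I would instead argue through scalar Jensen as follows. Fix the unit vector $\xi$ and consider the real-valued random variable $\omega\mapsto\bracket{\xi,c(\omega)\xi}_\hil\ge 0$. By the scalar Jensen inequality applied to the concave function $t\mapsto\sqrt t$,
\begin{equation*}
\mr{E}\big[\sqrt{\bracket{\xi,c(\omega)\xi}_\hil}\big]\le\sqrt{\mr{E}\big[\bracket{\xi,c(\omega)\xi}_\hil\big]}=\sqrt{\bracket{\xi,\mr{E}[c]\xi}_\hil}.
\end{equation*}
On the other hand, operator concavity of the square root gives, for each $\omega$, the inequality $\bracket{\xi,c(\omega)^{1/2}\xi}_\hil\le\sqrt{\bracket{\xi,c(\omega)\xi}_\hil}$ when $\|\xi\|=1$ (this is the scalar Jensen inequality applied to the spectral measure of $c(\omega)$ against the vector $\xi$, i.e. Jensen's operator inequality for the single operator $c(\omega)$). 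Taking expectations and chaining the two displays yields
\begin{equation*}
\bracket{\xi,\mr{E}[c^{1/2}]\xi}_\hil=\mr{E}\big[\bracket{\xi,c(\omega)^{1/2}\xi}_\hil\big]\le\sqrt{\bracket{\xi,\mr{E}[c]\xi}_\hil}.
\end{equation*}
It remains to compare the right-hand side with $\bracket{\xi,\mr{E}[c]^{1/2}\xi}_\hil$; but these are not equal in general, so one more application of Jensen's operator inequality (to the single positive operator $\mr{E}[c]$ and the unit vector $\xi$) gives $\sqrt{\bracket{\xi,\mr{E}[c]\xi}_\hil}\ge\bracket{\xi,\mr{E}[c]^{1/2}\xi}_\hil$, which points the wrong way. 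So the cleaner route is to invoke operator concavity of $t\mapsto t^{1/2}$ at the operator level: for a positive operator-valued integrable $c$, operator concavity yields $\mr{E}[c]^{1/2}\ge_\alg\mr{E}[c^{1/2}]$ as a direct integral analogue of the finite-sum statement $\big(\sum\lambda_i c_i\big)^{1/2}\ge_\alg\sum\lambda_i c_i^{1/2}$ for convex weights $\lambda_i$, which then extends to general probability measures by approximating the Bochner integral with simple functions and passing to the limit using norm-continuity of the square root on bounded sets.

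I expect the main obstacle to be the limiting argument: going from the finite-convex-combination form of operator concavity to the Bochner-integral form requires (i) approximating $c$ in $L^1(\Omega,\alg)$ by simple $\alg_+$-valued functions $c_m$ with $\mr{E}[c_m]\to\mr{E}[c]$ in norm, (ii) controlling $\mr{E}[c_m^{1/2}]\to\mr{E}[c^{1/2}]$, which needs an operator-norm modulus of continuity for the square root such as $\|a^{1/2}-b^{1/2}\|\le\|a-b\|^{1/2}$ valid for positive $a,b$, and (iii) preserving the order $\le_\alg$ under norm limits, which holds since $\alg_+$ is norm-closed. Once these three points are in place, taking $m\to\infty$ in $\mr{E}[c_m]^{1/2}\ge_\alg\mr{E}[c_m^{1/2}]$ gives the claim. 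The operator-concavity fact itself I would simply cite (e.g. from standard references on operator monotone/concave functions).
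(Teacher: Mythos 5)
Your final route is correct, but it is genuinely different from the paper's. (Your first attempt via vector states you rightly abandoned: testing $a\le_{\alg}b$ against $\xi$ forces you to compare with $\bracket{\xi,\mr{E}[c]^{1/2}\xi}$, not with $\sqrt{\bracket{\xi,\mr{E}[c]\xi}}$, and the two differ in the wrong direction.) What you end up proposing is: cite operator concavity of $t\mapsto t^{1/2}$, get the finite convex-combination inequality, and transport it to the Bochner integral by approximating $c$ with $\alg_+$-valued simple functions, using $\Vert a^{1/2}-b^{1/2}\Vert\le\Vert a-b\Vert^{1/2}$ for the convergence of $\mr{E}[c_m^{1/2}]$ and of $\mr{E}[c_m]^{1/2}$, and norm-closedness of $\alg_+$ to pass the order to the limit; this works (you should just note that the simple approximants can be chosen with values among the $c(\omega)\in\alg_+$, and that $\Vert c(\cdot)\Vert^{1/2}$ is integrable on a probability space by Cauchy--Schwarz). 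The paper instead uses a short, self-contained tangent-line argument: with $x_0=\mr{E}[c]+\epsilon 1_{\alg}$, $a=\tfrac12 x_0^{-1/2}$, $b=\tfrac12 x_0^{1/2}$, one verifies $(ax+b)^*(ax+b)-x=(ax-b)^*(ax-b)\ge_{\alg}0$ for every $x\in\alg_+$, deduces $ax+b\ge_{\alg}x^{1/2}$, applies this pointwise to $c(\omega)$, takes expectations (so that $a\mr{E}[c+\epsilon 1_{\alg}]+b=x_0^{1/2}$ by linearity of the Bochner integral), and lets $\epsilon\to 0$. The trade-off: the paper's proof needs no appeal to L\"owner theory and no simple-function limiting argument at all, only the affine majorant of the square root at one point; your proof is heavier on cited machinery and measure-theoretic bookkeeping, but it is more modular — it proves the analogous inequality verbatim for any operator-concave function, not just the square root.
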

{In Example~\ref{ex:FC}, we focused on the case of $\alg=\mathbb{C}^{p\times p}$, which is effective, for example, in analyzing image data.}
In the following, we focus on that case and consider the trace of matrices.
The trace is an appropriate operation for evaluating matrices. 
It is linear and forms the Hilbert--Schmidt inner product.
Let $B>0$ and $E>0$.
We put $\mcl{F}=\{f\in\modu_k\,\mid\,\Vert f\Vert_k\le B,\ f(x)\in\mathbb{R}^{p\times p}\mbox{ for any }x\in\mcl{X}\}$, ${\mcl{G}(\mcl{F})}=\{\mcl{X}\times\mcl{Y}\ni (x,y)\mapsto\vert f(x)-y\vert_{\alg}^2\in\alg\,\mid\,f\in\mcl{F}\}$, and
$\mcl{Y}=\{y\in\mathbb{R}^{p\times p}\,\mid\,\Vert y\Vert_{\alg}\le E\}$.
Let $x_1,\ldots,x_n\in\mcl{X}$ and $y_1,\ldots,y_n\in\mcl{Y}$.
We assume there exists $D>0$ such that for any $x\in\mcl{X}$, $\Vert k(x,x)\Vert_{\alg}\le D$ and let $L=2\sqrt{2}(B\sqrt{D}+E)$.
Using the upper bound of the Rademacher complexity, we derive the following generalization bound.
\begin{proposition}\label{prop:generalization_bound}
Let $\opn{tr}(a)$ be the trace of $a\in\mathbb{C}^{p\times p}$.
For any $g\in \mcl{G}(\mcl{F})$, {any random variable $z:\Omega\to\mcl{X}\times \mcl{Y}$}, and any $\delta\in (0,1)$, with probability $\ge 1-\delta$, we obtain 
\begin{align*}
&\opn{tr}\bigg(\mr{E}[g(z)]-\frac{1}{n}\sum_{i=1}^ng(x_i,y_i)\bigg)
\le 2 \frac{LB\sqrt{D}p}{\sqrt{n}}+3\sqrt{2D}p\sqrt{\frac{\log(2/{\delta})}{n}}.
\end{align*}
\end{proposition}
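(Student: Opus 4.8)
The plan is to follow the standard Rademacher-complexity route to generalization bounds, but carried out at the level of the trace functional so that everything reduces to scalar inequalities. First I would reduce the statement to a uniform deviation bound over the class $\mcl{G}(\mcl{F})$: for any $g\in\mcl{G}(\mcl{F})$,
\begin{equation*}
\opn{tr}\bigg(\mr{E}[g(z)]-\frac1n\sum_{i=1}^ng(x_i,y_i)\bigg)\le \sup_{h\in\mcl{G}(\mcl{F})}\opn{tr}\bigg(\mr{E}[h(z)]-\frac1n\sum_{i=1}^nh(x_i,y_i)\bigg).
\end{equation*}
Since $\opn{tr}$ is a bounded linear functional and the elements of $\mcl{G}(\mcl{F})$ take values in $\mathbb{R}^{p\times p}$ with $\Vert g(x,y)\Vert_{\alg}\le L^2/8$ (using $\vert f(x)-y\vert_\alg^2$, $\Vert f(x)\Vert_\alg\le B\sqrt D$ by the reproducing property and $\Vert k(x,x)\Vert_\alg\le D$, and $\Vert y\Vert_\alg\le E$), the scalar-valued random variable $\opn{tr}\,g(x,y)$ is bounded in absolute value by $pL^2/8$. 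Thus the classical McDiarmid/bounded-differences argument applies to $\Phi(\mathbf{x},\mathbf{y})=\sup_{h}\opn{tr}(\mr{E}[h(z)]-\frac1n\sum_i h(x_i,y_i))$, giving with probability $\ge 1-\delta$
\begin{equation*}
\Phi\le \mr{E}[\Phi]+\frac{pL^2}{8}\sqrt{\frac{2\log(1/\delta)}{n}},
\end{equation*}
and a symmetrization step bounds $\mr{E}[\Phi]$ by $2\,\mr{E}\big[\sup_h\frac1n\sum_i\epsilon_i\opn{tr}\,h(x_i,y_i)\big]$ with scalar Rademacher signs $\epsilon_i$. (The precise bookkeeping of the constant $3\sqrt{2D}p$ in the second term and $2LB\sqrt Dp/\sqrt n$ in the first will come out of combining these two pieces with the $L$-Lipschitz contraction below; I would not grind through it here.)

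Next I would apply a contraction (Talagrand/Ledoux) step: the map $a\mapsto\opn{tr}(\vert a-y\vert_\alg^2)=\Vert a-y\Vert_{HS}^2$ from $\{a\in\mathbb{R}^{p\times p}:\Vert a\Vert_\alg\le B\sqrt D\}$ to $\mathbb{R}$ is Lipschitz with constant $2(B\sqrt D+E)$ in Hilbert--Schmidt norm, so the Rademacher average of $\mcl{G}(\mcl{F})$ is controlled by that of the class $\{(x,y)\mapsto f(x):f\in\mcl{F}\}$ up to this Lipschitz factor, which is where $L$ enters. Then I would relate the scalar Rademacher complexity of $\mcl{F}$ to the $\alg$-valued empirical Rademacher complexity $\hat R(\mcl{F},\boldsymbol\sigma,\mathbf{x})$ of Proposition~\ref{prop:A_valued_complexity}: taking $\sigma_i=\epsilon_i I$ (so $C\le p$ and $\Vert\sigma_i\Vert_\alg=1$) and using that $\vert\opn{tr}(a)\vert\le p\Vert a\Vert_\alg$ together with $\opn{tr}$ applied to the operator inequality in Proposition~\ref{prop:A_valued_complexity}, the bound $\hat R(\mcl{F},\boldsymbol\sigma,\mathbf{x})\le_\alg \frac{B\sqrt C}{n}(\sum_i\Vert k(x_i,x_i)\Vert_\alg)^{1/2}I\le_\alg \frac{B\sqrt p}{n}(nD)^{1/2}I=\frac{B\sqrt{pD}}{\sqrt n}I$ yields a scalar Rademacher bound of order $B\sqrt D\,p/\sqrt n$ after taking the trace. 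Combining the symmetrization, contraction, and this Proposition~\ref{prop:A_valued_complexity} estimate gives the $2LB\sqrt D p/\sqrt n$ term, and the McDiarmid term supplies the $3\sqrt{2D}p\sqrt{\log(2/\delta)/n}$ term (absorbing $L^2/8$ and relabeling $\delta\mapsto 2/\delta$-style constants to match).

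The main obstacle I anticipate is technical rather than conceptual: handling the $\alg$-valued (operator-valued) expectations and suprema carefully enough that the scalarization by $\opn{tr}$ is legitimate at every stage. In particular, the bounded-differences inequality and symmetrization are genuinely scalar statements and can only be invoked after applying $\opn{tr}$, but the supremum over $f\in\mcl{F}$ lives inside the trace, so one must check that $x\mapsto\opn{tr}(\mr{E}[g(z)]-\frac1n\sum_i g(x_i,y_i))$ really does satisfy bounded differences with the claimed constant — this uses the uniform bound $\Vert g\Vert_\alg\le L^2/8$ and the positivity of $g$ (so that $\opn{tr}$ is monotone and norm and trace are comparable up to the factor $p$). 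One must also be careful that the $\alg$-valued Rademacher complexity in Proposition~\ref{prop:A_valued_complexity} uses a norm inside the expectation whereas the scalar argument wants a signed sum; this is precisely the point where choosing $\sigma_i=\epsilon_i I$ and using $\vert\opn{tr}(a)\vert\le\opn{tr}(\vert a\vert_\alg)\le p\Vert a\Vert_\alg$ bridges the two, and Lemma~\ref{lem:jensen} (already available) is what makes the square-root bound in Proposition~\ref{prop:A_valued_complexity} usable. Once these compatibility points are pinned down, the remaining work is routine constant-tracking.
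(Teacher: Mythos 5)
Your overall architecture is the same as the paper's: a trace-level McDiarmid plus symmetrization step (the paper's Lemma~\ref{prop:generalization_err}, following Theorem 3.3 of Mohri et al.), a contraction step passing from the loss class $\mcl{G}(\mcl{F})$ to $\mcl{F}$ (the paper's Lemma~\ref{prop:error_rademacher}), and the complexity bound of Proposition~\ref{prop:A_valued_complexity} (via Lemma~\ref{lem:jensen} and Cauchy--Schwarz) for the ball in $\modu_k$. The genuine gap is in your contraction step and the bridge you build after it. The loss $a\mapsto\opn{tr}(\vert a-y\vert_{\alg}^2)=\Vert a-y\Vert_{\mr{HS}}^2$ is a Lipschitz function of a \emph{matrix-valued} argument, so the scalar Ledoux--Talagrand contraction you invoke does not apply; what is needed is the vector-contraction inequality of \citet{maurer16} (Theorem 3), which is exactly the tool the paper uses in Lemma~\ref{prop:error_rademacher}. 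That inequality does not output the quantity you aim for: it gives $\sqrt{2}\cdot\mathrm{Lip}\cdot\mr{E}\big[\sup_{f\in\mcl{F}}\tfrac1n\sum_i\langle\sigma_i,f(x_i)\rangle_{\mr{HS}}\big]$ where each $\sigma_i$ carries one independent sign \emph{per matrix entry} (the ``$\alg$-valued random variables each of whose element is the Rademacher variable'' of Lemma~\ref{prop:error_rademacher}), not $\sigma_i=\epsilon_i I$. Consequently your proposed choice $\sigma_i=\epsilon_i I$ (so that $C=1$) does not bridge the scalar symmetrized sum $\tfrac1n\sum_i\epsilon_i\opn{tr}\vert f(x_i)-y_i\vert_{\alg}^2$ to $\hat R(\mcl{F},\boldsymbol\sigma,\mathbf{x})$: there is no contraction principle that reduces a Lipschitz function of the whole matrix $f(x_i)$ to a single common sign per sample, and the $\epsilon_iI$-type complexity is in general too small to dominate the loss class. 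The correct chain is the paper's: bound the traced symmetrized loss class by $L\,\mr{E}\big[\sup_f\opn{tr}\vert\tfrac1n\sum_i f(x_i)^*\sigma_i\vert_{\alg}\big]$ with entrywise-sign matrices $\sigma_i$ (via Maurer's theorem together with the trace/H\"older step $\opn{tr}(u^*v)\le\Vert u\Vert_{\alg}\Vert v\Vert_{\mr{HS}}$ that produces $L$), and only then apply Proposition~\ref{prop:A_valued_complexity} to those $\sigma_i$ and take the trace.

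A secondary issue: the statement carries explicit constants, $2LB\sqrt{D}p/\sqrt{n}$ and $3\sqrt{2D}p\sqrt{\log(2/\delta)/n}$, and your plan defers all constant tracking. In particular, your bounded-differences constant, derived from $\Vert g\Vert_{\alg}\le L^2/8$, is of order $pL^2/n$, whereas the paper's Lemma~\ref{prop:generalization_err} runs McDiarmid with the constant $2\sqrt{D}p/n$ to produce exactly the $3\sqrt{2D}p\sqrt{\log(2/\delta)/n}$ term; ``absorbing constants at the end'' will not by itself reproduce the stated inequality, so this bookkeeping has to be carried out explicitly rather than waved away.
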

Note that the same type of bounds is derived for RKHS~\citep[Theorem 3.3]{mohri18} and for vvRKHS~
(\citealp[Corollary 16]{huusari21}, {\citealp[Theorem 3.1]{sindwani13}, \citealp[Theorem 4.1]{sangnier16}}).
Proposition~\ref{prop:generalization_bound} generalizes them to RKHM.

To show Proposition~\ref{prop:generalization_bound}, we first evaluate the Rademacher complexity with respect to the squared loss function $(x,y)\mapsto\vert f(x)-y\vert_{\alg}^2$.
We use Theorem 3 of~\citet{maurer16} to obtain the following bound.
\begin{lemma}\label{prop:error_rademacher}
Let $s_1,\ldots,s_n$ be $\{-1,1\}$-valued Rademacher variables (i.e. independent uniform random variables taking values in $\{-1,1\}$)
and let $\sigma_1,\ldots,\sigma_n$ be i.i.d. $\alg$-valued random variables each of whose element is the Rademacher variable.
Let $\mathbf{s}=\{s_i\}_{i=1}^n$, and $\mathbf{z}=\{(x_i,y_i)\}_{i=1}^n$.
Then, we have
\begin{align*}
&\opn{tr}\hat{R}(\mcl{G}(\mcl{F}),\mathbf{s},\mathbf{z})
\le {L}\opn{tr}\hat{R}(\mcl{F},\boldsymbol\sigma,\mathbf{x}).
\end{align*}
\end{lemma}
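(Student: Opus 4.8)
The plan is to apply the vector-contraction inequality of \citet{maurer16} (their Theorem~3) with the right choice of Lipschitz maps. Recall that $\mcl{G}(\mcl{F})$ consists of the maps $(x,y)\mapsto\vert f(x)-y\vert_{\alg}^2=(f(x)-y)^*(f(x)-y)$ for $f\in\mcl{F}$, and that on the relevant domain $f(x)$ and $y$ are real $p\times p$ matrices. First I would fix the index $i$ and the sample $(x_i,y_i)$ and think of $a\mapsto\vert a-y_i\vert_{\alg}^2$ as a map from $\mathbb{R}^{p\times p}$ (identified with $\mathbb{R}^{p^2}$ with the Hilbert--Schmidt/Frobenius norm) into the space of symmetric matrices. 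The key quantitative input is that this map is Lipschitz with constant $L=2\sqrt 2(B\sqrt D+E)$ on the ball where $f(x_i)$ lives: indeed $\Vert f(x_i)\Vert_{\alg}=\Vert\langle\phi(x_i),f\rangle_k\Vert\le\Vert\phi(x_i)\Vert_k\Vert f\Vert_k\le\sqrt{\Vert k(x_i,x_i)\Vert_{\alg}}\,B\le B\sqrt D$, so $\Vert f(x_i)-y_i\Vert_{\alg}\le B\sqrt D+E$, and from $\vert a-y\vert^2-\vert a'-y\vert^2=(a-a')^*(a-y)+(a'-y)^*(a-a')$ one gets the stated Lipschitz bound (the factor $2\sqrt2$ absorbing the passage between operator norm and Frobenius norm on $p\times p$ matrices, and the symmetrization).

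Next I would set up the componentwise/trace reading of the $\alg$-valued Rademacher complexity so that \citet[Thm.~3]{maurer16} literally applies. The quantity $\opn{tr}\hat R(\mcl{G}(\mcl{F}),\mathbf{s},\mathbf{z})$ is $\mathrm{E}[\,\opn{tr}\,\sup_{g\in\mcl{G}(\mcl{F})}\vert\frac1n\sum_i g(x_i,y_i)^*\sigma_i\vert_{\alg}\,]$; since every $g(x_i,y_i)$ is a positive matrix and we are taking traces, I would bound $\opn{tr}\vert\sum_i g_i^*\sigma_i\vert$ by (a constant times) $\sum_{k,l}$ of the scalar process obtained by reading off the $(k,l)$ matrix entries, which is exactly the ``vector-valued'' Rademacher complexity that the Maurer contraction principle controls. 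Concretely, writing the $(k,l)$-entry of $g(x,y)$ as a function of the vector $\mathrm{vec}(f(x))\in\mathbb{R}^{p^2}$ that is $L$-Lipschitz, Maurer's theorem gives $\mathrm{E}\,\sup_f\sum_i s_i\,[g(x_i,y_i)]_{kl}\le\sqrt2\,L\,\mathrm{E}\,\sup_f\sum_{i,m}\sigma_{i,m}[f(x_i)]_{m}$, where the doubly-indexed $\sigma_{i,m}$ are exactly the scalar entries of the $\alg$-valued Rademacher variables $\sigma_i$. Re-assembling the right-hand side over $k,l$ and recognizing the inner object as $\opn{tr}\hat R(\mcl{F},\boldsymbol\sigma,\mathbf{x})$ (up to the same entrywise identification) yields $\opn{tr}\hat R(\mcl{G}(\mcl{F}),\mathbf{s},\mathbf{z})\le L\,\opn{tr}\hat R(\mcl{F},\boldsymbol\sigma,\mathbf{x})$.

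The main obstacle I anticipate is the bookkeeping in the second step: \citet{maurer16} is stated for $\mathbb{R}$-valued Rademacher sums and vector-valued hypotheses, whereas here both the loss and the hypotheses are matrix-valued and the complexity is defined through $\vert\cdot\vert_{\alg}$ and then a trace. Getting the constants to collapse to exactly $L$ (and not $L$ times some $p$-dependent factor) requires being careful that the Frobenius norm on $\mathbb{R}^{p\times p}$ is the right norm to use both on the domain side (where $\mathrm{vec}(f(x))$ lives) and on the target side (where the Maurer contraction spits out a sum over an extra coordinate), so that the $p$'s that appear are already baked into the Lipschitz constant $L$ and into the definition of $\sigma_i$ having $p^2$ scalar Rademacher entries. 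I would also double-check measurability/integrability so that the Bochner integral $\mathrm{E}[\cdot]$ commutes with $\opn{tr}$ and with the suprema (the latter handled exactly as in the scalar case via separability of $\mcl{F}$ in $\Vert\cdot\Vert_k$). Everything else — the Lipschitz estimate, the norm bound $\Vert f(x_i)\Vert_{\alg}\le B\sqrt D$ via the reproducing property and Cauchy--Schwarz in $\modu_k$ — is routine.
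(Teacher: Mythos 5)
Your overall strategy coincides with the paper's: bound $\Vert f(x_i)\Vert_{\alg}\le B\sqrt{D}$ via the reproducing property and Cauchy--Schwarz, use the difference-of-squares factorization $\vert a-y\vert_{\alg}^2-\vert a'-y\vert_{\alg}^2=(a+a'-2y)^*(a-a')$ to get a Lipschitz estimate tied to $L=2\sqrt{2}(B\sqrt{D}+E)$, and then invoke Maurer's vector-contraction inequality (his Theorem 3) with the Hilbert--Schmidt structure on $\mathbb{R}^{p\times p}$. However, the step where you actually apply Maurer is where your proposal has a genuine gap, and it is exactly the obstacle you flag without resolving. You propose to apply the contraction separately to each $(k,l)$-entry of the matrix-valued loss and then sum over entries. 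Since the supremum over $f$ sits inside each of those expectations, summing $p^2$ (or even just the $p$ diagonal) applications each of which already produces the full term $\mathrm{E}\sup_f\sum_{i,m}\sigma_{i,m}[f(x_i)]_m$ yields a bound of order $Lp^2$ (resp.\ $Lp$) times $\opn{tr}\hat{R}(\mcl{F},\boldsymbol\sigma,\mathbf{x})$, not $L$ times it; the per-entry Lipschitz constants do not shrink by $1/p$ to compensate. So the entrywise route, as described, does not give the stated inequality.

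The paper avoids this by taking the trace \emph{before} contracting: it applies Maurer once, with the scalar functions $\psi_i(f)=\opn{tr}(\vert f(x_i)-y_i\vert_{\alg}^2)$ and $\phi_i(f)=(L/\sqrt{2})f(x_i)$, using the factorization above to show each $\psi_i$ is $(L/\sqrt{2})$-Lipschitz in $f(x_i)$ with respect to $\Vert\cdot\Vert_{\mr{HS}}$. The contraction then produces the single quantity $\mathrm{E}\big[\sup_f\frac{1}{n}\sum_i\bracket{\sigma_i,f(x_i)}_{\mr{HS}}\big]$, and since $\sum_i\bracket{\sigma_i,f(x_i)}_{\mr{HS}}=\opn{tr}\big(\sum_i f(x_i)^*\sigma_i\big)\le\opn{tr}\big\vert\sum_i f(x_i)^*\sigma_i\big\vert_{\alg}$ (the auxiliary lemma $\opn{tr}(a)\le\opn{tr}(\vert a\vert_{\alg})$ for real matrices), this is dominated by $\opn{tr}\hat{R}(\mcl{F},\boldsymbol\sigma,\mathbf{x})$ with no extra $p$-factor. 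To repair your argument you should replace the entrywise bookkeeping by this single trace-level application; note also that your remark that the outer $\vert\cdot\vert_{\alg}$ can be dropped ``since every $g(x_i,y_i)$ is positive'' is not justified as stated (the signed sum $\sum_i s_i g(x_i,y_i)$ is not positive) --- the paper in effect works with the traced quantity before the absolute value, which is also the form actually used in the subsequent generalization-error lemma.
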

Next, we use Theorem 3.3 of \citet{mohri18} to derive an upper bound of the generalization error.
\begin{lemma}\label{prop:generalization_err}
Let $z:\Omega\to\mcl{X}\times \mcl{Y}$ be a random variable and let $g\in\mcl{G}(\mcl{F})$.
Under the same notations and assumptions as Proposition~\ref{prop:error_rademacher}, for any $\delta\in (0,1)$, with probability $\ge 1-\delta$, we have
\begin{align*}
&\opn{tr}\bigg(\mr{E}[g(z)]-\frac{1}{n}\sum_{i=1}^ng(x_i,y_i)\bigg)
\le 2\opn{tr}\hat{R}(\mcl{G}(\mcl{F}),\mathbf{s},\mathbf{z})+3\sqrt{2D}p\sqrt{\frac{\log({2}/{\delta})}{n}}.
\end{align*}
\end{lemma}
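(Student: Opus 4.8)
The plan is to derive Lemma~\ref{prop:generalization_err} directly from the scalar McDiarmid-based generalization bound of \citet[Theorem 3.3]{mohri18}, applied coordinate-by-coordinate in the following sense: the quantity we want to bound, $\opn{tr}(\mr{E}[g(z)]-\frac1n\sum_i g(x_i,y_i))$, is a \emph{real-valued} average over the sample $\mathbf z$, because for $f(x)\in\mathbb{R}^{p\times p}$ and $y\in\mcl{Y}$ the loss $g(x,y)=\vert f(x)-y\vert_{\alg}^2=(f(x)-y)^*(f(x)-y)$ is a positive semidefinite real matrix and hence $\opn{tr}\,g(x,y)\ge 0$ and bounded. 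So first I would check that $\{(x,y)\mapsto \opn{tr}\,g(x,y):g\in\mcl{G}(\mcl{F})\}$ is a uniformly bounded family of $[0,M]$-valued functions, where $M$ can be read off from $\Vert f(x)-y\Vert_{\alg}\le B\sqrt D+E$ together with $\opn{tr}\,a\le p\Vert a\Vert_{\alg}$ for $a\in\mathbb{C}^{p\times p}$ positive; this gives $M\le p(B\sqrt D+E)^2$, which is the source of the $3\sqrt{2D}\,p$ term once the constant is matched to $L$ and $D$ as in the statement.

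Next I would invoke the standard one-sided uniform-convergence bound: for a family of $[0,M]$-valued functions, with probability $\ge 1-\delta$, $\mr{E}[\varphi(z)]-\frac1n\sum_i\varphi(z_i)\le 2\hat{\mathfrak R}_n(\varphi\text{-class})+3M\sqrt{\log(2/\delta)/(2n)}$, where $\hat{\mathfrak R}_n$ is the ordinary (scalar) empirical Rademacher complexity with $\{\pm1\}$ variables $s_i$. Applying this to $\varphi = \opn{tr}\circ g$ and observing that, by linearity of the trace and of expectation, the scalar empirical Rademacher complexity of $\{\opn{tr}\circ g\}$ equals $\opn{tr}\,\hat R(\mcl{G}(\mcl{F}),\mathbf s,\mathbf z)$ — this is exactly the content of pushing the trace through the $\sup$/$\mr E$ defining the $\alg$-valued complexity, using that $\opn{tr}$ is linear, positive, and continuous, so $\opn{tr}\,\mr E[\sup\,|\cdot|_\alg] = \mr E[\opn{tr}\sup\,|\cdot|_\alg]$ and $\opn{tr}|a|_\alg = \opn{tr}(a^*a)^{1/2}$ is the relevant scalar norm — yields precisely the claimed inequality, after absorbing the $\sqrt{\log(2/\delta)/(2n)}$ versus $\sqrt{\log(2/\delta)/n}$ discrepancy into the constant $3$ (note $3M/\sqrt2 \le 3\sqrt{2D}\,p$ needs $M\le 2Dp$, or else the constant $3$ should be traced more carefully from Mohri's statement; I would align the exact normalization of $\hat R$ used in \citet{mohri18} so that the bounded-differences constant comes out as $\sqrt{2D}\,p$ rather than $M$).

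The one genuine subtlety — and the step I expect to be the main obstacle — is justifying that the trace commutes with the supremum over $f\in\mcl{F}$ and with the Bochner expectation in the definition of $\hat R$. Commuting $\opn{tr}$ with $\mr E$ is fine because $\opn{tr}$ is a bounded linear functional and Bochner integration commutes with bounded linear maps. Commuting $\opn{tr}$ with $\sup_{f}$ is \emph{not} automatic for the $\alg$-valued (order-theoretic) supremum, since the $\alg$-valued empirical Rademacher complexity is defined via $|\cdot|_\alg$ inside the expectation but the ``$\sup$'' over $\mcl F$ in its statement is really a supremum of self-adjoint operators in the Löwner order, which may not exist or may not be respected by $\opn{tr}$. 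The resolution I would use is that $\opn{tr}$ is monotone with respect to $\le_\alg$, so $\opn{tr}\sup_f(\cdot)\ge\sup_f\opn{tr}(\cdot)$ always, and one only needs the inequality in the direction that makes the final bound an upper bound; combined with the fact that in the relevant expressions the ``sup'' is attained (the feasible set $\mcl F$ is a ball and the map is continuous, as already used implicitly in Proposition~\ref{prop:A_valued_complexity}), equality holds. I would state this carefully, reducing everything to the scalar identity $\opn{tr}\,\hat R(\mcl G(\mcl F),\mathbf s,\mathbf z) = \mathfrak R_n(\{\opn{tr}\circ g : g\in\mcl G(\mcl F)\})$ and then quoting \citet[Theorem 3.3]{mohri18} verbatim for the scalar class.
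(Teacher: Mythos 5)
Your overall strategy---push the trace through and reduce to the scalar Rademacher bound of \citet[Theorem 3.3]{mohri18}---is the same reduction the paper uses, but the step you explicitly defer is exactly the step that carries the content of the lemma, and your proposed fix for it does not work. If you quote Mohri's theorem as a black box for the $[0,M]$-valued class $\{(x,y)\mapsto\opn{tr}g(x,y)\}$ with $M\le p(B\sqrt{D}+E)^2$, the deviation term you obtain is $3M\sqrt{\log(2/\delta)/(2n)}$, and matching the stated term $3\sqrt{2D}p\sqrt{\log(2/\delta)/n}=3\cdot 2\sqrt{D}p\sqrt{\log(2/\delta)/(2n)}$ would require $(B\sqrt{D}+E)^2\le 2\sqrt{D}$, which fails in general (your own check, up to a small algebra slip: the condition is $M\le 2\sqrt{D}p$, not $M\le 2Dp$). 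No ``realignment of the normalization of $\hat{R}$'' can repair this: the McDiarmid constant is dictated by how much each sample can change the statistic, not by any normalization convention. The paper instead applies McDiarmid directly to $\Phi(S)=\opn{tr}\big(\sup_{g\in\mcl{G}(\mcl{F})}(\mr{E}[g(z)]-\frac1n\sum_i g(z_i))\big)$ and bounds the coordinate differences at the matrix level, $\Phi(S)-\Phi(S_i)\le\frac1n\opn{tr}\big(\sup_g(g(z_i)-g(z_i'))\big)\le\frac{p}{n}\sup_g\Vert g(z_i)-g(z_i')\Vert_{\alg}\le\frac{2\sqrt{D}p}{n}$, using its auxiliary lemmas on monotonicity of the trace and on $\opn{tr}(\sup)=\sup(\opn{tr})$; only after that does it follow Mohri's proof with $g$ replaced by $\opn{tr}\circ g$. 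So to prove the lemma as stated you must carry out this bounded-differences estimate (or an equivalent sharper range bound) yourself; as written, your argument proves a weaker inequality with a different constant.

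A secondary, more minor point: you do not need, and should not claim, the identity $\opn{tr}\hat{R}(\mcl{G}(\mcl{F}),\mathbf{s},\mathbf{z})$ equals the scalar empirical Rademacher complexity of $\{\opn{tr}\circ g\}$. Equality generally fails, because $\frac1n\sum_i s_i\,g(z_i)$ is self-adjoint but not positive, and $\opn{tr}(a)\le\opn{tr}\vert a\vert_{\alg}$ is typically strict; attainment of the supremum over the ball $\mcl{F}$ does not rescue equality. What the argument needs, and what the paper proves, is only the one-sided bound $\mr{E}\big[\sup_f\frac1n\sum_i s_i\opn{tr}\vert f(x_i)-y_i\vert_{\alg}^2\big]\le\opn{tr}\hat{R}(\mcl{G}(\mcl{F}),\mathbf{s},\mathbf{z})$, obtained from linearity of the trace, $\opn{tr}(a)\le\opn{tr}\vert a\vert_{\alg}$ for real matrices, and the trace--supremum and trace--expectation exchanges in the direction compatible with an upper bound. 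Your own remark that ``one only needs the inequality in the right direction'' is the correct resolution; the subsequent claim of equality should simply be dropped.
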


\subsection{Computational Complexity}
As mentioned at the beginning of this section, we need to compute $(\bG+\lambda I)^{-1}\by$ for a Gram matrix $\bG\in\alg^{n\times n}$ and a vector $y\in\alg^n$ for solving the minimization problem~\eqref{eq:min_prob}.
When $\alg=\mathbb{C}^{p\times p}$, we have $\alg^{n\times n}=\mathbb{C}^{np\times np}$, and $\mathbf{G}$ is huge if $n$, the number of samples, or $p$, the dimension of $\alg$, is large.
If we construct the $np$ by $np$ matrix explicitly and compute $(\bG+\lambda I)^{-1}\by$ with a direct method such as Gaussian elimination and back substitution (for example, see \citealt{treffethen97}), the computational complexity is $O(n^3p^3)$.
However, if $\mcl{X}=\alg_1^d$, $\alg_1\subsetneq \alg_2$, and parameters in the positive definite kernel have a specific structure, then we can reduce the computational complexity.
For example, {applying the fast Fourier transform, we can compute a multiplication of the DFT matrix $F$ and a vector with $O(p\log p)$~\citep{vanloan92}}.
\todo[disable]{not clear, maybe just remove 'and the following propositions are derived'}
Let $\alg_1=C^*(\mathbb{Z}/p\mathbb{Z})$ and let $\alg_2=\mathbb{C}^{p\times p}$.
Let $k$ be an $\alg_1$ or $\alg_2$-valued positive definite kernel defined in Definition~\ref{def:A-valued_kernel}.
\begin{proposition}\label{prop:comp_complexity_A0}
For $a_{i,j}\in\alg_1$,
the computational complexity for computing $(\bG+\lambda I)^{-1}\by$ by direct methods {for solving linear systems of equations} is $O(np^2\log p+n^3p)$.
\end{proposition}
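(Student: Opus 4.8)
The plan is to exploit the hypothesis $a_{i,j}\in\alg_1$ to show that every entry of the Gram matrix $\bG$ is a circulant matrix, so that $\bG+\lambda I$, viewed as a complex $np\times np$ matrix, is block-circulant and can be simultaneously block-diagonalized by a single discrete Fourier transform; after that the cost is just two sets of fast Fourier transforms plus $p$ decoupled $n\times n$ direct solves.

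First I would verify the circulant structure. Since $\mcl{X}\subseteq\alg_1^d$, each component $x_i$ of a sample lies in $\alg_1=C^*(\mathbb{Z}/p\mathbb{Z})$, which is closed under products and the involution; hence each kernel of Definition~\ref{def:A-valued_kernel} (taking an $\alg_1$-valued measure in the Gaussian case) only forms sums of products of elements of $\alg_1$, so $k(x_i,x_j)\in\alg_1$ for all $i,j$. By Lemma~\ref{lem:circulant_isomorphic} these values correspond to circulant matrices, and, since the identity of $\alg=\mathbb{C}^{p\times p}$ is itself circulant, $\bG+\lambda I$ is an $n\times n$ array of $p\times p$ circulant blocks.

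Next I would diagonalize. By Lemma~\ref{lem:circulant_decom} every circulant block shares the eigenbasis $F$, so $k(x_i,x_j)=F\Lambda_{ij}F^*$ with $\Lambda_{ij}$ diagonal, whence $\bG+\lambda I=(I_n\otimes F)(\Lambda+\lambda I)(I_n\otimes F^*)$ with $\Lambda=(\Lambda_{ij})_{i,j}$ and $I_n\otimes F$ unitary. A perfect-shuffle permutation $P$ regrouping the indices by Fourier mode turns $\Lambda+\lambda I$ into $\bigoplus_{m=1}^p(M_m+\lambda I_n)$, where $M_m\in\mathbb{C}^{n\times n}$ has $(i,j)$-entry the $m$-th Fourier component $\sum_{z}k(x_i,x_j)(z)\omega^{zm}$ of $k(x_i,x_j)$ (so $\bG+\lambda I$ is invertible iff each $M_m+\lambda I_n$ is). Therefore
\begin{equation*}
(\bG+\lambda I)^{-1}\by=(I_n\otimes F)\,P^{\mr{T}}\Big(\textstyle\bigoplus_{m=1}^p(M_m+\lambda I_n)\Big)^{-1}P\,(I_n\otimes F^*)\by ,
\end{equation*}
and the algorithm is: (i) apply $I_n\otimes F^*$ to $\by$; (ii) regroup by Fourier mode; (iii) solve the $p$ decoupled $n\times n$ systems $M_m+\lambda I_n$ by Gaussian elimination; (iv) undo the regrouping; (v) apply $I_n\otimes F$.

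Finally I would count. Each of the $n$ matrix-valued entries of $\by$ is a $p\times p$ matrix, and applying $F$ or $F^*$ to it costs $O(p\cdot p\log p)$ by the fast Fourier transform, so steps (i) and (v) cost $O(np^2\log p)$; the regroupings (ii) and (iv) only move $O(np^2)$ scalars; step (iii) is $p$ direct solves of $n\times n$ systems, $O(n^3)$ each, hence $O(n^3p)$; summing yields $O(np^2\log p+n^3p)$. The step I expect to be the crux is establishing that the block-circulant structure is genuinely preserved along $\bG\mapsto\bG+\lambda I\mapsto(\bG+\lambda I)^{-1}$ and that the single unitary $I_n\otimes F$ diagonalizes every block at once; once that is in place, the permutation bookkeeping and the flop count are routine.
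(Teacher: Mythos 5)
Your diagonalization route is essentially the paper's: both arguments rest on the facts that $a_{i,j}\in\alg_1$ forces $k(x_i,x_j)\in\alg_1\simeq Circ(p)$ (Lemma~\ref{lem:circulant_isomorphic}), that all circulant blocks are simultaneously diagonalized by the DFT matrix $F$ (Lemma~\ref{lem:circulant_decom}), and that the transformed system decouples into $p$ independent $n\times n$ solves; your perfect-shuffle bookkeeping is just a reformulation of the paper's Gaussian elimination over $\alg_1$-valued (i.e.\ diagonal) blocks at $O(p)$ per arithmetic operation.

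There is, however, a genuine gap in your count for step (iii), and it sits exactly where the paper invokes an assumption you never use: that the entries of $\by$ also lie in $\alg_1$. You treat each $y_i$ as an arbitrary $p\times p$ matrix, so after applying $I_n\otimes F^*$ and regrouping, the right-hand side attached to the $m$-th Fourier mode is an $n\times p$ block $B_m$, not a single vector in $\mathbb{C}^n$. Solving $(M_m+\lambda I_n)X_m=B_m$ then costs $O(n^3+n^2p)$ (one LU factorization plus $p$ forward/back substitutions), so step (iii) is $O(n^3p+n^2p^2)$ rather than $O(n^3p)$; the extra $n^2p^2$ term is not dominated by $np^2\log p+n^3p$ whenever $\log p\ll n\ll p$ (take $n\sim\sqrt{p}$), so the claimed bound does not follow from your accounting. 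The paper's proof avoids this because it assumes the entries of both $\bG$ \emph{and} $\by$ are in $\alg_1$: each transformed $y_i$ is then the diagonal matrix $\Lambda_{y_i}$, each Fourier mode sees a single right-hand-side column, the per-mode cost is genuinely $O(n^3)$ (equivalently, all arithmetic happens in the commutative algebra of diagonal matrices at $O(p)$ per multiplication, giving $O(n^3p)$), and the trailing factor $F^*$ is carried along for free. Stating and using the hypothesis $y_i\in\alg_1$ repairs your argument and makes it coincide with the paper's.
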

We can use an iteration method for linear systems, such as the conjugate gradient (CG) method~\citep{hestens52} to reduce the complexity with respect to $n$.
{Note that we need $O(np^2\log p)$ operations after all the iterations.}
\begin{proposition}\label{prop:comp_complexity_A0_CG}
For $a_{i,j}\in\alg_1$,
the computational complexity for $1$ iteration step of CG method is $O(n^2p)$.
\end{proposition}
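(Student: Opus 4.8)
The plan is to show that one CG step amounts to a single matrix--vector product with $\bG+\lambda I$ together with a bounded number of inner products and vector updates in $\alg_1^n$, and then to use the circulant structure of the blocks of $\bG$ to perform that product in $O(n^2p)$ operations. First I would recall the standard CG iteration for the Hermitian positive system $(\bG+\lambda I)\bc=\by$: each step forms one product $(\bG+\lambda I)\mathbf{d}$ with the current search direction $\mathbf{d}\in\alg_1^n$, plus $O(1)$ inner products $\sum_i d_i^*d_i'$ and scaled additions. If every vector is kept in the Fourier domain (see below), a product in $\alg_1\simeq Circ(p)$ is a pointwise product of two length-$p$ vectors (Lemma~\ref{lem:circulant_decom}), so each auxiliary operation costs $O(np)$; hence the cost of a CG step is governed by the matrix--vector product.

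Next I would analyze that product. Since $a_{i,j}\in\alg_1$ and, by Lemma~\ref{lem:circulant_isomorphic}, $\alg_1\simeq Circ(p)$ is closed under finite sums and products, every entry $k(x_i,x_j)$ of $\bG$ is a circulant $p\times p$ matrix, and adding $\lambda I_p$ (itself circulant) to the diagonal blocks preserves this. By Lemma~\ref{lem:circulant_decom}, $k(x_i,x_j)=F\Lambda_{ij}F^*$ with $\Lambda_{ij}$ diagonal, so
\begin{equation*}
\bG+\lambda I=(I_n\otimes F)\,\tilde{\bG}\,(I_n\otimes F^*),
\end{equation*}
where $\tilde{\bG}$ is the block matrix whose $(i,j)$-block is $\Lambda_{ij}$, plus $\lambda$ on the diagonal blocks. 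Permuting the $np$ coordinates so as to group them by Fourier mode turns $\tilde{\bG}$ into a block-diagonal matrix with $p$ blocks of size $n\times n$, the $m$-th block collecting the $m$-th eigenvalue $\sum_{z}k(x_i,x_j)(z)\omega^{zm}$ of each Gram entry. I would therefore conjugate the whole linear system by $I_n\otimes F$ once, at set-up time, using the FFT; these forward/backward transforms (of $\by$ and of the computed solution) are one-time costs in line with the $O(np^2\log p)$ figure quoted in the preceding remark, and are not charged to an iteration. In the transformed coordinates the per-iteration product is $\tilde{\bG}\mathbf{d}$, which decouples into $p$ independent $n\times n$ matrix--vector products, i.e.\ $O(pn^2)$ operations. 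Combining, one CG step costs $O(n^2p)+O(np)=O(n^2p)$.

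The main obstacle is the structural observation underlying the second step: that a block matrix all of whose blocks are circulant is unitarily equivalent, via $I_n\otimes F$, to a block matrix of diagonal blocks, hence --- after reordering --- to $p$ decoupled $n\times n$ systems. This is exactly where both Lemma~\ref{lem:circulant_decom} and the hypothesis $a_{i,j}\in\alg_1$ are used: if some $a_{i,j}\in\alg_2\setminus\alg_1$, the blocks of $\bG$ are full $p\times p$ matrices (cf.\ Example~\ref{ex:FC}) and no such reduction is available. The remaining points are routine but worth stating carefully: verifying that finite sums and products of circulant matrices stay circulant so that $\bG$ really has circulant blocks; confirming that the FFT-based transforms are incurred once rather than per iteration, consistent with the surrounding discussion and with Proposition~\ref{prop:comp_complexity_A0}; and noting that in a $C^*$-module CG the step parameters may be $\alg_1$-valued rather than scalar, which affects none of the asymptotics.
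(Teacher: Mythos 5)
Your argument is correct and follows the route the paper intends (its proof of this proposition is left implicit, but parallels the proofs of Propositions~\ref{prop:comp_complexity_A0} and~\ref{prop:comp_complexity_A0_A}): one CG step costs one product $(\bG+\lambda I)\mathbf{b}$ with $\mathbf{b}\in\alg_1^n$, and after the one-time FFT diagonalization each block product is a pointwise product of $p$ eigenvalues, i.e.\ $O(p)$, giving $O(n^2p)$ per iteration with the $O(np^2\log p)$ transforms charged only once. Your explicit reordering into $p$ decoupled $n\times n$ systems is just a tidy restatement of the same diagonal-block computation.
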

\begin{proposition}\label{prop:comp_complexity_A0_A}
Let $a_{i,j}\in\alg_2$ whose number of nonzero elements is $O(p\log p)$.
Then, the computational complexity for $1$ iteration step of CG method is $O(n^2p^2\log p)$.
\end{proposition}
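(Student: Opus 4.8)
The plan is to reduce one CG iteration to a single matrix–vector product against $\bG+\lambda I$ and then bound the cost of that product by exploiting the factored structure of every Gram–matrix block. Recall that the unknown in \eqref{eq:min_prob} is $\bc=[c_1,\dots,c_n]^T$ with $c_j\in\alg=\mathbb{C}^{p\times p}$, so $\bc$ and $\by$ may be identified with matrices in $\mathbb{C}^{np\times p}$ and the system $(\bG+\lambda I)\bc=\by$ is solved by applying CG to each of the $p$ columns (equivalently, block CG). A single iteration then consists of one evaluation of $\bc\mapsto(\bG+\lambda I)\bc$ together with an $O(1)$ number of inner products and \textsc{axpy}-type updates on vectors in $\mathbb{C}^{np}$, performed for each of the $p$ columns; the latter cost $O(np^2)$ in total, which will turn out to be of lower order. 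Hence it suffices to show that computing $((\bG+\lambda I)\bc)_s=\lambda c_s+\sum_{t=1}^n k(x_s,x_t)c_t$ for all $s=1,\dots,n$ can be done in $O(n^2p^2\log p)$.

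The key observation is that one should never form the $np\times np$ matrix $\bG$, nor even the dense $p\times p$ blocks $k(x_s,x_t)$, but instead keep each block in the factored form dictated by Definition~\ref{def:A-valued_kernel}. For the linear and polynomial kernels, $k(x_s,x_t)$ is a product of a number of factors bounded in terms of $d$ (and, for the polynomial kernel, $q$), and each factor is either a circulant component of one of the inputs $x_s,x_t\in\alg_1^d$ (recall $\alg_1\simeq Circ(p)$ by Lemma~\ref{lem:circulant_isomorphic}) or one of the parameters $a_{i,j}\in\alg_2$. A circulant matrix, by Lemma~\ref{lem:circulant_decom}, equals $F\Lambda F^*$, so it acts on a vector in $O(p\log p)$ via the FFT; a parameter $a_{i,j}$ has, by hypothesis, $O(p\log p)$ nonzero entries, hence also acts on a vector in $O(p\log p)$. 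Applying the factors one after another to a vector — crucially, \emph{without} ever multiplying the factor matrices together, which would destroy sparsity and cost $O(p^3)$ — therefore computes $v\mapsto k(x_s,x_t)v$ in $O(p\log p)$ per $v\in\mathbb{C}^p$, and applying this to each of the $p$ columns of $c_t$ yields $k(x_s,x_t)c_t$ in $O(p^2\log p)$. (Diagonalizations of the circulant inputs can be precomputed once; even recomputing them costs only $O(p\log p)$ per vector application and does not change the bound.)

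It then remains to assemble. For each fixed $s$ one forms the $n$ products $k(x_s,x_t)c_t$, $t=1,\dots,n$, each in $O(p^2\log p)$, sums them in $O(np^2)$, and adds $\lambda c_s$ in $O(p^2)$, for a total of $O(np^2\log p)$; over the $n$ values of $s$ this gives $O(n^2p^2\log p)$ for the matrix–vector product, which dominates the $O(np^2)$ spent on the remaining CG arithmetic. This yields the claimed bound for the linear and polynomial kernels. For the Gaussian kernel, I would first replace the integral over $\Omega$ by a quadrature with a bounded number of atoms and then note that each resulting matrix exponential, applied to a vector through a suitably truncated power series, still costs $O(p\log p)$ per vector provided the weights $\mu(\{\omega\})$ and the maps $a_{i,1}(\omega),a_{i,2}(\omega)$ inherit the circulant / $O(p\log p)$-sparse structure; the rest of the count is then identical.

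The main obstacle is the book-keeping in the second step: one must carefully argue that a product of a bounded number of circulant and $O(p\log p)$-sparse matrices acts on a vector in $O(p\log p)$ by successive application — in particular that the dense intermediate vectors do not inflate the cost — rather than naively bounding the cost of forming each block. The Gaussian case is the other delicate point, since the matrix exponential is not itself a product of structured factors and genuinely requires the discretization-plus-truncation argument sketched above, together with the (implicit) assumption that the exponent is applied to vectors rather than exponentiated as a full matrix.
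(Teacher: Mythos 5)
Your proposal is correct and follows essentially the same route as the paper's proof: reduce one CG iteration to the block matrix--vector product $(\bG+\lambda I)\mathbf{b}$ with $\mathbf{b}\in\alg_2^n$, and bound each of the $n^2$ block applications $k(x_s,x_t)b$ by $O(p^2\log p)$ by applying the circulant inputs via the FFT and the $O(p\log p)$-sparse parameters $a_{i,j}$ factor by factor, never forming the dense blocks. Your extra discussion of the Gaussian kernel goes beyond the paper, whose proof tacitly treats only the factored (linear/polynomial-type) case, but it does not change the argument.
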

\begin{remark}
If we do not use the structure of $\alg_1$, then the computational complexities in Propositions~\ref{prop:comp_complexity_A0}, \ref{prop:comp_complexity_A0_CG}, and \ref{prop:comp_complexity_A0_A} are $O(n^3p^3)$, $O(n^2p^3)$, and $O(n^2p^3)$, respectively.
\end{remark}

%
%
\addHK{In the case of RKHSs, techniques such as the random Fourier feature have been proposed to alleviate the computational cost of kernel methods~\citep{rahimi07}. It could be interesting to inspect how to further reduce the computational complexity of learning in RKHM using random feature approximations for $C^*$-algebra-valued kernels; this is left for future work.}

\section{CONNECTION WITH EXISTING METHODS }\label{sec:existing}
\subsection{Connection with Convolutional Neural Network}\label{subsec:cnn}
Convolutional neural network (CNN) has been one of the most successful methods \addHK{for} analyzing image data~\citep{lecun98,li21}.
We investigate the connection of the supervised learning problem \addHK{in RKHM} 
with CNN.
In this subsection, we set $\mcl{X}\subseteq\red{\alg_1}=C^*(\mathbb{Z}/p\mathbb{Z})$
and $\alg_2=\mathbb{C}^{p\times p}$.
Since the product in $C^*(\mathbb{Z}/p\mathbb{Z})$ is characterized by the convolution, our framework with a specific $\alg_1$-valued positive definite kernel enables us to reconstruct a similar model as the CNN.

We first provide an $\alg_1$-valued positive definite kernel related to the CNN.
\begin{proposition}\label{prop:cnn_A0valued_kernel}
For $a_1,\ldots,a_L,b_1,\ldots,b_L\in\alg_1$ and $\sigma_1,\ldots,\sigma_L:\alg_1\to\alg_1$ each of which has an expansion $\sigma_j(x)=\sum_{l=1}^{\infty}\alpha_{j,l}x^l$ with $\alpha_{j,l}\ge 0$, 
let $\hat{k}:\mcl{X}\times\mcl{X}\to\alg_1$ be defined as 
\begin{align}
\hat{k}(x,y)=&\sigma_L(b_L^*b_L+\sigma_{L-1}(b_{L-1}^*b_{L-1}+\cdots
+\sigma_2(b_2^*b_2+\sigma_1(b_1^*b_1+x^*a_1^*a_1y)a_2^*a_2)\cdots
 a_{L-1}^*a_{L-1})a_L^*a_L).\label{eq:conv_pdk}
\end{align}
Then, $\hat{k}$ is an $\alg_1$-valued positive definite kernel.
\end{proposition}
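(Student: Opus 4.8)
\quad
The plan is to diagonalize the circulant picture of $\alg_1$, so that the $\alg_1$-valued positive definiteness of $\hat k$ splits into $p$ independent, \emph{scalar-valued} positive-definiteness statements. By Lemmas~\ref{lem:circulant_isomorphic} and~\ref{lem:circulant_decom}, $\alg_1=C^*(\mathbb{Z}/p\mathbb{Z})$ is $C^*$-isomorphic to $Circ(p)$, with $x\in\alg_1$ corresponding to the circulant matrix $F\Lambda_x F^*$. For $n\in\{0,\ldots,p-1\}$ let $\chi_n:\alg_1\to\mathbb{C}$, $\chi_n(x)=\sum_{z\in\mathbb{Z}/p\mathbb{Z}}x(z)\omega^{zn}$, be the $n$-th eigenvalue map appearing in Lemma~\ref{lem:circulant_decom}; one checks directly that each $\chi_n$ is a continuous, surjective $*$-homomorphism, and that $a\in\alg_1$ is positive if and only if $\chi_n(a)\ge 0$ for every $n$ (a circulant $F\Lambda_a F^*$ is positive semidefinite iff every diagonal entry of $\Lambda_a$ is nonnegative, and a positive semidefinite circulant $F\Lambda F^*$ equals $b^*b$ for the circulant $b=F\Lambda^{1/2}F^*$; positivity is intrinsic to the $C^*$-subalgebra $\alg_1$).

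First I would record the reduction this yields: $k:\mcl{X}\times\mcl{X}\to\alg_1$ is an $\alg_1$-valued positive definite kernel provided that for every $n$ the scalar map $k_n:=\chi_n\circ k$ is a classical positive definite kernel. Indeed, applying the $*$-homomorphism $\chi_n$ to $\sum_{i,j}c_i^*k(x_i,x_j)c_j$ gives $\sum_{i,j}\overline{\chi_n(c_i)}\,k_n(x_i,x_j)\,\chi_n(c_j)$, which is $\ge 0$ when $k_n$ is positive definite; since $\chi_n$ is surjective and $a\in\alg_1$ is positive iff all $\chi_n(a)\ge 0$, this gives $\sum_{i,j}c_i^*k(x_i,x_j)c_j\ge_{\alg_1}0$, and the Hermitian condition $k(x,y)=k(y,x)^*$ follows the same way from $\overline{k_n(x,y)}=k_n(y,x)$.

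Second, I would compute $k_n=\chi_n\circ\hat k$. Since $\chi_n$ is a continuous $*$-homomorphism it intertwines products, adjoints and sums, and, by continuity, acts term by term on the (globally defined, hence norm-convergent) series $\sigma_j$. Writing $\tilde\sigma_j(t)=\sum_{l=1}^\infty\alpha_{j,l}t^l$ for the corresponding scalar function (still with nonnegative coefficients), $\beta_j:=|\chi_n(b_j)|^2\ge 0$, $\alpha_j:=|\chi_n(a_j)|^2\ge 0$, and $u(x):=\chi_n(a_1)\chi_n(x)$, the nested expression~\eqref{eq:conv_pdk} collapses to the scalar recursion
\begin{equation*}
\tilde s_1(x,y)=\tilde\sigma_1\big(\beta_1+\overline{u(x)}\,u(y)\big),\qquad \tilde s_j(x,y)=\tilde\sigma_j\big(\beta_j+\alpha_j\,\tilde s_{j-1}(x,y)\big)\ \ (j=2,\ldots,L),
\end{equation*}
with $k_n=\tilde s_L$ (the dependence of $\alpha_j,\beta_j,u$ on $n$ being suppressed).

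Third, I would prove by induction on $j$ that each $\tilde s_j$ is a classical positive definite kernel, using only standard closure properties of the cone of scalar positive definite kernels: $(x,y)\mapsto\overline{u(x)}u(y)$ is positive definite; adding a nonnegative constant and multiplying by a nonnegative constant preserve positive definiteness; and applying $\tilde\sigma_j$, i.e.\ forming $\sum_{l\ge1}\alpha_{j,l}K^{l}$ with $K^{l}$ the $l$-fold Schur power of a kernel $K$, preserves positive definiteness, since positive definite kernels are closed under nonnegative linear combinations, under Schur products (Schur product theorem), and under pointwise limits of the partial sums. Taking $j=L$ shows every $k_n$ is positive definite, and the reduction of the first step gives that $\hat k$ is an $\alg_1$-valued positive definite kernel. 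No single step here is hard; the places to be careful are (i) the equivalence between positivity in $\alg_1$ and coordinatewise nonnegativity of the $\chi_n$, which rests on positivity being intrinsic to $C^*$-subalgebras together with the circulant spectral decomposition, and (ii) that $\chi_n$ commutes with the series $\sigma_j$, which follows from continuity of $\chi_n$ and global convergence of $\sigma_j$ on $\alg_1$. Everything else is routine manipulation of $*$-homomorphisms and of the classical theory of positive definite kernels, with the conceptual content entirely in the diagonalization of the first two steps.
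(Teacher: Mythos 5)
Your proof is correct, but it takes a genuinely different route from the paper's. The paper argues directly at the level of $\alg_1$-valued kernels: it first shows, in one line, that composing a positive definite $\alg_1$-valued kernel $l$ with $\sigma(x)=\sum_{s}\alpha_s x^s$, $\alpha_s\ge 0$, preserves positive definiteness via $\sum_{i,j}d_i^*\sigma(l(x_i,x_j))d_j=\sum_s\alpha_s\sum_{i,j}d_i^*l(x_i,x_j)^sd_j\ge_{\alg_1}0$, and then builds $\hat k$ up layer by layer from the positive definite kernel $(x,y)\mapsto b_1^*b_1+x^*a_1^*a_1y$, using that adding $b_j^*b_j$ and multiplying by $a_j^*a_j$ keep positive definiteness. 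You instead push everything through the characters $\chi_n$ of the commutative algebra $C^*(\mathbb{Z}/p\mathbb{Z})\simeq Circ(p)$ (the DFT diagonalization of Lemmas~\ref{lem:circulant_decom} and~\ref{lem:circulant_isomorphic}), reducing the claim to $p$ scalar positive-definiteness statements settled by the classical Schur product theorem together with closure under nonnegative combinations and pointwise limits. What your route buys is an explicit justification of exactly the step the paper leaves implicit: that $(x,y)\mapsto l(x,y)^s$ is again positive definite, which for operator-valued kernels is not automatic and here rests on the commutativity of $\alg_1$; your character argument is the rigorous version of that fact, and it is legitimate because the proposition is stated in the setting $\alg_1=C^*(\mathbb{Z}/p\mathbb{Z})$. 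What the paper's route buys is brevity and an argument phrased entirely inside the $C^*$-algebra, without invoking the Gelfand/circulant structure, though its key inequality tacitly uses the same commutativity you make explicit. Two cosmetic points in your write-up: the symbol $\alpha_j=\vert\chi_n(a_j)\vert^2$ collides with the series coefficients $\alpha_{j,l}$, and for the Hermitian condition $\hat k(x,y)=\hat k(y,x)^*$ what you need is that the family $\{\chi_n\}_{n}$ separates points of $\alg_1$ (immediate from the eigenvalue decomposition in Lemma~\ref{lem:circulant_decom}), not surjectivity of each $\chi_n$; neither affects correctness.
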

Using the positive definite kernel~\eqref{eq:conv_pdk}, the solution $f$ of the problem~\eqref{eq:min_prob} is written as 
\begin{align}
f(x)=&\sum_{i=1}^n\sigma_L(b_L^*b_L+\sigma_{L-1}(b_{L-1}^*b_{L-1}+\cdots
+\sigma_2(b_2^*b_2+\sigma_1(b_1^*b_1+x^*a_1^*a_1x_i)a_2^*a_2)\cdots  a_{L-1}^*a_{L-1})a_L^*a_L)c_i,\label{eq:cnn}
\end{align}
for some $c_i\in\alg_1$.
We regard $a_1^*a_1x_i$ and $a_j^*a_j$ for $j=2,\ldots,L$ as convolutional filters, $b_j^*b_j$ for $j=1,\ldots,L$ as biases, and $\sigma_j$ for $j=1,\ldots,L$ as activation functions.
Then, optimizing $a_1,\ldots,a_L,b_1,\ldots,b_L$ simultaneously with $c_i$ corresponds to learning the CNN of the form~\eqref{eq:cnn}.

The following proposition shows that the $C^*$-algebra-valued polynomial kernel defined in Definition~\ref{def:A-valued_kernel} is general enough to represent the $\alg_1$-valued positive definite kernel $\hat{k}$, related to the CNN.
Therefore, by applying $\alg_2$-valued polynomial kernel, not $\alg_1$-valued polynomial kernel, we can go beyond the method with the convolution.
\begin{proposition}\label{prop:conv_poly}
The $\alg_1$-valued positive definite kernel $\hat{k}$ defined as Eq.~\eqref{eq:conv_pdk} is composed of the sum of $\alg_1$-valued polynomial kernels.
\end{proposition}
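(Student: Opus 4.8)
The plan is to expand every activation $\sigma_j$ as a power series, push the parameters $a_j,b_j$ past the data factors using the commutativity of $\alg_1=C^*(\mathbb{Z}/p\mathbb{Z})$, and thereby rewrite $\hat k$ from Eq.~\eqref{eq:conv_pdk} as a (norm-convergent) sum $\hat k(x,y)=\sum_{m\ge 0}s_m\,(x^*y)^m$ with positive coefficients $s_m\in(\alg_1)_+$; each summand is then an $\alg_1$-valued polynomial kernel in the sense of Definition~\ref{def:A-valued_kernel}.

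Two elementary facts about the commutative $C^*$-algebra $\alg_1$ drive the argument. First, a product of finitely many positive elements of $\alg_1$ is positive — since they commute, $ab=a^{1/2}ba^{1/2}\ge_{\alg_1}0$ — and nonnegative-real combinations of positive elements are positive. Second, for inputs in $\alg_1$ (i.e.\ $d=1$), moving the parameters past the commuting factors $x^*$ and $y$ turns the degree-$q$ polynomial kernel of Definition~\ref{def:A-valued_kernel} into $\big(\prod_{j=1}^{q+1}a_j^*a_j\big)(x^*y)^q$; conversely, given any $s\in(\alg_1)_+$ and any $q\ge 1$, choosing $a_1=\dots=a_{q+1}=s^{1/(2(q+1))}\in\alg_1$ yields $\prod_{j=1}^{q+1}a_j^*a_j=s$ and hence realizes $s\,(x^*y)^q$ as such a kernel. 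So it is enough to produce the claimed decomposition with positive coefficients.

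I would then peel the nested expression in Eq.~\eqref{eq:conv_pdk} from the inside out. Put $u_0(x,y)=x^*a_1^*a_1y=(a_1^*a_1)(x^*y)$ and, for $1\le j\le L-1$, $u_j=\sigma_j(b_j^*b_j+u_{j-1})\,a_{j+1}^*a_{j+1}$, so that $\hat k=\sigma_L(b_L^*b_L+u_{L-1})$. An induction on $j$ shows $u_j(x,y)=\sum_{m\ge 0}p_m^{(j)}(x^*y)^m$ with every $p_m^{(j)}$ positive: adding the positive constant $b_j^*b_j$ changes only the $m=0$ coefficient and keeps it positive; applying $\sigma_j(t)=\sum_{l\ge 1}\alpha_{j,l}t^l$ to $\sum_m q_m(x^*y)^m$ and expanding each $l$-th power by the (commutative) multinomial theorem regroups terms by the exponent $m$, with coefficient $\sum_{l\ge 1}\alpha_{j,l}\sum_{m_1+\dots+m_l=m}q_{m_1}\cdots q_{m_l}$, which is positive by the first fact together with $\alpha_{j,l}\ge 0$; and multiplying by the positive element $a_{j+1}^*a_{j+1}$ preserves positivity. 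Running the same computation once more with $\sigma_L$ gives $\hat k(x,y)=\sum_{m\ge 0}s_m(x^*y)^m$ with $s_m\in(\alg_1)_+$; by the second fact each term with $m\ge 1$ is an $\alg_1$-valued polynomial kernel, while $s_0$ is a constant positive-definite kernel (which vanishes unless some bias $b_j$ is nonzero, and otherwise can be taken as the degree-zero case). This exhibits $\hat k$ as a sum of $\alg_1$-valued polynomial kernels.

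The computations are routine. What genuinely needs care is the commutativity of $\alg_1$ — without it neither the rearrangements nor the positivity of the regrouped coefficients survive — the nonnegativity of the coefficients $\alpha_{j,l}$, which is exactly what keeps every $s_m$ in $(\alg_1)_+$, and the convergence of the rearranged multiple series: on the data domain where $\hat k$ is well defined the scalar series $\sum_l\alpha_{j,l}t^l$ converge at the relevant radius, so all the double series are absolutely convergent in norm and may be reordered. The only conceptual point is recognizing that commutativity collapses every monomial arising in the expansion onto a single grading, the power of the element $x^*y\in\alg_1$, along which the whole nested kernel splits.
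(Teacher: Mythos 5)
Your proof is correct and follows essentially the same route as the paper's (very terse) argument: expand each $\sigma_j$ as a power series with nonnegative coefficients and observe that the innermost kernel $b_1^*b_1+x^*a_1^*a_1y$ together with this expansion exhibits $\hat{k}$ as a sum of $\alg_1$-valued polynomial kernels. Your extra steps---using commutativity of $\alg_1$ to regroup everything as $\sum_{m}s_m(x^*y)^m$ with $s_m\ge_{\alg_1}0$, and choosing $a_1=\cdots=a_{q+1}=s_m^{1/(2(q+1))}$ so each monomial matches Definition~\ref{def:A-valued_kernel} exactly (including the constant term coming from the biases)---only make explicit what the paper's one-line proof leaves implicit.
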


\subsection{Connection with Convolutional Kernel}
For image data, a ($\mathbb{C}$-valued) positive definite kernel called convolutional kernel is proposed to bridge a gap between kernel methods and neural networks~\citep{marial14,marial16}.
In this subsection, we construct two $C^*$-algebra-valued positive definite kernels that generalize the convolutional kernel.
Similar to the case of the CNN, we will first show that we can reconstruct the convolutional kernel using a $C^*$-algebra-valued positive definite kernel.
Moreover, we will show that our framework gives another generalization of the convolutional kernel.
A generalization of neural networks to $C^*$-algebra-valued networks is proposed~\citep{hashimoto22}.
This generalization allows us to generalize the analysis of the CNNs with kernel methods to that of $C^*$-algebra-valued CNNs.

Let $\Omega$ be a finite subset of $\mathbb{Z}^m$.
For example, $\Omega$ is the space of $m$-dimensional grids.
Let $\red{\tilde{\alg}_1}$ be the space of $\mathbb{C}$-valued maps on $\Omega$ and \red{$\mcl{X}\subseteq\tilde{\alg}_1$}.
The convolutional kernel is defined as follows~{\citep[Definition 2]{marial14}}.
\begin{definition}
Let $\beta,\sigma>0$.
The {\em convolutional kernel} $\tilde{k}:\red{\mcl{X}\times\mcl{X}}\to\mathbb{C}$ is defined as
\begin{align}
\tilde{k}(x,y)=&\sum_{z,z'\in\Omega}\vert x(z)\vert\,\vert y(z')\vert \mr{e}^{-\frac{1}{2\beta^2}\Vert z-z'\Vert^2}
\mr{e}^{-\frac{1}{2\sigma^2}\vert \tilde{x}(z)-\tilde{y}(z')\vert^2}.\label{eq:conv_kernel}
\end{align}
Here, $\Vert\cdot\Vert$ is the standard norm in $\mathbb{C}^m$.
In addition, for $x\in\mcl{X}$, $\tilde{x}(z)=x(z)/\vert x(z)\vert$.
\end{definition}
\red{Let $\Omega=\{z_1,\ldots,z_p\}$, $\alg_1=C^*(\mathbb{Z}/p\mathbb{Z})$, and $\alg_2=\mathbb{C}^{p\times p}$.}
We first construct an $\alg_1$-valued positive definite kernel, which reconstructs the convolutional kernel~\eqref{eq:conv_kernel}.

\begin{proposition}\label{prop:conv_ver1}
Define $\hat{k}:\mcl{X}\times\mcl{X}\to\alg_1$ as
\begin{align}
\hat{k}(x,y)=\int_{\mathbb{R}}\int_{\mathbb{R}^m}c_x(\omega,\eta)^*c_y(\omega,\eta)\;\mr{d}\lambda_{\beta}(\omega)\mr{d}{\lambda}_{\sigma}(\eta),\label{eq:conv_kernel_tilA0}
\end{align}
where $\mr{d}\lambda_{\beta}(\omega)=\beta\mr{e}^{-\frac{\beta^2\omega^2}{2}}\mr{d}\omega$ for $\beta>0$ and
\begin{align*}
c_x(\omega,\eta)=&\opn{circ}\Big(\vert x(z_1)\vert\mr{e}^{\sqrt{-1}\omega\cdot z_1}\mr{e}^{\sqrt{-1}\eta\cdot \tilde{x}(z_1)},\cdots,
\vert x(z_p)\vert\mr{e}^{\sqrt{-1}\omega\cdot z_p}\mr{e}^{\sqrt{-1}\eta\cdot \tilde{x}(z_p)}\Big),
\end{align*}
for $x\in\mcl{X}$, $\omega\in\mathbb{R}^m$, and $\eta\in\mathbb{R}$.
Then, $\hat{k}$ is an $\alg_1$-valued positive definite kernel, and for any $l=1,\ldots,p$, $\tilde{k}$ is written as
\begin{align*}
\tilde{k}(x,y)=\frac{1}{p}\sum_{i,j=1}^p\hat{k}(x,y)_{i,j}=\sum_{j=1}^p\hat{k}(x,y)_{l,j},
\end{align*}
where $\hat{k}(x,y)_{i,j}$ is the $(i,j)$-entry of $\hat{k}(x,y)$.
\end{proposition}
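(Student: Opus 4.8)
The plan is to treat the two assertions in turn, using throughout that $\alg_1=C^*(\mathbb{Z}/p\mathbb{Z})$ is $C^*$-isomorphic to the algebra $Circ(p)$ of $p\times p$ circulant matrices (Lemma~\ref{lem:circulant_isomorphic}), so that each $c_x(\omega,\eta)$ is a circulant matrix and products and adjoints of circulant matrices stay circulant. For positive definiteness I would first observe that $\hat{k}$ has the feature-map form $\hat{k}(x,y)=\int_{\mathbb{R}}\int_{\mathbb{R}^m}c_x(\omega,\eta)^*c_y(\omega,\eta)\,\mr{d}\lambda_\beta(\omega)\mr{d}\lambda_\sigma(\eta)$. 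The integral converges (the entries of $c_x(\omega,\eta)$ are $\vert x(z_l)\vert$ times bounded exponentials, so $\Vert c_x(\omega,\eta)^*c_y(\omega,\eta)\Vert$ is bounded uniformly in $(\omega,\eta)$ while $\lambda_\beta,\lambda_\sigma$ are finite measures) and lands in $\alg_1$ ($Circ(p)$ is a closed subspace of $\mathbb{C}^{p\times p}$, and Bochner integrals of maps into a closed subspace stay there). Hermitian symmetry $\hat{k}(x,y)=\hat{k}(y,x)^*$ is immediate since the adjoint commutes with the Bochner integral and $(c_x^*c_y)^*=c_y^*c_x$. For the positivity condition I would, given $c_1,\ldots,c_n\in\alg_1$ and $x_1,\ldots,x_n\in\mcl{X}$, pull the finite sum inside the integral and complete the square:
\[
\sum_{i,j=1}^n c_i^*\hat{k}(x_i,x_j)c_j=\int_{\mathbb{R}}\int_{\mathbb{R}^m}\Big(\sum_{i=1}^n c_{x_i}(\omega,\eta)c_i\Big)^*\Big(\sum_{j=1}^n c_{x_j}(\omega,\eta)c_j\Big)\,\mr{d}\lambda_\beta(\omega)\mr{d}\lambda_\sigma(\eta).
\]
The integrand is of the form $b^*b\ge_{\alg_1}0$ pointwise, and integrating an $(\alg_1)_+$-valued map against the positive measure $\lambda_\beta\otimes\lambda_\sigma$ yields an element of $(\alg_1)_+$ (the positive cone is closed and convex), which is the claim.

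For the reconstruction of $\tilde{k}$, the key elementary fact is that for a circulant matrix $M=\opn{circ}(v)$ one has $M\mathbf{1}=\big(\sum_l v_l\big)\mathbf{1}$ with $\mathbf{1}=(1,\ldots,1)^T$, since each row of $M$ is a cyclic rearrangement of $v$; hence every row of $M$ has the same sum, namely $\tfrac1p\mathbf{1}^*M\mathbf{1}=\tfrac1p\sum_{i,j=1}^p M_{i,j}$. Taking $M=\hat{k}(x,y)$, which is circulant by the first part, this gives at once $\tfrac1p\sum_{i,j=1}^p\hat{k}(x,y)_{i,j}=\sum_{j=1}^p\hat{k}(x,y)_{l,j}$ for every $l$, so it remains to identify $\tfrac1p\mathbf{1}^*\hat{k}(x,y)\mathbf{1}$ with $\tilde{k}(x,y)$. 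Since $c_x(\omega,\eta)\mathbf{1}=s_x(\omega,\eta)\mathbf{1}$ for $s_x(\omega,\eta):=\sum_{l=1}^p\vert x(z_l)\vert\mr{e}^{\sqrt{-1}\omega\cdot z_l}\mr{e}^{\sqrt{-1}\eta\cdot\tilde{x}(z_l)}$, I get $\mathbf{1}^*c_x(\omega,\eta)^*c_y(\omega,\eta)\mathbf{1}=p\,\overline{s_x(\omega,\eta)}\,s_y(\omega,\eta)$, whence $\tfrac1p\mathbf{1}^*\hat{k}(x,y)\mathbf{1}=\int_{\mathbb{R}}\int_{\mathbb{R}^m}\overline{s_x(\omega,\eta)}\,s_y(\omega,\eta)\,\mr{d}\lambda_\beta(\omega)\mr{d}\lambda_\sigma(\eta)$. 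Expanding $\overline{s_x}\,s_y$ as a double sum over $z_l,z_{l'}\in\Omega$ and integrating term by term, the $\omega$-integral produces the factor $\mr{e}^{-\frac{1}{2\beta^2}\Vert z_l-z_{l'}\Vert^2}$ and the $\eta$-integral produces $\mr{e}^{-\frac{1}{2\sigma^2}\vert\tilde{x}(z_l)-\tilde{y}(z_{l'})\vert^2}$, by Bochner's theorem (these are characteristic functions of Gaussians); reassembling the pieces and using $\Omega=\{z_1,\ldots,z_p\}$ recovers formula~\eqref{eq:conv_kernel} exactly.

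I expect the only delicate points to be bookkeeping rather than substantial. One must keep the conjugations straight in forming $\overline{s_x}\,s_y$ (so that the $z_l$-terms come with a minus sign and combine as differences), and must fix the normalizations of $\lambda_\beta$ and $\lambda_\sigma$ — and read $\tilde{x}(z)\in\mathbb{C}$ as a point of $\mathbb{R}^2$ in the pairing $\eta\cdot\tilde{x}(z)$ — so that the two Gaussian integrals yield precisely the factors $\mr{e}^{-\Vert z-z'\Vert^2/2\beta^2}$ and $\mr{e}^{-\vert\tilde{x}(z)-\tilde{y}(z')\vert^2/2\sigma^2}$ of $\tilde{k}$. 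Everything else reduces to the circulant structure of $\alg_1$ and the identity $\opn{circ}(v)\mathbf{1}=(\sum_l v_l)\mathbf{1}$.
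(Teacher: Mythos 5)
Your proposal is correct, but for the identification of $\tilde{k}$ it takes a genuinely different route from the paper. The paper computes each entry of the circulant matrix $\hat{k}(x,y)$ explicitly — carrying out the two Gaussian integrals inside the $(i,j)$-entry to get terms of the form $\vert x(z_{p-i+2+l})\vert\,\vert y(z_{p-j+2+l})\vert\,\mr{e}^{-\Vert z_{p-i+2+l}-z_{p-j+2+l}\Vert^2/2\beta^2}\mr{e}^{-\vert\tilde{x}(\cdot)-\tilde{y}(\cdot)\vert^2/2\sigma^2}$ (indices read modulo $p$) — and then sums over an index to recover $\tilde{k}$, so the whole argument is entrywise bookkeeping over cyclic shifts. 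You instead exploit that $\mathbf{1}$ is a common eigenvector of circulant matrices: $\hat{k}(x,y)$ is circulant, so all its row sums coincide with $\tfrac1p\mathbf{1}^*\hat{k}(x,y)\mathbf{1}$, which makes the chain of equalities in the statement immediate, and the remaining work collapses to the scalar identity $\mathbf{1}^*c_x^*c_y\mathbf{1}=p\,\overline{s_x}\,s_y$ followed by the same Gaussian (characteristic-function) integrals applied to $\overline{s_x}\,s_y$. Your route buys a cleaner, index-free argument and delivers the "any row $l$" claim structurally rather than by inspection of the entry formula; the paper's route buys an explicit closed form for every entry $\hat{k}(x,y)_{i,j}$, which is more informative about the kernel itself. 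You also spell out the positive-definiteness (completing the square under the integral and using closedness of the positive cone), which the paper simply declares trivial, and you flag the two points both treatments otherwise gloss — the normalization constants of $\lambda_\beta,\lambda_\sigma$ and the reading of $\eta\cdot\tilde{x}(z)$ for complex $\tilde{x}(z)$ — so your write-up is, if anything, slightly more careful than the published proof.
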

\begin{remark}
Similar to Subsection~\ref{subsec:cnn}, we can generalize $\hat{k}$ by replacing $c_x(\cdot,\cdot)^*c_y(\cdot,\cdot)$ by an $\alg_2$-valued polynomial kernel with respect to $c_x(\cdot,\cdot)$ and $c_y(\cdot,\cdot)$ in Eq.~\eqref{eq:conv_kernel_tilA0}.
\end{remark}

\red{Instead of $\alg_1$-valued, we can also construct an $\tilde{\alg}_1$-valued kernel}, which reconstructs the convolutional kernel~\eqref{eq:conv_kernel}.
\begin{definition}
Let $\beta,\sigma>0$.
Define $\check{k}:\mcl{X}\times\mcl{X}\to\tilde{\alg}_1$ as
\begin{align}
\check{k}(x,y)(w)=\sum_{z,z'\in\Omega}\vert x(\psi(z,w))\vert\,\vert y(\psi(z',w))\vert
\mr{e}^{\frac{-1}{2\beta^2}\Vert \psi(z,w)-\psi(z',w)\Vert^2}
\mr{e}^{\frac{-1}{2\sigma^2}\vert \tilde{x}(\psi(z,w))-\tilde{y}(\psi(z',w))\vert^2}\label{eq:A0_valued_conv}
\end{align}
for $w\in\Omega$.
Here, $\psi:\Omega\times\Omega\to\Omega$ is a map satisfying $\psi(z,0)=z$ for any $z\in\Omega$.
\end{definition}
The $\tilde{\alg}_1$-valued map $\check{k}$ is a generalization of the ($\mathbb{C}$-valued) convolutional kernel $\tilde{k}$ in the following sense, which is directly derived from the definitions of $\check{k}$ and $\tilde{k}$.
\begin{proposition}
For $\tilde{k}$ and $\check{k}$ defined as Eqs.~\eqref{eq:conv_kernel} and \eqref{eq:A0_valued_conv}, respectively, we have
$\check{k}(x,y)(0)=\tilde{k}(x,y)$. 
\end{proposition}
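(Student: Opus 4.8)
The plan is to verify the identity $\check{k}(x,y)(0)=\tilde{k}(x,y)$ by simply substituting $w=0$ into the defining formula~\eqref{eq:A0_valued_conv} and comparing it term-by-term with the definition~\eqref{eq:conv_kernel} of $\tilde{k}$. The only structural input needed is the hypothesis that the shift map $\psi:\Omega\times\Omega\to\Omega$ satisfies $\psi(z,0)=z$ for every $z\in\Omega$, which collapses each occurrence of $\psi(\cdot,w)$ at $w=0$ to the identity on its first argument.

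First I would write out $\check{k}(x,y)(0)$ explicitly from~\eqref{eq:A0_valued_conv}: it is the sum over $z,z'\in\Omega$ of
$\vert x(\psi(z,0))\vert\,\vert y(\psi(z',0))\vert\,\mr{e}^{-\frac{1}{2\beta^2}\Vert \psi(z,0)-\psi(z',0)\Vert^2}\mr{e}^{-\frac{1}{2\sigma^2}\vert \tilde{x}(\psi(z,0))-\tilde{y}(\psi(z',0))\vert^2}$.
Then I would apply $\psi(z,0)=z$ and $\psi(z',0)=z'$ to replace every $\psi(z,0)$ by $z$ and every $\psi(z',0)$ by $z'$. This turns the summand into
$\vert x(z)\vert\,\vert y(z')\vert\,\mr{e}^{-\frac{1}{2\beta^2}\Vert z-z'\Vert^2}\mr{e}^{-\frac{1}{2\sigma^2}\vert \tilde{x}(z)-\tilde{y}(z')\vert^2}$,
which is exactly the summand in~\eqref{eq:conv_kernel}. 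Summing over $z,z'\in\Omega$ gives $\tilde{k}(x,y)$, completing the argument.

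There is essentially no obstacle here; the statement is a definitional unwinding, and the proposition is flagged in the excerpt as being "directly derived from the definitions." The only points deserving a word of care are purely notational: confirming that $\check{k}(x,y)$ is indeed $\tilde{\alg}_1$-valued (a $\mathbb{C}$-valued map on $\Omega$), so that evaluation at $w=0\in\Omega$ makes sense, and noting that $\tilde{x},\tilde{y}$ appearing inside~\eqref{eq:A0_valued_conv} carry the same meaning ($\tilde{x}(z)=x(z)/\vert x(z)\vert$) as in the definition of $\tilde{k}$, so the two exponential factors match verbatim after the substitution. No properties of $C^*$-algebras, positive definiteness, or the RKHM construction are used.
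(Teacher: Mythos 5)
Your proof is correct and is exactly the argument the paper intends: the paper gives no separate proof, noting the identity is ``directly derived from the definitions,'' and your substitution of $w=0$ together with $\psi(z,0)=z$ is precisely that definitional unwinding.
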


We further generalize the $\tilde{\alg}_1$-valued kernel $\check{k}$ to an $\alg_2$-valued positive definite kernel.
\begin{definition}
Let $\beta,\sigma>0$ and $a_i\in\alg_2$ for $i=1,2,3,4$.
Let $\psi$ be the same map as that in Eq.~\eqref{eq:A0_valued_conv}.
Let $k:\mcl{X}\times\mcl{X}\to\alg_2$ be defined as
\begin{align}
&k(x,y)=\int_{\mathbb{R}}\int_{\mathbb{R}^m}\sum_{z,z'\in\Omega}a_1^*\mathbf{x}(z)a_2^*b(z,\omega)^*a_3^*\tilde{\mathbf{x}}(z,\eta)^*
a_4^*a_4\tilde{\mathbf{y}}(z',\eta)a_3 b(z',\omega)a_2\mathbf{y}(z')a_1\;\mr{d}\lambda_{\beta}(\omega)\mr{d}\lambda_{\sigma}(\eta) \label{eq:A_valued_conv}
\end{align}
for $x,y\in\mcl{X}$.
Here, for $x\in\mcl{X}$,
\begin{align*}
&\mathbf{x}(z)=\opn{diag}(\vert x(\psi(z,z_1)\vert,\ldots,\vert x(\psi(z,z_p))\vert)\in\alg_2,\\
&\tilde{\mathbf{x}}(z,\omega)
=\opn{diag}(\mr{e}^{-\sqrt{-1}\omega\cdot\tilde{x}(\psi(z,z_1))},\ldots,\mr{e}^{-\sqrt{-1}\omega\cdot\tilde{x}(\psi(z,z_p))})\in\alg_2,\\
&b(z,\omega)=\opn{diag}(\mr{e}^{-\sqrt{-1}\omega\cdot \psi(z,z_1)},\ldots,\mr{e}^{-\sqrt{-1}\omega\cdot \psi(z,z_p)})\in\alg_2.
\end{align*}
\end{definition}
\begin{proposition}\label{prop:conv_A}
The $\alg_2$-valued map $k$ defined as Eq.~\eqref{eq:A_valued_conv} is an $\alg_2$-valued positive definite kernel.
\end{proposition}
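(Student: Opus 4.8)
The plan is to verify the two defining properties of an $\alg_2$-valued positive definite kernel (Definition~\ref{def:pdk_rkhm}) directly from the integral representation~\eqref{eq:A_valued_conv}, exploiting the fact that the integrand is manifestly of the form ``(something)$^*$(something)'' once we group the factors correctly. First I would introduce, for each $x\in\mcl{X}$, $\omega\in\mathbb{R}^m$, $\eta\in\mathbb{R}$, and $z\in\Omega$, the single $\alg_2$-valued element
\begin{equation*}
v_x(z,\omega,\eta) := a_4\,\tilde{\mathbf{x}}(z,\eta)\,a_3\,b(z,\omega)\,a_2\,\mathbf{x}(z)\,a_1,
\end{equation*}
so that the summand/integrand of $k(x,y)$ is exactly $v_x(z,\omega,\eta)^*\,v_y(z',\omega,\eta)$ \emph{provided} the middle block $\tilde{\mathbf{x}}(z,\eta)^* a_4^* a_4 \tilde{\mathbf{y}}(z',\eta)$ in~\eqref{eq:A_valued_conv} matches $\tilde{\mathbf{x}}(z,\eta)^* (a_4^* a_4) \tilde{\mathbf{y}}(z',\eta)$ — which it does — and that $\mathbf{x}(z)$, $b(z,\omega)$, $\tilde{\mathbf{x}}(z,\eta)$ are taken with the correct adjoints. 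Here I use that each of $\mathbf{x}(z)$, $b(z,\omega)$, $\tilde{\mathbf{x}}(z,\omega)$ is diagonal: $\mathbf{x}(z)$ has real nonnegative entries so $\mathbf{x}(z)^*=\mathbf{x}(z)$, while $b(z,\omega)^*$ and $\tilde{\mathbf{x}}(z,\omega)^*$ are the diagonal matrices with complex-conjugated (hence sign-flipped exponent) entries, consistent with the starred factors written in~\eqref{eq:A_valued_conv}. Thus
\begin{equation*}
k(x,y) = \int_{\mathbb{R}}\int_{\mathbb{R}^m} \sum_{z,z'\in\Omega} v_x(z,\omega,\eta)^*\, v_y(z',\omega,\eta)\;\mr{d}\lambda_\beta(\omega)\,\mr{d}\lambda_\sigma(\eta).
\end{equation*}

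From this form the two properties are immediate. For Hermitian symmetry, take adjoints inside the integral and sum: $\big(v_x(z,\omega,\eta)^* v_y(z',\omega,\eta)\big)^* = v_y(z',\omega,\eta)^* v_x(z,\omega,\eta)$; since the adjoint is norm-continuous and commutes with the Bochner integral, and since summing over $z,z'$ and integrating over $(\omega,\eta)$ is symmetric in the roles of $x$ and $y$ after relabeling $z\leftrightarrow z'$, we get $k(x,y)^* = k(y,x)$. For positivity, fix $n\in\mathbb{N}$, $c_1,\dots,c_n\in\alg_2$, and $x^{(1)},\dots,x^{(n)}\in\mcl{X}$; then
\begin{equation*}
\sum_{i,j=1}^n c_i^* k(x^{(i)},x^{(j)}) c_j
= \int_{\mathbb{R}}\int_{\mathbb{R}^m} \sum_{z,z'\in\Omega}\Big(\sum_{i=1}^n v_{x^{(i)}}(z,\omega,\eta)\,c_i\Big)^*\Big(\sum_{j=1}^n v_{x^{(j)}}(z',\omega,\eta)\,c_j\Big)\;\mr{d}\lambda_\beta(\omega)\,\mr{d}\lambda_\sigma(\eta),
\end{equation*}
and for each fixed $(\omega,\eta)$ the double sum over $z,z'$ factors as $w(\omega,\eta)^* w(\omega,\eta)$ with $w(\omega,\eta) = \sum_{z\in\Omega}\sum_{i=1}^n v_{x^{(i)}}(z,\omega,\eta)\,c_i$ — wait, more precisely it equals $\big(\sum_z\sum_i v_{x^{(i)}}(z,\cdot)c_i\big)^*\big(\sum_{z'}\sum_j v_{x^{(j)}}(z',\cdot)c_j\big)$ since the $z$- and $z'$-sums are independent — which is a positive element of $\alg_2$. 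Finally, a Bochner integral of a function taking values in the closed convex cone $\alg_{2,+}$ lies in $\alg_{2,+}$ (the cone is closed and the integral is an a.e.-limit of Riemann sums of positive elements with nonnegative weights $\mr{d}\lambda_\beta\,\mr{d}\lambda_\sigma$), so $\sum_{i,j} c_i^* k(x^{(i)},x^{(j)}) c_j \ge_{\alg_2} 0$.

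The only genuine obstacles are bookkeeping rather than conceptual. The main one is checking that the adjoint pattern in~\eqref{eq:A_valued_conv} is exactly the one produced by $v_x(z,\omega,\eta)^* v_y(z',\omega,\eta)$ — i.e. that reading $(a_1^* \mathbf{x}(z) a_2^* b(z,\omega)^* a_3^* \tilde{\mathbf{x}}(z,\eta)^*)(a_4^* a_4)(\tilde{\mathbf{y}}(z',\eta) a_3 b(z',\omega) a_2 \mathbf{y}(z') a_1)$ is literally $(a_4 \tilde{\mathbf{x}}(z,\eta) a_3 b(z,\omega) a_2 \mathbf{x}(z) a_1)^*(a_4 \tilde{\mathbf{y}}(z',\eta) a_3 b(z',\omega) a_2 \mathbf{y}(z') a_1)$; this requires the observations that $\mathbf{x}(z)^*=\mathbf{x}(z)$ (real nonnegative diagonal) and that $b(z,\omega)$, $\tilde{\mathbf{x}}(z,\eta)$ are unitary diagonal matrices whose adjoints are the conjugated-exponent diagonals written with the stars in~\eqref{eq:A_valued_conv}. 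A secondary point is a mild integrability/measurability check so that the Bochner integrals exist and commute with the finite sums and with the adjoint; since $\Omega$ is finite and the integrands are continuous in $(\omega,\eta)$ with Gaussian-type measures $\mr{d}\lambda_\beta$, $\mr{d}\lambda_\sigma$ that are finite, this is routine, and I would simply remark that the integrals converge (as is implicitly assumed for the Gaussian-type kernels in Definition~\ref{def:A-valued_kernel}). This mirrors the argument already used to establish Proposition~\ref{prop:conv_ver1}.
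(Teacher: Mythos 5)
Your proposal is correct and follows essentially the same route as the paper's proof: group the factors so that the integrand of $\sum_{i,j}c_i^*k(x_i,x_j)c_j$ becomes $\big(\sum_{i,z}v_{x_i}(z,\omega,\eta)c_i\big)^*\big(\sum_{j,z'}v_{x_j}(z',\omega,\eta)c_j\big)$, which is positive for each $(\omega,\eta)$, and then use positivity of the integral. Your extra bookkeeping (checking $\mathbf{x}(z)^*=\mathbf{x}(z)$, the Hermitian symmetry $k(x,y)^*=k(y,x)$, and closedness of the positive cone under Bochner integration) only makes explicit what the paper leaves implicit.
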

The following proposition shows $k$ is a generalization of $\check{k}$, which means we finally generalize the ($\mathbb{C}$-valued) convolution kernel $\tilde{k}$ to an $\alg_2$-valued positive definite kernel.
This allows us to generalize the relationship between the CNNs and the convolutional kernel to that of a $C^*$-algebra-valued version of the CNNs and the $C^*$-algebra-valued convolutional kernel $k$.
\begin{proposition}
If $a_i=I$, then the $\alg_2$-valued positive definite kernel $k$ defined as Eq.~\eqref{eq:A_valued_conv} is reduced to the $\alg_1$-valued convolutional kernel $\check{k}$ defined as Eq.~\eqref{eq:A0_valued_conv}.
\end{proposition}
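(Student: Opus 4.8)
<br>

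The plan is to verify the claimed reduction by direct substitution, tracking how setting $a_i=I$ collapses the $\alg_2$-valued integral expression~\eqref{eq:A_valued_conv} onto the $\tilde{\alg}_1$-valued (i.e.\ diagonal) kernel~\eqref{eq:A0_valued_conv}. First I would record what happens to each factor when $a_1=a_2=a_3=a_4=I$: the integrand in Eq.~\eqref{eq:A_valued_conv} becomes $\sum_{z,z'\in\Omega}\mathbf{x}(z)b(z,\omega)^*\tilde{\mathbf{x}}(z,\eta)^*\tilde{\mathbf{y}}(z',\eta)b(z',\omega)\mathbf{y}(z')$, a product of diagonal matrices in $\alg_2=\mathbb{C}^{p\times p}$, hence itself diagonal. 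So the first key step is to observe that the whole of $k(x,y)$ lands in the subalgebra of diagonal matrices, which is precisely $\tilde{\alg}_1$ under the identification $\Omega=\{z_1,\ldots,z_p\}$, with the $w$-th diagonal entry playing the role of $\check{k}(x,y)(w)$ with $w=z_\ell$.

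The second key step is to compute that $w=z_\ell$ diagonal entry and match it against~\eqref{eq:A0_valued_conv}. Since all matrices involved are diagonal, the $\ell$-th diagonal entry of the product is the product of the $\ell$-th diagonal entries: from $\mathbf{x}(z)$ we get $\vert x(\psi(z,z_\ell))\vert$, from $\mathbf{y}(z')$ we get $\vert y(\psi(z',z_\ell))\vert$, from $b(z,\omega)^*b(z',\omega)$ we get $\mr{e}^{\sqrt{-1}\omega\cdot\psi(z,z_\ell)}\mr{e}^{-\sqrt{-1}\omega\cdot\psi(z',z_\ell)}$, and from $\tilde{\mathbf{x}}(z,\eta)^*\tilde{\mathbf{y}}(z',\eta)$ we get $\mr{e}^{\sqrt{-1}\eta\cdot\tilde{x}(\psi(z,z_\ell))}\mr{e}^{-\sqrt{-1}\eta\cdot\tilde{y}(\psi(z',z_\ell))}$. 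Then I would carry out the two Gaussian integrals: $\int_{\mathbb{R}^m}\mr{e}^{\sqrt{-1}\omega\cdot(\psi(z,z_\ell)-\psi(z',z_\ell))}\,\mr{d}\lambda_\beta(\omega)$ produces $\mr{e}^{-\frac{1}{2\beta^2}\Vert\psi(z,z_\ell)-\psi(z',z_\ell)\Vert^2}$ (up to the normalization built into $\mr{d}\lambda_\beta$), and similarly the $\eta$-integral over $\mathbb{R}$ produces $\mr{e}^{-\frac{1}{2\sigma^2}\vert\tilde{x}(\psi(z,z_\ell))-\tilde{y}(\psi(z',z_\ell))\vert^2}$; this is exactly the Bochner/Gaussian identity already used implicitly in Proposition~\ref{prop:conv_ver1}. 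Summing over $z,z'\in\Omega$ then reproduces the right-hand side of~\eqref{eq:A0_valued_conv} evaluated at $w=z_\ell$, so the two kernels agree entrywise and therefore as $\tilde{\alg}_1$-valued maps.

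Before concluding I would also note the consistency check that, with $a_i=I$, the factor $a_4^*a_4$ placed between $\tilde{\mathbf{x}}(z,\eta)^*$ and $\tilde{\mathbf{y}}(z',\eta)$ simply disappears, so the ordering of the (commuting, diagonal) factors is immaterial and no subtlety about noncommutativity arises — the reduction really is just ``diagonal matrices multiply entrywise.'' The main obstacle, such as it is, is purely bookkeeping: being careful that the argument convention $\psi(z,w)$ in~\eqref{eq:A0_valued_conv} corresponds to $\psi(z,z_\ell)$ with $w=z_\ell$ in the diagonal encoding of Eq.~\eqref{eq:A_valued_conv} (the indices of the diagonal entries are labelled by $z_1,\ldots,z_p$), and that the Gaussian normalizing constants $\beta$ and the analogous constant for $\mr{d}\lambda_\sigma$ are exactly those appearing in~\eqref{eq:A0_valued_conv}; once these identifications are pinned down the equality is immediate. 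I expect no genuine difficulty — the statement is a definitional specialization — so the write-up will be short, essentially the substitution and the two one-line Gaussian integrals.
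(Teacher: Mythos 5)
Your proposal is correct and is exactly the computation the paper leaves implicit: the paper gives no separate proof of this proposition (treating it as a direct consequence of the definitions), and your diagonal-entry bookkeeping with $a_i=I$ followed by the two Gaussian integrals mirrors the calculation already carried out for Proposition~\ref{prop:conv_ver1}, with the diagonal subalgebra of $\alg_2$ identified with $\tilde{\alg}_1$ via the labelling of entries by $z_1,\ldots,z_p$. The only caveat, which you flag and which the paper shares, is that the Gaussian identities hold only up to the normalization constants absorbed into $\mr{d}\lambda_{\beta}$ and $\mr{d}\lambda_{\sigma}$, so the reduction is exact under the paper's convention for these measures.
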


\begin{table}[t]
    \centering
    \def\arraystretch{1.3}
    \begin{tabularx}{0.7\linewidth}{X|c|c}
         &&Mean error  \\
         \hline
         \multirow{3}{\hsize}{vvRKHS, Gaussian ($\tilde{k}(x,y)=\mr{e}^{-c\Vert x-y\Vert^2}$)}& $k=\tilde{k}I$ & $0.640\pm 0.122$\\
         &$k=\tilde{k}T$ & $0.603\pm 0.028$\\
         &Nonsep & $0.650\pm 0.051$\\
         \hline
         \multirow{3}{\hsize}{vvRKHS, Laplacian ($\tilde{k}(x,y)=\mr{e}^{-c\Vert x-y\Vert}$)}& $k=\tilde{k}I$ & $0.538\pm 0.027$\\
         &$k=\tilde{k}T$ & $0.590\pm 0.021$\\
         &Nonsep & $0.650\pm 0.048$\\
         \hline
         \multirow{3}{\hsize}{vvRKHS, Polynomial ($\tilde{k}(x,y)=\sum_{i=1}^3(1-cx\cdot y)^i$)}& $k=\tilde{k}I$ & $0.800\pm 0.032$\\
         &$k=\tilde{k}T$ & $0.539\pm 0.012$\\
         &Nonsep & $0.539\pm 0.012$\\
         \hline
         \multicolumn{2}{>{\hsize=8cm}X|}{RKHM ($k(x,y)=\sum_{i=1}^3R_x^*(I-cQ_x^*)^i(I-cQ_y)^iR_y$)} & $\mathbf{0.343\pm 0.022}$
    \end{tabularx}
    \medskip\\
    $T=\begin{bmatrix}1&1\\ 1&1 \end{bmatrix}$,\qquad
    Nonsep: $k(x_{1},x_{2})_{i,j}=\tilde{k}(x_{1,i},x_{2,j})$
    \caption{Comparison between an RKHM and vvRKHSs~(Mean value $\pm$ standard deviation of $5$ runs)}
    \label{tab:err_syn}
\end{table}

\begin{figure*}[t]
\begin{minipage}{0.6\textwidth}
    \centering
    \subfigure[Regularization parameter $\lambda$]{\includegraphics[scale=0.24]{./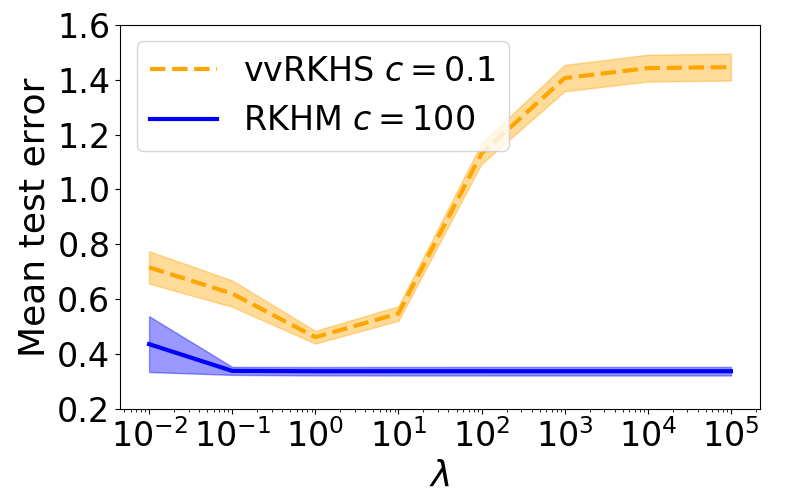}}
    \subfigure[Number of samples $n$]{\includegraphics[scale=0.24]{./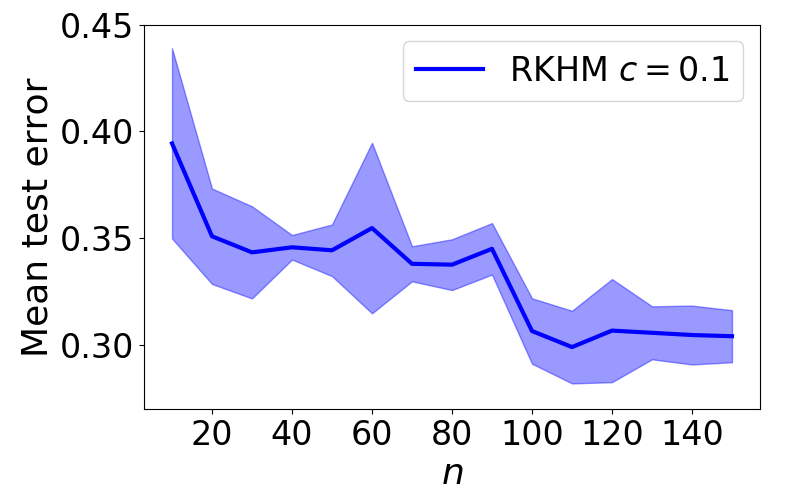}}\vspace{-.3cm}
    \caption{{Mean test error} versus hyperparameters (Mean value $\pm$ standard deviation of $5$ runs).}
    \label{fig:laplacian_rkhm}
\end{minipage}\qquad
\begin{minipage}{0.35\textwidth}
    \centering
\begin{tabular}{cc}
    Original &  
    \includegraphics[scale=0.7]{./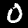}
    \includegraphics[scale=0.7]{./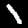}
    \includegraphics[scale=0.7]{./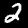}\\
    Input & 
    \includegraphics[scale=0.7]{./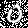}
    \includegraphics[scale=0.7]{./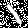}
    \includegraphics[scale=0.7]{./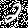}\\
    3-layer CNN &
    \includegraphics[scale=0.7]{./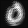}
    \includegraphics[scale=0.7]{./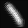}
    \includegraphics[scale=0.7]{./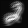}\\
    RKHM $+$ 1-layer CNN&
    \includegraphics[scale=0.7]{./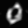}
    \includegraphics[scale=0.7]{./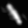}
    \includegraphics[scale=0.7]{./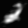}\\
\end{tabular}\vspace{-.15cm}
    \caption{Comparison between RKHM and CNN}
    \label{fig:mnist}
\end{minipage}\vspace{-.2cm}
\end{figure*}

\begin{figure*}[t]
    \centering
    \subfigure[2-layer models]{\includegraphics[scale=0.24]{./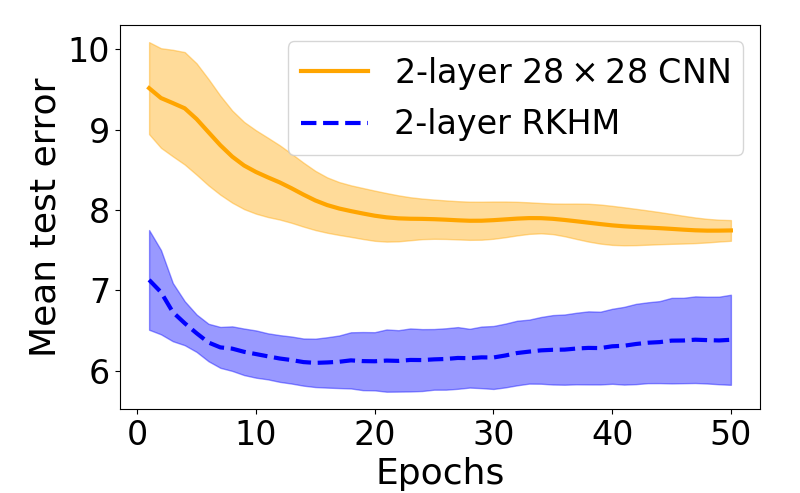}}\qquad
    \subfigure[3-layer models]{\includegraphics[scale=0.24]{./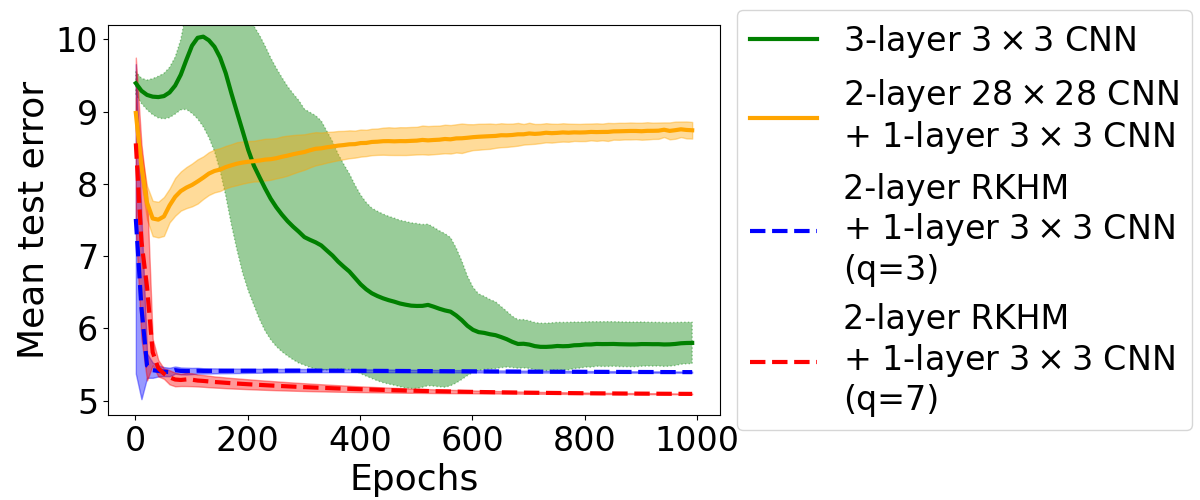}}\vspace{-.3cm}
    \caption{Mean test error versus the number of epochs (Mean value $\pm$ standard deviation of $5$ runs).}
    \label{fig:err_mnist}
\end{figure*}

%

\section{NUMERICAL RESULTS}\label{sec:numerical_results}
\subsection{Experiments with Synthetic Data}
We compared the performances of supervised learning in RKHMs and vvRKHSs. 
We generated $n$ samples $x_1,\ldots,x_{n}$ in $[0,1]^2$ each of whose elements is independently drawn from the uniform distribution on $[0,1]$.
For a generated sample $x_i=[x_{i,1},x_{i,2}]$, we added noise $\xi_i\in\mathbb{R}^2$, each of whose elements is independently drawn from the Gaussian distribution with mean $0$ and standard deviation $0.1$.
We generated the corresponding output sample $y_i$ as $y_i=[\sin(\tilde{x}_{i,1}+\tilde{x}_{i,2}), \sin(\tilde{x}_{i,1}+\tilde{x}_{i,2})+\sin(0.5(\tilde{x}_{i,1}+\tilde{x}_{i,2}))]\in\mathbb{R}^2$, where $\tilde{x}_i=x_i+\xi_i$.
We learned a function $f$ that maps $x_i$ to $y_i$ in different RKHMs and vvRKHSs and different values of the regularization parameter $\lambda$.
To compare the performances, we generated $100$ test input samples $\hat{x}_1,\ldots,\hat{x}_{100}$ in $[0,1]^2$ each of whose elements is independently drawn from the uniform distribution on $[0,1]$.
We also generated $\hat{y}_1,\ldots,\hat{y}_{100}$ given by $\hat{y}_i=[\sin(\hat{x}_{i,1}+\hat{x}_{i,2}), \sin(\hat{x}_{i,1}+\hat{x}_{i,2})+\sin(0.5(\hat{x}_{i,1}+\hat{x}_{i,2}))]$.
We computed the mean error $1/100\sum_{i=1}^{100}\Vert f(\hat{x}_i)-\hat{y}_i\Vert$.
The results for $n=30$ are illustrated in Table~\ref{tab:err_syn} and Figure~\ref{fig:laplacian_rkhm}.
Regarding Table~\ref{tab:err_syn}, we executed a cross-validation grid search to find the best parameters $c$ and $\lambda$, where $c$ is a parameter in the positive definite kernels and $\lambda$ is the regularization parameter.
Regarding Figure~\ref{fig:laplacian_rkhm} (a), we set $c$ as the parameter found by the cross-validation and computed the error for different values of $\lambda$.
{We remark that the mean error for the RKHM becomes large as $\lambda$ becomes large, but because of the scale of the vertical axis, we cannot see the change clearly in the figure.}
We can see \addHK{that} RKHM outperforms vvRKHSs.
We also show the relationship between the mean error and the number of samples in Figure~\ref{fig:laplacian_rkhm} (b).
We can see \addHK{that} the mean error becomes small as the number of samples becomes large.

Regarding the learning in RKHMs, for $i=1,\ldots,n$, we transformed $x_i\in[0,1]^2$ into $\opn{circ}(x_i)\in Circ(2)$.
Then, we set $\alg_1= Circ(2)$ and $\alg_2=\mathbb{C}^{2\times 2}$.
We computed the solution of the minimization problem~\eqref{eq:min_prob} and obtained a function $\hat{f}\in\modu_k$ that maps $\opn{circ}(x_i)$ to $\opn{circ}(y_i)$.
Since the output of the learned function $\hat{f}$ takes its value on $\alg_2$, we computed the mean value of $(1,1)$ and $(2,2)$ entries of $\hat{f}(\hat{x}_i)$ for obtaining the first element of the output vector in $\mathbb{R}^2$ and that of $(1,2)$ and $(2,1)$ entries for the second element.
Regarding the $C^*$-algebra-valued kernel for RKHM, we set $k(x,y)=\sum_{i=1}^3R_x^*(I-cQ_x^*)^i(I-cQ_y)^iR_y$ for $x\in\alg_1$, where $x=Q_xR_x$ is the QR decomposition of $x$.

\subsection{Experiments with MNIST}
We compared our method with CNNs using MNIST~\citep{lecun98}.
For $i=1,\ldots,20$, we generated training samples as follows:
We added noise to each pixel of an original image $y_i$ and generated a noisy image $x_i$.
The noise is drawn from the normal distribution with mean 0 and standard deviation 0.01.
Moreover, each digit (0--9) is contained in the training sample set equally (i.e., the number of samples for each digit is 2).
The image size is $28\times 28$.
We tried to find a function that maps a noisy image to its original image using an RKHM and a CNN.
We represent input and output images $x_i$ and $y_i$ as the circulant matrices $\opn{circ}(x_i)$ and $\opn{circ}(y_i)$ whose first rows are $x_i$ and $y_i$.
Then, we learned the function in the RKHM associated with a polynomial kernel $k(x,y)=(a_3^*\sigma(xa_1+a_2)^*+a_4^*)(\sigma(ya_1+a_2)a_3+a_4)$, where $\sigma(x)=(I-cQ_x)R_x+(I-cQ_x)^qR_x$ for $q\in\mathbb{N}$.
Since $k$ has 4 \red{$\alg_2$-valued} parameters, 
\todo[disable]{ 4 parameters in A1? if we say only 4 parameters, one can think that we have 4 scalars parameters}
it corresponds to a generalization of 2-layer CNN with $28\times 28$ filters (see Subsection~\ref{subsec:cnn}).
Regarding the parameters $a_i$, we used a gradient descent method and optimized them.
{We generated 100 noisy images for test samples in the same manner as the training samples and computed the mean error with respect to them.
We set $q=3$.
For comparison, we also trained a 2-layer CNN with $28\times 28$ filters with the same training samples.
The results are illustrated in Figure~\ref{fig:err_mnist} (a).
We can see \addHK{that} the RKHM outperforms the CNN.}
Moreover, we combined the RKHM with a 1-layer CNN with a $3\times 3$ filter, whose inputs are the outputs of the function learned in the RKHM.
{We also trained a 3-layer CNN with $3\times 3$ filters and a 2-layer CNN with $28\times 28$ filters combined with a 1-layer CNN with a $3\times 3$ filter.}
The results are illustrated in Figures~\ref{fig:mnist} and \ref{fig:err_mnist} (b).
We can see that by replacing convolutional layers with an RKHM, we can achieve better performance.
RKHMs and convolutional layers with $28\times 28$ filters capture global information of images.
According to the results of the CNN with $28\times 28$ filters and the RKHM in Figure~\ref{fig:err_mnist} (b), we can see that the RKHM can capture global information of the images more effectively.
On the other hand, convolutional layers with $3\times 3$ filters capture local information.
Since the 2-layer RKHM combined with a 1-layer CNN with a $3\times 3$ filter outperforms a 3-layer CNN with $3\times 3$ filters, \red{we conclude that the combination of the RKHM and CNN captures the global and local information more effectively.}
\todo[disable]{reformulate the last sentence 'we conclude ...'. not clear}

\section{CONCLUSION}\label{sec:conclusion}
We investigated supervised learning in RKHM and {provided a new twist and insights for kernel methods}.
We constructed $C^*$-algebra-valued kernels {from the perspective of $C^*$-algebra}, which is suitable, for example, for analyzing image data.
We investigated the generalization bound and computational complexity for RKHM learning and showed the connection with existing methods.
RKHMs enable us to construct larger representation spaces than the case of RKHSs and vvRKHSs, and generalize operations such as convolution.
This fact implies the representation power of RKHMs goes beyond that of existing frameworks.

\subsection*{Acknowledgements}
We would like to thank Dr. Chao Li and Dr. Yicong He for pointing out errors in the experiments in Section 6.2.
Hachem Kadri is partially supported by grant ANR-19-CE23-0011 from the French National Research Agency.
Masahiro Ikeda is partially supported by grant JPMJCR1913 from JST CREST.
\clearpage

\appendix

\renewcommand{\thetable}{\Alph{table}}

\section*{APPENDIX}

\section*{Notation}
The typical notations in this paper are listed in Table~\ref{tab1}.

\section{$C^*$-algebra and Hilbert $C^*$-module}\label{ap:c_algebra}
We provide definitions and a lemma related to $C^*$-algebra and Hilbert $C^*$-module.
\begin{definition}[$C^*$-algebra]~\label{def:c*_algebra}
A set $\alg$ is called a {\em $C^*$-algebra} if it satisfies the following conditions:

\begin{enumerate}
 \item $\alg$ is an algebra over $\mathbb{C}$ and {equipped with} a bijection $(\cdot)^*:\alg\to\alg$ that satisfies the following conditions for $\alpha,\beta\in\mathbb{C}$ and $a,b\in\alg$:

 \leftskip=10pt
 $\bullet$ $(\alpha a+\beta b)^*=\overline{\alpha}a^*+\overline{\beta}b^*$,\qquad
 $\bullet$ $(ab)^*=b^*a^*$,\qquad
 $\bullet$ $(a^*)^*=a$.

 \leftskip=0pt
 \item $\alg$ is a normed space endowed with $\Vert\cdot\Vert_{\alg}$, and for $a,b\in\alg$, $\Vert ab\Vert_{\alg}\le\Vert a\Vert_{\alg}\Vert b\Vert_{\alg}$ holds.
 In addition, $\alg$ is complete with respect to $\Vert\cdot\Vert_{\alg}$.

 \item For $a\in\alg$, $\Vert a^*a\Vert_{\alg}=\Vert a\Vert_{\alg}^2$ holds.
\end{enumerate}
A $C^*$-algebra $\alg$ is called unital if there exists $a\in\alg$ such that $ab=b=ba$ for any $b\in\alg$.
We denote $a$ by $1_{\alg}$.
\end{definition}

\begin{definition}[$C^*$-module]
Let $\modu$ be an abelian group with \red{an} operation $+$.
If $\modu$ {is equipped with} a (right) $\alg$-multiplication, then $\modu$ is called a (right) {\em $C^*$-module} over $\alg$.
\end{definition}

\begin{definition}[$\alg$-valued inner product]
Let $\modu$ be a $C^*$-module over $\alg$.
A {$\mathbb{C}$-linear map with respect to the second variable} $\bracket{\cdot,\cdot}_{\modu}:\modu\times\modu\to\alg$ is called an $\alg$-valued {\em inner product} if it satisfies the following properties for $u,v,{w}\in\modu$ 
and $a,b\in\alg$:
\begin{enumerate}
 \item $\bracket{u,va+{w}b}_{\modu}=\bracket{u,v}_{\modu}a+\bracket{u,{w}}_{\modu}b$,
 \item $\bracket{v,u}_{\modu}=\bracket{u,v}_{\modu}^*$,
 \item $\bracket{u,u}_{\modu}\ge_{\alg} 0$,
 \item If $\bracket{u,u}_{\modu}=0$\red{,} then $u=0$.
\end{enumerate}
\end{definition}
\begin{definition}[$\alg$-valued absolute value and norm]
Let $\modu$ be a $C^*$-module over $\alg$.
For $u\in\modu$, the {\em $\alg$-valued absolute value} $\vert u\vert_{\modu}$ on $\modu$ is defined by the positive element $\vert u \vert_{\modu}$ of $\alg$ such that $\vert u\vert_{\modu}^2=\bracket{u,u}_{\modu}$.   
The nonnegative real-valued norm $\Vert \cdot\Vert_{\modu}$ on $\modu$ is defined by $\Vert u\Vert_{\modu} =\big\Vert\vert u\vert_{\modu}\big\Vert_\alg$. 
\end{definition}

Similar to the case of Hilbert spaces, the following Cauchy--Schwarz inequality for $\alg$-valued inner products is available~\cite[Proposition 1.1]{lance95}.
\begin{lemma}[Cauchy--Schwarz inequality]\label{lem:c-s}
Let $\modu$ be a Hilbert $\alg$-module.
For $u,v\in\modu$, the following inequality holds:
\begin{equation*}
\vert\bracket{u,v}_{\modu}\vert_{\alg}^2\;\le_{\alg}\Vert u\Vert_{\modu}^2\bracket{v,v}_{\modu}.
\end{equation*}
\end{lemma}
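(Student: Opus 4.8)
The statement to prove is
\begin{equation*}
\vert\bracket{u,v}_{\modu}\vert_{\alg}^2\;\le_{\alg}\Vert u\Vert_{\modu}^2\bracket{v,v}_{\modu},
\end{equation*}
for $u,v$ in a Hilbert $\alg$-module $\modu$. The plan is to mimic the classical Hilbert-space argument, but with care because $\alg$ is noncommutative: quantities like $\bracket{u,v}_{\modu}$ do not commute with scalars-from-$\alg$, so the usual ``optimize over $t$'' trick must be replaced by a fixed, carefully chosen element $a\in\alg$ plugged into a manifestly positive expression.

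First I would recall the basic positivity facts about $C^*$-algebras that I am allowed to use: (i) $\bracket{w,w}_{\modu}\ge_{\alg}0$ for every $w\in\modu$, by the inner-product axioms; (ii) for any $a\in\alg$ and any positive element $c\in\alg_+$, the element $a^*ca$ is again positive (write $c=b^*b$, so $a^*ca=(ba)^*(ba)$); and (iii) for $c\in\alg_+$ one has $c\le_{\alg}\Vert c\Vert_{\alg}1_{\alg}$ (equivalently, the spectrum of $c$ lies in $[0,\Vert c\Vert_{\alg}]$). The key algebraic step: for any $a\in\alg$, expand
\begin{equation*}
0\le_{\alg}\bracket{ua-v,ua-v}_{\modu}
= a^*\bracket{u,u}_{\modu}a - a^*\bracket{u,v}_{\modu} - \bracket{v,u}_{\modu}a + \bracket{v,v}_{\modu},
\end{equation*}
using sesquilinearity and $\bracket{v,u}_{\modu}=\bracket{u,v}_{\modu}^*$. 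Now choose $a=\bracket{u,v}_{\modu}$ (this is the noncommutative analogue of $t=\bracket{u,v}/\bracket{u,u}$, with the normalizing denominator postponed). Writing $\beta:=\bracket{u,v}_{\modu}$ and $c:=\bracket{u,u}_{\modu}\ge_{\alg}0$, the inequality becomes
\begin{equation*}
0\le_{\alg} \beta^*c\,\beta - 2\beta^*\beta + \bracket{v,v}_{\modu},
\end{equation*}
where I used $\beta^*\beta=\vert\beta\vert_{\alg}^2$. Since $c\le_{\alg}\Vert c\Vert_{\alg}1_{\alg}=\Vert u\Vert_{\modu}^2 1_{\alg}$ and $\beta^*(\cdot)\beta$ preserves order (fact (ii) applied to the positive difference $\Vert u\Vert_{\modu}^2 1_{\alg}-c$), we get $\beta^*c\beta\le_{\alg}\Vert u\Vert_{\modu}^2\,\beta^*\beta$. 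Substituting and rearranging yields
\begin{equation*}
\vert\beta\vert_{\alg}^2 = \beta^*\beta \le_{\alg} \beta^*c\beta - \beta^*\beta + \beta^*\beta \le_{\alg} \dots
\end{equation*}
hmm — I need to be a little more careful: from $0\le_{\alg}\beta^*c\beta-2\beta^*\beta+\bracket{v,v}_{\modu}$ and $\beta^*c\beta\le_{\alg}\Vert u\Vert_{\modu}^2\beta^*\beta$ I obtain $0\le_{\alg}\Vert u\Vert_{\modu}^2\beta^*\beta-2\beta^*\beta+\bracket{v,v}_{\modu}$, i.e. $(2-\Vert u\Vert_{\modu}^2)\beta^*\beta\le_{\alg}\bracket{v,v}_{\modu}$, which is not quite the claimed form. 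The fix is to rescale $u$ first: by homogeneity it suffices to prove the inequality when $\Vert u\Vert_{\modu}=1$ (replace $u$ by $u/\Vert u\Vert_{\modu}$, noting both sides scale by $\Vert u\Vert_{\modu}^2$, and handle $u=0$ trivially since then $\beta=0$). With $\Vert u\Vert_{\modu}=1$ the coefficient $2-\Vert u\Vert_{\modu}^2$ equals $1$, giving $\beta^*\beta\le_{\alg}\bracket{v,v}_{\modu}=\Vert u\Vert_{\modu}^2\bracket{v,v}_{\modu}$, which is exactly the claim.

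The main obstacle is the noncommutativity: one cannot simply ``divide'' by $\bracket{u,u}_{\modu}$ (it need not be invertible), so the denominator must be absorbed via the reduction to $\Vert u\Vert_{\modu}=1$ together with the order bound $\bracket{u,u}_{\modu}\le_{\alg}\Vert u\Vert_{\modu}^2 1_{\alg}$. A secondary subtlety is justifying that conjugation $x\mapsto\beta^*x\beta$ is order-preserving and that $\le_{\alg}$ is a genuine partial order compatible with addition — both standard $C^*$-algebra facts (the positive cone is closed under sums and under conjugation), which I would invoke by citing the standard theory. One also needs $\alg$ to be unital for ``$c\le_{\alg}\Vert c\Vert_{\alg}1_{\alg}$'' to make literal sense; if $\alg$ is not assumed unital one passes to its unitization, which does not affect the order relation among elements of $\alg$. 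Assembling these pieces gives the result.
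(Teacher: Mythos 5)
Your proof is correct: expanding $0\le_{\alg}\bracket{ua-v,ua-v}_{\modu}$ with $a=\bracket{u,v}_{\modu}$, using $\bracket{u,u}_{\modu}\le_{\alg}\Vert u\Vert_{\modu}^2 1_{\alg}$ and order-preservation under conjugation, and reducing to $\Vert u\Vert_{\modu}=1$ (with the $u=0$ and non-unital cases handled as you note) is exactly the standard argument; your normalization step plays the same role as optimizing over the real scalar $t$ in the classical proof. The paper does not prove this lemma itself but cites it as Proposition 1.1 of \citet{lance95}, whose proof is essentially the one you give, so your attempt matches the intended argument.
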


\setcounter{table}{0}
\begin{table}[t]
\caption{Notation table}\label{tab1}
\vspace{.3cm}
\renewcommand{\arraystretch}{1.1}
 \begin{tabularx}{\linewidth}{|c|X|}
\hline
$\alg$ & A $C^*$-algebra\\
$\alg_+$ & The subset of $\alg$ composed of all positive elements in $\alg$\\
$\le_{\alg}$ & For $a,b\in\alg$, $a\le_{\alg}b$ means $b-a$ is positive\\
$\lneq_{\alg}$ & For $a,b\in\alg$, $a\lneq_{\alg}b$ means $b-a$ is positive and nonzero\\
$\vert\cdot\vert_{\alg}$ & The $\alg$-valued absolute value in $\alg$ defined as $\vert a\vert_{\alg}=(a^*a)^{1/2}$ for $a\in\alg$.\\
$\mcl{X}$ & An input space\\
$\mcl{Y}$ & An output space\\
$k$ & An $\alg$-valued positive definite kernel\\
$\phi$ & The feature map endowed with $k$\\
$\modu_k$ & The RKHM associated with $k$\\
$\mathbf{G}$ & The $\alg$-valued Gram matrix defined as $\bG_{i,j}=k(x_i,x_j)$ for given samples $x_1,\ldots,x_n\in\mcl{X}$\\
$F$ & The discrete Fourier transform (DFT) matrix, whose $(i,j)$-entry is $\omega^{(i-1)(j-1)}/\sqrt{p}$\\
\hline
 \end{tabularx}
\renewcommand{\arraystretch}{1} 
\end{table}

\section{Proofs}\label{ap:proofs}
We provide the proofs of the propositions and lemmas in the main thesis.
\begin{mythm}[Lemma~\ref{lem:circulant_isomorphic}]
The group $C^*$-algebra $C^*(\mathbb{Z}/p\mathbb{Z})$ is $C^*$-isomorphic to $Circ(p)$.
\end{mythm}
\begin{proof}
Let $f:C^*(\mathbb{Z}/p\mathbb{Z})\to Circ(p)$ be a map defined as $f(x)=\opn{circ}(x(0),\ldots,x(p-1))$.
Then, $f$ is linear and invertible.
In addition, we have
\begin{align*}
f(x)f(y)&=\opn{circ}\bigg(\sum_{z\in\mathbb{Z}/p\mathbb{Z}}x(0-z)y(z),\ldots,\sum_{z\in\mathbb{Z}/p\mathbb{Z}}x(p-1-z)y(z)\bigg)\\
&=\opn{circ}((x\cdot y)(0),\ldots,(x\cdot y)(p-1))=f(x\cdot y),\\
f(x)^*&=\opn{circ}(\overline{x(0)},\overline{x(p-1)},\ldots,\overline{x(1)})=f(x^*),\\
\Vert f(x)\Vert&=\bigg\Vert F\opn{diag}\bigg(\sum_{z\in\mathbb{Z}/p\mathbb{Z}}x(z)\mr{e}^{2\pi \sqrt{-1}z\cdot 0/p},\ldots,\sum_{z\in \mathbb{Z}/p\mathbb{Z}}x(z)\mr{e}^{2\pi \sqrt{-1}z(p-1)/p}\bigg)F^*\bigg\Vert
=\Vert x\Vert,
\end{align*}
where the last formula is derived by Lemma~\ref{lem:circulant_decom}.
Thus, $f$ is a $C^*$-isomorphism.
\end{proof}

In the following, for a probability space $\Omega$ and a random variable (measurable map) $g:\Omega\to\mathbb{C}$, the integral of $g$ is denoted by $\mr{E}[g]$.
\begin{mythm}[Lemma~\ref{lem:jensen}]
For a positive $\alg$-valued random variable $c:\Omega\to\alg_{+}$, we have $\mr{E}[c^{1/2}]\le_{\alg}\mr{E}[c]^{1/2}$.
\end{mythm}
\begin{proof}
For any $\epsilon>0$, let $x_0=\mr{E}[c+\epsilon 1_{\alg}]$, $a=1/2x_0^{-1/2}$, and $b=1/2x_0^{1/2}$.
Then, we have $ax_0+b=x_0^{1/2}$ and for any $x\in\alg_+$, we have
\begin{align*}
(ax+b)^*(ax+b)-x&=\frac{1}{4}xx_0^{-1}x+\frac{1}{4}x+\frac{1}{4}x+\frac{1}{4}x_0-x\\
&=\bigg(\frac{1}{2}x_0^{-1/2}x-\frac{1}{2}x_0^{1/2}\bigg)^*\bigg(\frac{1}{2}x_0^{-1/2}x-\frac{1}{2}x_0^{1/2}\bigg)
=(ax-b)^*(ax-b)\ge_{\alg}0.
\end{align*}
Thus, we have $ax+b=\vert ax+b\vert_{\alg}\ge_{\alg}\vert x^{1/2}\vert_{\alg}=x^{1/2}$.
Therefore, we have
\begin{align*}
\mr{E}\big[(c+\epsilon 1_{\alg})^{1/2}\big]\le_{\alg} \mr{E}[a(c+\epsilon 1_{\alg})+b]
=ax_0+b=x_0^{1/2}=\mr{E}[(c+\epsilon 1_{\alg})]^{1/2}.
\end{align*}
Since $\epsilon>0$ is arbitrary, we have $\mr{E}[c^{1/2}]\le_{\alg}\mr{E}[c]^{1/2}$.
\end{proof}

\begin{mythm}[Proposition \ref{prop:A_valued_complexity}]
Let $B>0$ and let $\mcl{F}=\{f\in\modu_k\,\mid\,\Vert f\Vert_k\le B\}$ and let $C=\int_{\Omega}\Vert \sigma_i(\omega)\Vert_{\alg}^2\mr{d}P(\omega)$. 
Then, we have
\begin{equation*}
\hat{R}(\mcl{F},\boldsymbol\sigma,\mathbf{x})\le_{{\alg}} \frac{B\sqrt{C}}{n}\bigg(\sum_{i=1}^n\Vert k(x_i,x_i)\Vert_{\alg}\bigg)^{1/2}I.
\end{equation*}
\end{mythm}
\begin{proof}
By Lemma~\ref{lem:jensen}, we have
\begin{align*}
&\hat{R}(\mcl{F},\boldsymbol\sigma,\mathbf{x})
=\mr{E}\bigg[\sup_{f\in\mcl{F}}\bigg\vert \frac{1}{n}\sum_{i=1}^nf(x_i)^*\sigma_i\bigg\vert_{\alg}\bigg]
=\frac{1}{n}\mr{E}\bigg[\sup_{f\in\mcl{F}}\bigg\vert \Bbracket{f,\sum_{i=1}^n\phi(x_i)\sigma_i}_k\bigg\vert_{\alg}\bigg]\\
&\quad\le \frac{1}{n}\mr{E}\bigg[\sup_{f\in\mcl{F}}\bigg\vert \sum_{i=1}^n\phi(x_i)\sigma_i\bigg\vert_k\Vert f\Vert_k\bigg]
=\frac{1}{n}\mr{E}\bigg[\bigg\vert \sum_{i=1}^n\phi(x_i)\sigma_i\bigg\vert_kB\bigg]
=\frac{B}{n}\mr{E}\bigg[\bigg(\sum_{i,j=1}^n\sigma_i^*k(x_i,x_j)\sigma_j\bigg)^{1/2}\bigg]\\
&\quad \le \frac{B}{n}\mr{E}\bigg[\sum_{i,j=1}^n\sigma_i^*k(x_i,x_j)\sigma_j\bigg]^{1/2}
=\frac{B}{n}\bigg(\sum_{i=1}^n\mr{E}[\sigma_i^*k(x_i,x_i)\sigma_i]\bigg)^{1/2}
\le \frac{B}{n}\bigg(\sum_{i=1}^n\mr{E}[\Vert \sigma_i^*k(x_i,x_i)\sigma_i\Vert_{\alg}]\bigg)^{1/2}I\\
&\quad\le \frac{B}{n}\mr{E}\big[\Vert \sigma_i\Vert_{\alg}^2\big]^{\red{1/2}}\bigg(\sum_{i=1}^n\Vert k(x_i,x_i)\Vert_{\alg}\bigg)^{1/2}I
=\frac{B\sqrt{C}}{n}\bigg(\sum_{i=1}^n\Vert k(x_i,x_i)\Vert_{\alg}\bigg)^{1/2}I,
\end{align*}
where the third inequality is derived by the Cauchy--Schwartz inequality (Lemma~\ref{lem:c-s}).
\end{proof}

In the following, we put $\alg=\mathbb{C}^{p\times p}$.
The following lemmas are applied for the proofs of Lemmas~\ref{prop:error_rademacher} and \ref{prop:generalization_err}.
\begin{lemma}
Let $a,b\in\mathbb{R}^{p\times p}$ or $a,b\in\alg_+$.
If $a\le_{\alg}b$, then $\opn{tr}(a)\le\opn{tr}(b)$.
\end{lemma}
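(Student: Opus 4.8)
The plan is to split into the two cases according to the hypothesis, since they are handled by slightly different (but parallel) arguments. First I would treat the case $a,b\in\alg_+$, i.e.\ $a,b$ are positive elements of $\mathbb{C}^{p\times p}$. Here $a\le_{\alg}b$ means $b-a$ is positive, so it suffices to show that every positive matrix $c\in\mathbb{C}^{p\times p}$ satisfies $\opn{tr}(c)\ge 0$, and then apply linearity of the trace to $c=b-a$. But a positive element of a $C^*$-algebra of operators is of the form $c=d^*d$ by Definition~\ref{def:positive}, whence $\opn{tr}(c)=\opn{tr}(d^*d)=\sum_{i,j}\vert d_{ij}\vert^2\ge 0$. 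Combined with $\opn{tr}(b)-\opn{tr}(a)=\opn{tr}(b-a)$ this gives $\opn{tr}(a)\le\opn{tr}(b)$.

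Next I would handle the case $a,b\in\mathbb{R}^{p\times p}$. The subtlety is that here $a,b$ need not be positive (or even symmetric) themselves; only the difference $b-a$ is assumed positive. So again set $c=b-a$ and reduce, by linearity of $\opn{tr}$, to showing $\opn{tr}(c)\ge 0$ for $c\in\alg_+$. This is exactly the computation from the previous paragraph, so in fact the real-matrix case collapses to the positive-element case once one observes that the hypothesis $a\le_{\alg}b$ already forces $b-a\in\alg_+$ regardless of where $a$ and $b$ individually live. In other words, both bullet cases are really one: $b-a$ is positive, positive matrices have nonnegative trace because $\opn{tr}(d^*d)=\Vert d\Vert_{\mathrm{HS}}^2\ge0$, and linearity finishes it.

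The only thing to be careful about — and what I would flag as the main (mild) obstacle — is making sure the notion of "positive" being used is the $C^*$-algebraic one from Definition~\ref{def:positive} (existence of a factorization $c=d^*d$) rather than merely "symmetric with nonnegative eigenvalues," and checking these agree in $\mathbb{C}^{p\times p}$; this is standard but worth a line. One should also note implicitly that a positive element of $\mathbb{C}^{p\times p}$ is self-adjoint, so $\opn{tr}(c)$ is real, which is what lets us write the inequality $\opn{tr}(a)\le\opn{tr}(b)$ between real numbers. With that, the proof is a two-line computation: write $c=b-a=d^*d$, expand $\opn{tr}(d^*d)=\sum_{i,j}\lvert d_{ij}\rvert^2\ge0$, and conclude by linearity of the trace.
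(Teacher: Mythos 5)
Your proposal is correct and follows essentially the same route as the paper: observe that $b-a\in\alg_+$, so $\opn{tr}(b-a)\ge 0$, and conclude by linearity of the trace; you merely spell out the nonnegativity of the trace of a positive matrix via the factorization $b-a=d^*d$ and $\opn{tr}(d^*d)=\sum_{i,j}\vert d_{ij}\vert^2$, which the paper leaves implicit. No gap.
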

\begin{proof}
Since $b-a\in\alg_+$, we have
\red{$0\le \opn{tr}(b-a)=\opn{tr}(b)-\opn{tr}(a)$.}
\end{proof}

\begin{lemma}
Let $\mcl{S}$ be a subset of $\alg_+$.
Then, $\opn{tr}(\sup_{s\in\mcl{S}}s)=\sup_{s\in\mcl{S}}\opn{tr}(s)$.
\end{lemma}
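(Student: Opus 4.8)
\textbf{Proof plan for the final statement ($\opn{tr}(\sup_{s\in\mcl{S}}s)=\sup_{s\in\mcl{S}}\opn{tr}(s)$ for $\mcl{S}\subseteq\alg_+$).}

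The plan is to exploit the fact, established in the previous lemma, that the trace is order-preserving on $\alg_+$, together with the definition of the supremum with respect to $\le_{\alg}$ (Definition~\ref{def:sup}). Write $s_0=\sup_{s\in\mcl{S}}s$, which is assumed to exist in $\alg$. First I would prove the inequality $\sup_{s\in\mcl{S}}\opn{tr}(s)\le\opn{tr}(s_0)$: since $s_0$ is an upper bound for $\mcl{S}$, we have $s\le_{\alg}s_0$ for every $s\in\mcl{S}$, and the order-preservation of $\opn{tr}$ gives $\opn{tr}(s)\le\opn{tr}(s_0)$ for every $s$; taking the supremum over $s\in\mcl{S}$ yields the claim. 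Here one uses implicitly that $\opn{tr}(s_0)$ is a finite real number because $s_0\in\alg=\mathbb{C}^{p\times p}$.

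For the reverse inequality, set $t=\sup_{s\in\mcl{S}}\opn{tr}(s)$ (a real number, finite because each $\opn{tr}(s)\le\opn{tr}(s_0)$). I would argue that $t\ge\opn{tr}(s_0)$ by a minimality argument against a suitably chosen lower bound of $\mcl{S}$ in $\alg$ — but since the supremum in a $C^*$-algebra need not behave like a supremum of real numbers coordinatewise, the cleaner route is to use the linearity of the trace directly. Concretely: for each $s\in\mcl{S}$ we have $0\le_{\alg}s\le_{\alg}s_0$, hence $0\le_{\alg}s_0-s$, hence $0\le\opn{tr}(s_0-s)=\opn{tr}(s_0)-\opn{tr}(s)$, so $\opn{tr}(s)\le\opn{tr}(s_0)$; this only re-derives the easy direction, so the real content must come from showing $s_0$ is \emph{not} strictly above what the traces force. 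I expect the intended argument is: if $t<\opn{tr}(s_0)$, then $s_0$ would fail to be the \emph{least} upper bound, because one can construct a smaller upper bound. For $\mathbb{C}^{p\times p}$ one can make this explicit: any upper bound $a$ of $\mcl{S}$ satisfies $a\ge_{\alg}s$ for all $s$, so $\opn{tr}(a)\ge t$; combining with the fact that $s_0$ is the infimum of the set of upper bounds and that $\opn{tr}$ is order-preserving and linear, one gets $\opn{tr}(s_0)\le\inf\{\opn{tr}(a):a\text{ upper bound of }\mcl{S}\}$. The remaining point is to exhibit an upper bound $a$ with $\opn{tr}(a)$ arbitrarily close to $t$, which can be done by approximating along a sequence $s_n\in\mcl{S}$ with $\opn{tr}(s_n)\to t$ and using that the $s_n$ together with $s_0$ generate, via joins, upper bounds with controlled trace.

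The main obstacle is precisely this reverse direction: suprema in $C^*$-algebras are subtle, and $\mcl{S}$ need not be directed, so one cannot simply take a monotone limit. I anticipate the proof will either (i) invoke that in the finite-dimensional algebra $\mathbb{C}^{p\times p}$ the relevant suprema, when they exist, can be computed via spectral/functional-calculus arguments and the trace is weak-$*$ continuous, or (ii) reduce to the scalar case by testing against rank-one projections and noting $\opn{tr}(s)=\sum_j\bracket{e_j,se_j}$, handling each diagonal entry as a supremum of real numbers. Route (ii) seems most in keeping with the paper's elementary style: for a fixed unit vector $e$, $s\mapsto\bracket{e,se}$ is order-preserving and linear, so $\bracket{e,s_0e}=\sup_{s\in\mcl{S}}\bracket{e,se}$ provided the supremum $s_0$ interacts correctly with this functional — which again needs the least-upper-bound property. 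I would therefore carry out the steps in the order: (1) recall $\opn{tr}$ is linear and order-preserving; (2) dispatch $\le$ trivially; (3) for $\ge$, fix an orthonormal basis and reduce to showing $\bracket{e_j,s_0e_j}=\sup_{s}\bracket{e_j,se_j}$ for each $j$, then sum; (4) close the reduction using the defining property of $s_0$ as the least upper bound, the point where the genuine work lies.
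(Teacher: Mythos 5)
Your first inequality ($\sup_{s\in\mcl{S}}\opn{tr}(s)\le\opn{tr}(s_0)$, from linearity and order-preservation of the trace) is correct and matches the paper. The problem is the reverse inequality, which you yourself flag as "where the genuine work lies" and then never carry out; moreover, the concrete route you favour cannot close it. In route (ii), even if one could show $\bracket{e_j,s_0e_j}=\sup_{s\in\mcl{S}}\bracket{e_j,se_j}$ for every basis vector $e_j$, summing over $j$ only gives $\opn{tr}(s_0)=\sum_j\sup_{s}\bracket{e_j,se_j}\ge\sup_{s}\sum_j\bracket{e_j,se_j}=\sup_{s}\opn{tr}(s)$, because a supremum does not commute with a finite sum: different $j$'s may have their near-maximizers attained by different elements $s\in\mcl{S}$. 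So the per-entry reduction reproduces the easy direction and can never yield $\opn{tr}(s_0)\le\sup_{s}\opn{tr}(s)$. The other sketch, producing upper bounds of controlled trace "via joins" of a sequence $s_n$ with $s_0$, is also not available: for noncomparable self-adjoint matrices no least upper bound exists (Kadison's anti-lattice phenomenon), so there is no join operation to invoke, and $\mcl{S}$ is not assumed directed, as you note.

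The idea you are missing, and the one the paper uses, is a scaling argument against the least-upper-bound property. Write $a=\sup_{s\in\mcl{S}}s$ and fix $\epsilon>0$. If $(1-\epsilon)a$ were an upper bound of $\mcl{S}$, it would be an upper bound with $(1-\epsilon)a\lneq_{\alg}a$, contradicting that $a$ is the supremum; hence there exists $t\in\mcl{S}$ not dominated by $(1-\epsilon)a$, from which the paper deduces $(1-\epsilon)\opn{tr}(a)\le\opn{tr}(t)\le\sup_{s\in\mcl{S}}\opn{tr}(s)$, and letting $\epsilon\to0$ finishes the hard direction. (Even this deduction needs care: in a partial order, failure of $t\le_{\alg}(1-\epsilon)a$ does not by itself give $(1-\epsilon)a\le_{\alg}t$, so the passage to traces is the delicate point.) Nothing equivalent to this scaling-versus-minimality step appears in your plan, so as written the proposal does not prove the lemma.
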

\begin{proof}
\underline{($\opn{tr}(\sup_{s\in\mcl{S}}s)\le \sup_{s\in\mcl{S}}\opn{tr}(s)$)}\quad
Let $a=\sup_{s\in\mcl{S}}s$.
Let $\epsilon>0$, and assume for any $t\in\mcl{S}$, we have $(1-\epsilon)a\ge_{\alg} t$.
Then, $(1-\epsilon)a$ is an upper bound of $\mcl{S}$ that satisfies $(1-\epsilon)a\lneq_{\alg}a$, which contradicts the fact that $a$ is the supremum of $\mcl{S}$.
Thus, there exists $t\in\mcl{S}$ such that $(1-\epsilon)a\lneq_{\alg} t$.
Therefore, we have 
\begin{align*}
(1-\epsilon)\opn{tr}(a)\le \opn{tr}(t)\le \sup_{s\in\mcl{S}}\opn{tr}(s).
\end{align*}
Since $\epsilon>0$ is arbitrary, we have $\opn{tr}(a)\le \sup_{s\in\mcl{S}}\opn{tr}(s)$.

\underline{($\opn{tr}(\sup_{s\in\mcl{S}}s)\ge \sup_{s\in\mcl{S}}\opn{tr}(s)$)}\quad
Let $\epsilon>0$.
Then, there exists $t\in\mcl{S}$ such that
\begin{align*}
(1-\epsilon)\sup_{s\in\mcl{S}}\opn{tr}(s)\le \opn{tr}(t)\le \opn{tr}(\sup_{s\in\mcl{S}}s).
\end{align*}
Since $\epsilon>0$ is arbitrary, we have $\opn{tr}(\sup_{s\in\mcl{S}}s)\ge \sup_{s\in\mcl{S}}\opn{tr}(s)$.
\end{proof}

\begin{lemma}
Let $a\in\mathbb{R}^{p\times p}$.
Then, $\opn{tr}(a)\le \opn{tr}(\vert a\vert_{\alg})$.
\end{lemma}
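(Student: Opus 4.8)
The plan is to reduce the inequality to a statement about singular values via the singular value decomposition (SVD). Write $a = V\Sigma W^{T}$, where $V,W\in\mathbb{R}^{p\times p}$ are orthogonal and $\Sigma = \opn{diag}(\sigma_1,\ldots,\sigma_p)$ with $\sigma_1,\ldots,\sigma_p\ge 0$ the singular values of $a$. The first step is to identify $\vert a\vert_{\alg}$ explicitly: since $a$ is real, $a^{*}a = a^{T}a = W\Sigma^{2}W^{T}$, and $W\Sigma W^{T}$ is positive semidefinite and squares to $a^{T}a$, so uniqueness of the positive square root gives $\vert a\vert_{\alg} = W\Sigma W^{T}$; in particular $\opn{tr}(\vert a\vert_{\alg}) = \sum_{i=1}^{p}\sigma_i$.

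Next I would expand $\opn{tr}(a)$ using cyclicity of the trace: $\opn{tr}(a) = \opn{tr}(V\Sigma W^{T}) = \opn{tr}(\Sigma\,W^{T}V) = \sum_{i=1}^{p}\sigma_i\,(W^{T}V)_{ii}$. The matrix $W^{T}V$ is orthogonal, so each of its columns is a unit vector and hence $\vert (W^{T}V)_{ii}\vert\le 1$ for every $i$. Since $a$ is real, $\opn{tr}(a)$ is a real number, so
\begin{equation*}
\opn{tr}(a) = \sum_{i=1}^{p}\sigma_i\,(W^{T}V)_{ii} \le \sum_{i=1}^{p}\sigma_i\,\vert (W^{T}V)_{ii}\vert \le \sum_{i=1}^{p}\sigma_i = \opn{tr}(\vert a\vert_{\alg}),
\end{equation*}
which is the claim.

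An alternative route, if one prefers to avoid SVD, is the polar decomposition $a = u\vert a\vert_{\alg}$ with $u$ an orthogonal partial isometry, combined with the Cauchy--Schwarz inequality for the Hilbert--Schmidt inner product: by cyclicity $\opn{tr}(a) = \opn{tr}(\vert a\vert_{\alg}^{1/2}\,u\,\vert a\vert_{\alg}^{1/2}) \le \Vert \vert a\vert_{\alg}^{1/2}\Vert_{\mathrm{HS}}\,\Vert u\vert a\vert_{\alg}^{1/2}\Vert_{\mathrm{HS}} \le \Vert \vert a\vert_{\alg}^{1/2}\Vert_{\mathrm{HS}}^{2} = \opn{tr}(\vert a\vert_{\alg})$, where the last inequality uses that $u$ is a contraction (so $u^{*}u\le I$). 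There is no real obstacle here: the statement is the classical fact that the sum of the eigenvalues of a real matrix is bounded by the sum of its singular values. The only points needing a moment's attention are (i) reading off $\vert a\vert_{\alg} = W\Sigma W^{T}$ from the SVD, which is immediate from uniqueness of the positive square root, and (ii) observing that we obtain the one-sided inequality $\opn{tr}(a)\le\opn{tr}(\vert a\vert_{\alg})$ rather than merely $\vert\opn{tr}(a)\vert\le\opn{tr}(\vert a\vert_{\alg})$, which is free because $\opn{tr}(a)\in\mathbb{R}$ when $a$ is real.
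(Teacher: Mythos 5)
Your proof is correct, but it takes a different route from the paper. The paper's argument is a one-liner through the spectrum: writing $\opn{tr}(a)=\sum_i\lambda_i\le\sum_i\vert\lambda_i\vert$ for the eigenvalues $\lambda_i$ of $a$, it then invokes Weyl's inequality (the majorization bound $\sum_i\vert\lambda_i\vert\le\sum_i\kappa_i$ relating eigenvalues to singular values $\kappa_i$) and identifies $\sum_i\kappa_i=\opn{tr}(\vert a\vert_{\alg})$. You instead bypass eigenvalues entirely: from the real SVD $a=V\Sigma W^{T}$ you read off $\vert a\vert_{\alg}=W\Sigma W^{T}$ via uniqueness of the positive square root, and then bound $\opn{tr}(a)=\sum_i\sigma_i\,(W^{T}V)_{ii}\le\sum_i\sigma_i$ using only that the diagonal entries of the orthogonal matrix $W^{T}V$ have absolute value at most $1$; your polar-decomposition/Hilbert--Schmidt Cauchy--Schwarz variant is likewise sound. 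What you gain is a self-contained, elementary argument that does not appeal to the (nontrivial) eigenvalue--singular value majorization; what the paper's route buys is brevity and the stronger intermediate fact $\sum_i\vert\lambda_i\vert\le\sum_i\kappa_i$, of which the needed trace inequality is a special consequence. Your closing remarks (the identification of $\vert a\vert_{\alg}$ from the SVD, and that the one-sided bound is all that is required since $\opn{tr}(a)\in\mathbb{R}$) are exactly the right points to flag, and both are handled correctly.
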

\begin{proof}
Let $\lambda_1,\ldots,\lambda_p$ be eigenvalues of $a$, and let $\kappa_1,\ldots,\kappa_p$ be singular values of $a$.
Then, by Weyl's inequality, we have
\begin{align*}
 \opn{tr}(a)=\sum_{i=1}^p\lambda_i\le \sum_{i=1}^p\vert\lambda_i\vert\le \sum_{i=1}^p\kappa_i=\opn{tr}(\vert a\vert_{\alg}).
\end{align*}
\end{proof}

We now show Lemmas~\ref{prop:error_rademacher} and \ref{prop:generalization_err}.
\begin{mythm}[Lemma~\ref{prop:error_rademacher}]
Let $s_1,\ldots,s_n$ be $\{-1,1\}$-valued Rademacher variables (i.e. independent uniform random variables taking values in $\{-1,1\}$)
and let $\sigma_1,\ldots,\sigma_n$ be i.i.d. $\alg$-valued random variables each of whose element is the Rademacher variable.
Let $\mathbf{s}=\{s_i\}_{i=1}^n$, and $\mathbf{z}=\{(x_i,y_i)\}_{i=1}^n$.
Then, we have
\begin{align*}
&\opn{tr}\hat{R}(\mcl{G}(\mcl{F}),\mathbf{s},\mathbf{z})
\le {L}\opn{tr}\hat{R}(\mcl{F},\boldsymbol\sigma,\mathbf{x}).
\end{align*}
\end{mythm}
\begin{proof}
For $f_1,f_2\in\mcl{F}$, we have
\begin{align*}
\opn{tr}(\vert f_1(x_i)-y_i\vert_{\alg}^2)-\opn{tr}(\vert f_2(x_i)-y_i\vert_{\alg}^2)
&=\opn{tr}((f_1(x_i)-y_i+f_2(x_i)-y_i)^*(f_1(x_i)-y_i-f_2(x_i)+y_i))\\
&=\Vert f_1(x_i)-y_i+f_2(x_i)-y_i\Vert_{\alg}\, \Vert f_1(x_i)-y_i-f_2(x_i)+y_i\Vert_{\mr{HS}},
\end{align*}
where the first equality holds since for $a_1,a_2\in\mathbb{R}^{p\times p}$, $\opn{tr}(a_1^*a_2)=\opn{tr}(a_2^*a_1)$ and $\Vert\cdot\Vert_{\mr{HS}}$ is the Hilbert--Schmidt norm in $\mathbb{C}^{p\times p}$.
In addition, we have
\begin{align*}
\Vert f_1(x_i)-y_i+f_2(x_i)-y_i\Vert_{\alg}
&\le \Vert \bracket{\phi(x_i),f_1+f_2}_k-2y_i\Vert_{\alg}\\
&=\Vert k(x_i,x_i)\Vert^{1/2}\Vert f_1+f_2\Vert_k+2\Vert y_i\Vert_{\alg}
=2(B\sqrt{D}+E)=\frac{L}{\sqrt{2}}.
\end{align*}
Thus, by setting $\psi_i(f)=\opn{tr}(\vert f(x_i)-y_i\vert^2)$, $\phi_i(f)=L/\sqrt{2}f(x_i)$, and $\Vert\cdot\Vert=\Vert\cdot\Vert_{\mr{HS}}$ in Theorem 3 of \citet{maurer16}, we obtain 
\begin{align*}
\opn{tr}\mr{E}\bigg[\sup_{f\in\mcl{F}}\frac{1}{n}\sum_{i=1}^ns_i\vert f(x_i)-y_i\vert_{\alg}^2\bigg]
&=\mr{E}\bigg[\sup_{f\in\mcl{F}}\frac{1}{n}\sum_{i=1}^ns_i\opn{tr}(\vert f(x_i)-y_i\vert_{\alg}^2)\bigg]
\le \sqrt{2}\frac{L}{\sqrt{2}}\mr{E}\bigg[\sup_{f\in\mcl{F}}\frac{1}{n}\sum_{i=1}^n\bracket{\sigma_i,f(x_i)}_{\mr{HS}}\bigg]\\
&\le L\mr{E}\bigg[\sup_{f\in\mcl{F}}\opn{tr}\bigg\vert\frac{1}{n}\sum_{i=1}^nf(x_i)^*\sigma_i\bigg\vert_{\alg}\bigg]
=L\opn{tr}\mr{E}\bigg[\sup_{f\in\mcl{F}}\bigg\vert\frac{1}{n}\sum_{i=1}^nf(x_i)^*\sigma_i\bigg\vert_{\alg}\bigg].
\end{align*}
\end{proof}

\begin{mythm}[Lemma~\ref{prop:generalization_err}]
Let $z:\Omega\to\mcl{X}\times \mcl{Y}$ be a random variable and let $g\in\mcl{G}(\mcl{F})$.
Under the same notations and assumptions as Proposition~\ref{prop:error_rademacher}, for any $\delta\in (0,1)$, with probability $\ge 1-\delta$, we have
\begin{align*}
&\opn{tr}\bigg(\mr{E}[g(z)]-\frac{1}{n}\sum_{i=1}^ng(x_i,y_i)\bigg)
\le 2\opn{tr}\hat{R}(\mcl{G}(\mcl{F}),\mathbf{s},\mathbf{z})+3\sqrt{2D}p\sqrt{\frac{\log({2}/{\delta})}{n}}.
\end{align*}
\end{mythm}
\begin{proof}
For a random variable $S=(z_1,\ldots,z_n):\Omega\to (\mcl{X}\times\mcl{Y})^n$, let $\Phi(S)=\opn{tr}(\sup_{g\in\mcl{G}(\mcl{F})}(\mr{E}[g(z)]-1/n\sum_{i=1}^ng(z_i)))$.
For $i=1,\ldots,n$, let $S_i=(z_1',\ldots,z_n')$, where $z_j=z_j'$ for $j\neq i$ and $z_i\neq z_i'$.
Then, we have
\begin{align*}
\Phi(S)-\Phi(S_i)&\le \opn{tr}\bigg(\sup_{g\in\mcl{G}(\mcl{F})}\bigg(\mr{E}[g(z)]-\frac{1}{n}\sum_{j=1}^ng(z_j)\bigg)-\sup_{g\in\mcl{G}(\mcl{F})}\bigg(\mr{E}[g(z)]-\frac{1}{n}\sum_{j=1}^ng(z_j')\bigg)\bigg)\\
&\le \frac{1}{n}\opn{tr}\bigg(\sup_{g\in\mcl{G}(\mcl{F})}\bigg(\sum_{j=1}^ng(z_j)-\sum_{j=1}^ng(z_j')\bigg)\bigg)
=\frac{1}{n}\opn{tr}\bigg(\sup_{g\in\mcl{G}(\mcl{F})}(g(z_i)-g(z_i'))\bigg)\\
&\le \frac{p}{n}\sup_{g\in\mcl{G}(\mcl{F})}\Vert g(z_i)-g(z_i')\Vert_{\alg}
\le \frac{2\sqrt{D}p}{n}.
\end{align*}
By McDiarmid’s inequality, for any $\delta\in (0,1)$, with probability $\ge 1-\delta/2$, we have
\begin{equation*}
\Phi(S)-\mr{E}[\Phi(S)]\le \sqrt{\frac{1}{2}\sum_{i=1}^n\bigg(\frac{2\sqrt{D}p}{n}\bigg)^2\log\frac{2}{\delta}}
=\sqrt{2D}p\sqrt{\frac{\log\frac{2}{\delta}}{n}}
\end{equation*}
Thus, for any $g\in\mcl{G}(\mcl{F})$, we have
\begin{align*}
\opn{tr}\bigg(\mr{E}[g(z)]-\frac{1}{n}\sum_{i=1}^ng(z_i)\bigg)
\le \Phi(S)
\le \mr{E}[\Phi(S)]+\sqrt{2D}p\sqrt{\frac{\log\frac{2}{\delta}}{n}}.
\end{align*}
For the remaining part, the proof is the same as that of Theorem 3.3 of \citet{mohri18}.
Since 
\begin{equation*}
\Phi(S)=\opn{tr}\bigg(\sup_{g\in\mcl{G}(\mcl{F})}\bigg(\mr{E}[g(z)]-\frac{1}{n}\sum_{i=1}^ng(z_i)\bigg)\bigg)
=\sup_{g\in\mcl{G}(\mcl{F})}\bigg(\mr{E}[\opn{tr}(g(z))]-\frac{1}{n}\sum_{i=1}^n\opn{tr}(g(z_i))\bigg),
\end{equation*}
we replace $g$ in the proof of Theorem 3.3 in \red{\citet{mohri18}} by $z\mapsto\opn{tr}(g(z))$ in our case and derive
\begin{align*}
\opn{tr}\bigg(\mr{E}[g(z)]-\frac{1}{n}\sum_{i=1}^ng(x_i,y_i)\bigg)
&\le 2 \mr{E}\bigg[\sup_{f\in\mcl{F}}\frac{1}{n}\sum_{i=1}^ns_i\opn{tr}\vert f(x_i)-y_i\vert_{\alg}^2\bigg]+3\sqrt{2D}p\sqrt{\frac{\log\frac{2}{\delta}}{n}}\\
&\le 2\opn{tr}\hat{R}(\mcl{G}(\mcl{F}),\mathbf{s},\mathbf{z})+3\sqrt{2D}p\sqrt{\frac{\log\frac{2}{\delta}}{n}}.
\end{align*}
\end{proof}

\begin{mythm}[Proposition~\ref{prop:comp_complexity_A0}]
For $a_{i,j}\in\alg_1$,
the computational complexity for computing $(\bG+\lambda I)^{-1}\by$ by direct methods {for solving linear systems of equations} is $O(np^2\log p+n^3p)$.
\end{mythm}
\begin{proof}
Since all the elements of $\bG$ and $\by$ are in $\alg_1$, we have
\begin{equation*}
(\bG+\lambda I)^{-1}\by=(\bF\boldsymbol\Lambda_{\bG+\lambda I}^{-1}\bF^*)\bF\boldsymbol\Lambda_{\by}F^*
=\bF\boldsymbol\Lambda_{\bG+\lambda I}^{-1}\boldsymbol\Lambda_{\by}F^*,
\end{equation*}
where $\mathbf{F}$ is the $\mathbb{C}^{p\times p}$-valued $n\times n$ diagonal matrix whose diagonal elements are all $F$.
In addition, $\boldsymbol\Lambda_{\bG+\lambda I}$ is the $\alg_1$-valued $n\times n$ whose $(i,j)$-entry is $\Lambda_{k(x_i,x_j)}$, and $\boldsymbol\Lambda_{\by}$ is the vector in $\alg_1^n$ whose $i$th element is $\Lambda_{y_i}$.
If we use the \red{fast} Fourier transformation, then the computational complexity of computing $Fy$ for $y\in\mathbb{C}^{p\times p}$ is $O(p^2\log p)$.
Moreover, since the computational complexity of multiplication $\Lambda_{x}\Lambda_{y}$ for $x,y\in\alg_1$ is $O(p)$, using Gaussian elimination and back substitution, the computational complexity of computing $\boldsymbol\Lambda_{\bG+\lambda I}^{-1}\boldsymbol\Lambda_{\by}$ is $O(n^3p)$.
As a result, the total computational complexity is $O(np^2\log p+n^3p)$.
\end{proof}

\begin{mythm}[Proposition~\ref{prop:comp_complexity_A0_A}]
Let $a_{i,j}\in\alg_2$ whose number of nonzero elements is $O(p\log p)$.
Then, the computational complexity for $1$ iteration step of CG method is $O(n^2p^2\log p)$.
\end{mythm}
\begin{proof}
The computational complexity for computing $1$ iteration step of CG method is equal to that of computing $(\bG+\lambda I)\mathbf{b}$ for $\mathbf{b}\in\alg_2^n$.
For $b\in\alg_2$, the computational complexity of computing $k(x_i,x_j)b$ is $O(p^2\log p)$ since those of computing $a_{i,j}b$ and $x_ib$ are both $O(p^2\log p)$. (For $x_ib$, we use fast Fourier transformation.)
Therefore, the computational complexity of computing $(\bG+\lambda I)\mathbf{b}$ is $O(n^2p^2\log p)$.
\end{proof}

\setcounter{equation}{2}
\begin{mythm}[Proposition~\ref{prop:cnn_A0valued_kernel}]
For $a_1,\ldots,a_L,b_1,\ldots,b_L\in\alg_1$ and $\sigma_1,\ldots,\sigma_L:\alg_1\to\alg_1$ each of which has an expansion $\sigma_j(x)=\sum_{l=1}^{\infty}\alpha_{j,l}x^l$ with $\alpha_{j,l}\ge 0$, 
let $\hat{k}:\mcl{X}\times\mcl{X}\to\alg_1$ be defined as 
\begin{align}
\hat{k}(x,y)=&\sigma_L(b_L^*b_L+\sigma_{L-1}(b_{L-1}^*b_{L-1}+\cdots
+\sigma_2(b_2^*b_2+\sigma_1(b_1^*b_1+x^*a_1^*a_1y)a_2^*a_2)\cdots
\times a_{L-1}^*a_{L-1})a_L^*a_L).
\end{align}
Then, $\hat{k}$ is an $\alg_1$-valued positive definite kernel.
\end{mythm}
\begin{proof}
Let $l:\mcl{X}\times\mcl{X}\to\alg_1$ be an $\alg_1$-valued positive definite kernel and $\sigma:\alg_1\to\alg_1$ be a map that has an expansion $\sigma(x)=\sum_{j=1}^{\infty}\alpha_jx^j$ with $\alpha_j\ge 0$.
Then, $\sigma\circ l$ is also an $\alg_1$-valued positive definite kernel.
Indeed, for $d_1,\ldots,d_n\in\alg_1$ and $x_1,\ldots,x_n\in\mcl{X}$, we have
\begin{align*}
\sum_{i,j=1}^nd_i^*\sigma(l(x_i,x_j))d_j
=\sum_{i,j=1}^{n}\sum_{s=1}^{\infty}\alpha_s d_i^*l(x_i,x_j)^sd_j\ge_{\alg_1}0.
\end{align*}
Since $(x,y)\mapsto b_1^*b_1+x^*a_1^*a_1y$ is an $\alg_1$-valued positive definite kernel, $(x,y)\mapsto \sigma_1(b_1^*b_1+x^*a_1^*a_1y)$ is also an $\alg_1$-valued positive definite kernel.
Moreover, since $\sigma_1(b_1^*b_1+x^*a_1^*a_1y)$ and $b_2$ are in $\alg_1$, $(x,y)\mapsto b_2^*b_2+\sigma_1(b_1^*b_1+x^*a_1^*a_1y)$ is also an $\alg_1$-valued positive definite kernel.
We iteratively apply the above result and obtain the positive definiteness of $\hat{k}$.
\end{proof}

\begin{mythm}[Proposition~\ref{prop:conv_poly}]
The $\alg_1$-valued positive definite kernel $\hat{k}$ defined as Eq.~\eqref{eq:conv_pdk} is composed of the sum of $\alg_1$-valued polynomial kernels.
\end{mythm}
\begin{proof}
Since $(x,y)\mapsto b_1^*b_1+x^*a_1^*a_1y$ is an $\alg_1$-valued polynomial kernel and $\sigma:\alg_1\to\alg_1$ is a map that has an expansion $\sigma(x)=\sum_{j=1}^{\infty}\alpha_jx^j$, $\hat{k}$ is composed of the sum of $\alg_1$-valued polynomial kernels.
\end{proof}

\setcounter{equation}{5}
\begin{mythm}[Proposition~\ref{prop:conv_ver1}]
Define $\hat{k}:\mcl{X}\times\mcl{X}\to\alg_1$ as
\begin{align}
\hat{k}(x,y)=\int_{\mathbb{R}}\int_{\mathbb{R}^m}c_x(\omega,\eta)^*c_y(\omega,\eta)\;\mr{d}\lambda_{\beta}(\omega)\mr{d}{\lambda}_{\sigma}(\eta),
\end{align}
where $\mr{d}\lambda_{\beta}(\omega)=\beta\mr{e}^{-\frac{\beta^2\omega^2}{2}}\mr{d}\omega$ for $\beta>0$ and
\begin{align*}
c_x(\omega,\eta)=&\opn{circ}\Big(\vert x(z_1)\vert\mr{e}^{\sqrt{-1}\omega\cdot z_1}\mr{e}^{\sqrt{-1}\eta\cdot \tilde{x}(z_1)},\cdots,
\vert x(z_p)\vert\mr{e}^{\sqrt{-1}\omega\cdot z_p}\mr{e}^{\sqrt{-1}\eta\cdot \tilde{x}(z_p)}\Big),
\end{align*}
for $x\in\mcl{X}$, $\omega\in\mathbb{R}^m$, and $\eta\in\mathbb{R}$.
Then, $\hat{k}$ is an $\alg_1$-valued positive definite kernel, and for any $l=1,\ldots,p$, $\tilde{k}$ is written as
\begin{align*}
\tilde{k}(x,y)=\frac{1}{p}\sum_{i,j=1}^p\hat{k}(x,y)_{i,j}=\sum_{j=1}^p\hat{k}(x,y)_{l,j},
\end{align*}
where $\hat{k}(x,y)_{i,j}$ is the $(i,j)$-entry of $\hat{k}(x,y)$.
\end{mythm}
\begin{proof}
The positive definiteness of $\hat{k}$ is trivial.
As for the relationship between $\tilde{k}$ and $\hat{k}$, we have
\begin{align*}
\hat{k}(x,y)_{i,j}&=\int_{\mathbb{R}}\int_{\mathbb{R}^m}\sum_{l=1}^p\vert x(z_{p-i+2+l})\vert\mr{e}^{-\sqrt{-1}\omega\cdot z_{p-i+2+l}}\mr{e}^{-\sqrt{-1}\eta\cdot \tilde{x}(z_{p-i+2+l})}\\ &\qquad\qquad\times \vert y(z_{p-j+2+l})\vert\mr{e}^{\sqrt{-1}\omega\cdot z_{p-j+2+l}}\mr{e}^{\sqrt{-1}\eta\cdot \tilde{y}(z_{p-j+2+l})}\mr{d}\lambda_{\beta}(\omega)\mr{d}{\lambda}_{\sigma}(\eta)\\
&=\sum_{l=1}^p\vert x(z_{p-i+2+l})\vert\;\vert y(z_{p-j+2+l})\vert \mr{e}^{-\frac{1}{2\sigma^2}\vert\tilde{x}(z_{p-i+2+l})-\tilde{y}(z_{p-j+2+l})\vert^2}\mr{e}^{-\frac{1}{2\beta^2}\Vert z_{p-i+2+l}-z_{p-j+2+l}\Vert^2}.
\end{align*}
Thus, we have
\begin{align*}
\sum_{i=1}^p\hat{k}(x,y)_{i,j}
&=\sum_{i,l=1}^p\vert x(z_{l+j-i})\vert\;\vert y(z_{l})\vert \mr{e}^{-\frac{1}{2\beta^2}\vert \tilde{x}(z_{l+j-i})-\tilde{y}(z_{l})\vert^2}\mr{e}^{-\frac{1}{2\sigma^2}\Vert z_{l+j-i}-z_{l}\Vert^2}=\tilde{k}(x,y).
\end{align*}
\end{proof}

\begin{mythm}[Proposition~\ref{prop:conv_A}]
The $\alg_2$-valued map $k$ defined as Eq.~\eqref{eq:A_valued_conv} is an $\alg_2$-valued positive definite kernel.
\end{mythm}
\begin{proof}
For $n\in\mathbb{N}$, $c_1,\ldots,c_n\in\alg_2$, and $x_1,\ldots,x_n\in\alg_1^d$, we have
\begin{align*}
&\sum_{i,j=1}^nc_i^*k(x_i,x_j)c_j\\
&=\int_{\mathbb{R}}\int_{\mathbb{R}^m}\sum_{i=1}^n\sum_{z\in\Omega}c_i^*a_1^*\mathbf{x}_i(z)a_2^*b(z,\omega)^*a_3^*\tilde{\mathbf{x}}_i(z,\eta)^*a_4^*\sum_{j=1}^n\sum_{z'\in\Omega}a_4\tilde{\mathbf{x}}_j(z',\eta)a_3 b(z',\omega)a_2\mathbf{x}_j(z')a_1c_j\;\mr{d}\lambda_{\beta}(\omega)\mr{d}\lambda_{\sigma}(\eta),
\end{align*}
which is positive semi-definite.
\end{proof}


%
%

\end{document}